\newif\ifisTR
\definecolor{mydarkblue}{rgb}{0,0.08,0.45}
\renewcommand{\cite}[1]{\citep{#1}}
\newtheorem{theorem}{Theorem}
\newtheorem{prop}{Proposition}
\def\eqref#1{equation~(\ref{#1})}
\def\1{\bm{1}}
\def\rvvarepsilon{{\boldsymbol{\varepsilon}}}
\def\rvtheta{{\boldsymbol{\theta}}}
\def\rvv{{\mathbf{v}}}
\def\rvy{{\mathbf{y}}}
\def\rmA{{\mathbf{A}}}
\def\rmB{{\mathbf{B}}}
\def\rmD{{\mathbf{D}}}
\def\rmT{{\mathbf{T}}}
\def\rmU{{\mathbf{U}}}
\def\rmV{{\mathbf{V}}}
\def\vx{{\bm{x}}}
\def\vy{{\bm{y}}}
\DeclareMathAlphabet{\mathsfit}{\encodingdefault}{\sfdefault}{m}{sl}
\SetMathAlphabet{\mathsfit}{bold}{\encodingdefault}{\sfdefault}{bx}{n}
\newcommand{\E}{\mathbb{E}}
\newcommand{\ourmethod}{\texttt{SharpBalance}\xspace}
\newcommand{\sam}{\texttt{SAM}\xspace}
\begin{document}

\title{Sharpness-diversity tradeoff: \\ improving flat ensembles with SharpBalance}
\date{}
\author{
  Haiquan Lu$^{1}\footnote{First four authors contributed equally.}$, 
  Xiaotian Liu$^{2}\footnotemark[1]$, 
  Yefan Zhou$^{2}\footnotemark[1]$, 
  Qunli Li$^{3}\footnotemark[1]$,  \\
  Kurt Keutzer$^{4}$, 
  Michael W. Mahoney$^{4, 5, 6}$, 
  Yujun Yan$^{2}$,
  Huanrui Yang$^{4}$,
  Yaoqing Yang$^{2}$ \\
  $^1$ Nankai University\\
  $^2$ Dartmouth College\\
  $^3$ University of California San Diego\\
  $^4$ University of California at Berkeley\\
  $^5$ International Computer Science Institute\\
  $^6$ Lawrence Berkeley National Laboratory\\
}

\maketitle

\begin{abstract}
\noindent
Recent studies on deep ensembles have identified the sharpness of the local minima of individual learners and the diversity of the ensemble members as key factors in improving test-time performance.
Building on this, our study investigates the interplay between sharpness and diversity within deep ensembles, illustrating their crucial role in robust generalization to both in-distribution (ID) and out-of-distribution (OOD) data.
We discover a trade-off between sharpness and diversity: minimizing the sharpness in the loss landscape tends to diminish the diversity of individual members within the ensemble, adversely affecting the ensemble's improvement. 
The trade-off is justified through our theoretical analysis and verified empirically through extensive experiments. 
To address the issue of reduced diversity, we introduce \ourmethod, a novel training approach that balances sharpness and diversity within ensembles.
Theoretically, we show that our training strategy achieves a better sharpness-diversity trade-off.
Empirically, we conducted comprehensive evaluations in various data sets (CIFAR-10, CIFAR-100, TinyImageNet) and showed that \ourmethod not only effectively improves the sharpness-diversity trade-off, but also significantly improves ensemble performance in ID and OOD scenarios. 
\end{abstract}
\section{Introduction}
\label{sec:introduction}

\begin{figure}[!th]
\hspace{7cm}
\begin{subfigure}{0.48\linewidth}
\includegraphics[width=\linewidth,keepaspectratio]{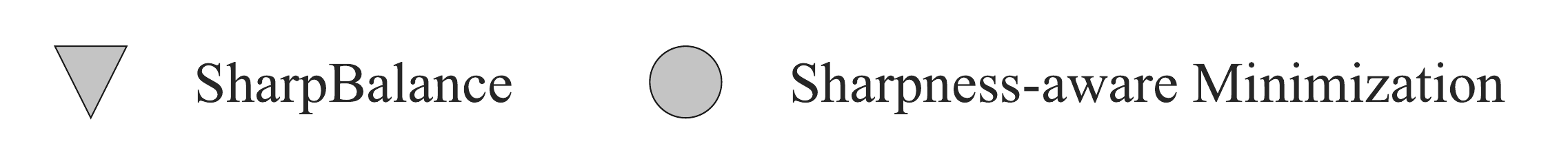} 
\end{subfigure} \\
\centering
\begin{subfigure}{0.4\linewidth}
\includegraphics[width=\linewidth,keepaspectratio]{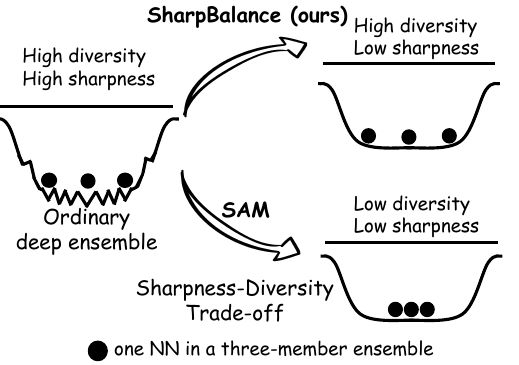} 
\caption{Overview}~\label{fig:teaser-cari}
\end{subfigure} 
\centering     
\begin{subfigure}{0.28\linewidth} 
\includegraphics[width=\linewidth,keepaspectratio]{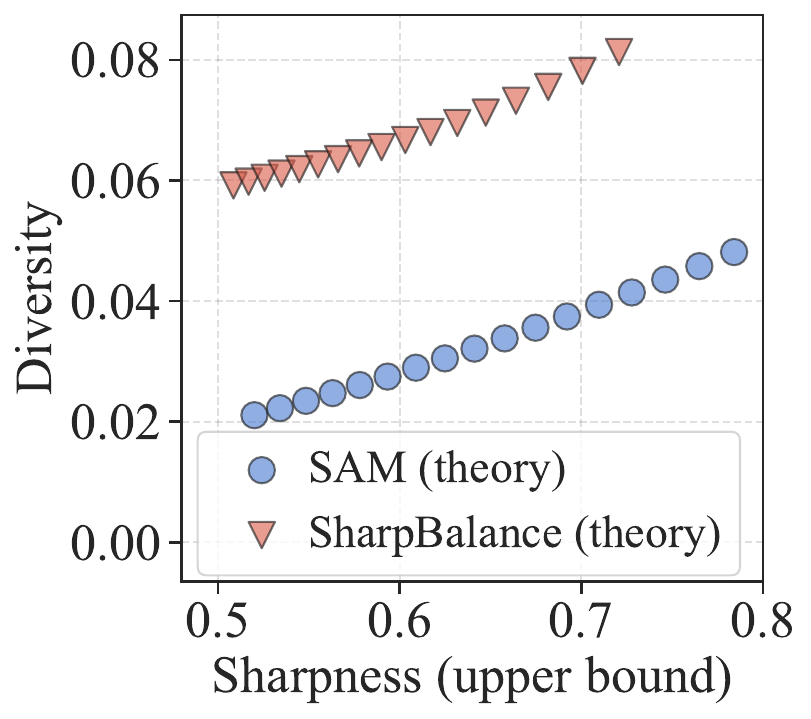} 
\caption{Theoretical results}~\label{fig:teaser-theory}
\end{subfigure} 
\centering     
\begin{subfigure}{0.28\linewidth} 
\includegraphics[width=\linewidth,keepaspectratio]{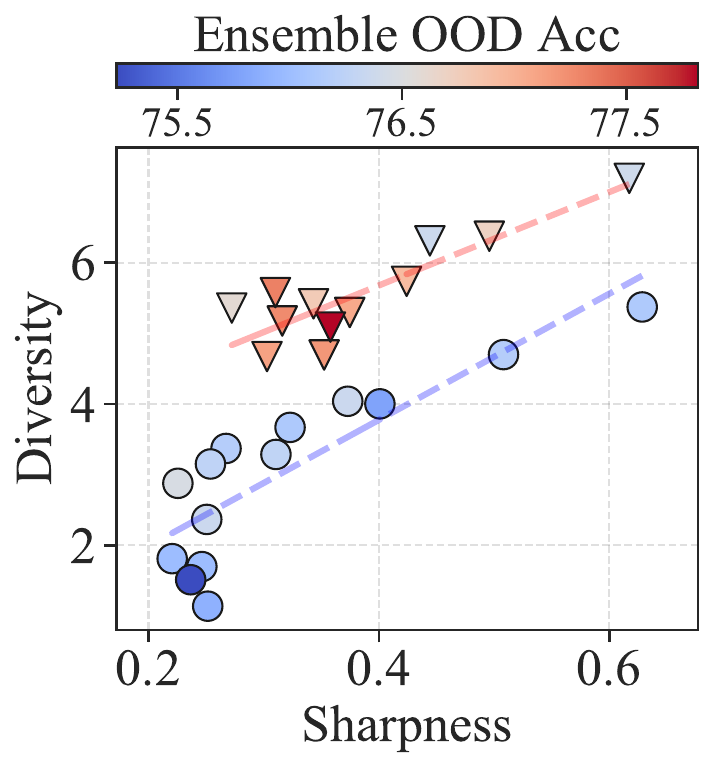} 
\caption{Empirical results}~\label{fig:teaser-cifar}
\end{subfigure} 
\caption{\textbf{(Sharpness-diversity trade-off and \ourmethod).} 
\textbf{(a)} Caricature illustrating the sharpness-diversity trade-off that emerges in an ensemble's loss landscape induced by the Sharpness-aware Minimization (\sam) optimizer.
We propose \ourmethod to address this trade-off.
Each black circle represents an individual NN in a three-member ensemble.
The distance between circles represents the diversity between NNs and the ruggedness of the basin represents the sharpness of each NN.
\textbf{(b)} Theoretically proving the existence of the sharpness-diversity trade-off and improvement from \ourmethod, plotting the analytic representation of sharpness and diversity from Theorem~\ref{thm:varSAM} and Theorem~\ref{thm:sub} by changing the perturbation radius $\rho$ of \sam.
\ourmethod achieves a larger diversity for the same level of sharpness.
\textbf{(c)} Empirical results of verifying sharpness-diversity trade-off improvement from \ourmethod. Each marker represents a three-member ResNet18 ensemble trained on CIFAR-10. Diversity is measured by the variance of individual models' predictions, and sharpness is measured by the adaptive worst-case sharpness, both defined in Section~\ref{sec:notation}.\looseness-1
}
\label{fig:overview} 
\end{figure}

There has been interest in understanding the properties of neural networks (NNs) and their implications for robust generalization to both in-distribution (ID) and out-of-distribution  (OOD) data~\citep{hendrycks2018benchmarking}.
Two properties of particular importance, sharpness (or flatness)~\citep{granziol2020flatness,andriushchenko2023modern,yang2021taxonomizing,dinh2017sharp,yao2020pyhessian} and diversity~\citep{laviolette2017risk, fort2019deep, yao2020pyhessian, dietterich2000ensemble, ortega2022diversity, theisen2023ensembles}, have been shown to have a significant influence on performance.
In the context of \emph{deep ensembles}~\citep{ovadia2019can, lakshminarayanan2017simple, fort2019deep, mehrtash2020pep, ganaie2022ensemble}, diversity (which measures the variance in output between independently-trained models) is shown to be critical in enhancing ensemble accuracy.
Sharpness, on the other hand, quantifies the curvature of local minima and is believed to be empirically correlated with an individual model's generalization ability.

Recent research on loss landscapes~\citep{yang2021taxonomizing} highlights that a single structural property of the loss landscape is insufficient to fully capture a model's generalizability, and it underscores the importance of a joint analysis of sharpness and diversity.
Despite significant efforts in studying sharpness and diversity individually, a gap persists in understanding their relationship, particularly in the context of ensemble learning. 
Our work seeks to bridge this gap by investigating ensemble learning through the lens of loss landscapes, with a specific focus on the interplay between sharpness and diversity.

\noindent {\bf Sharpness-diversity trade-off.}~Our examination of loss landscape structure for ensembling revealed a ``trade-off’’ between the diversity of individual NNs and the sharpness of the local minima to which they converge.
This trade-off introduces a potential limitation to the achievable performance of the deep ensemble: the test accuracy of individual NN can be improved as the sharpness is reduced, but it simultaneously reduces diversity, thereby compromising the ensembling improvement (evidence in Section~\ref{sec:emp-trade-off} and \ref{sec:eval-perf}).
This trade-off is visually summarized in the lower transition branch in Figure~\ref{fig:teaser-cari}.
We also developed theories (in Section~\ref{sec:theory1}) to verify the trade-off. The theoretical results characterizing this phenomenon are visualized in Figure~\ref{fig:teaser-theory}, and the experimental observation is presented in Figure~\ref{fig:teaser-cifar}.
In Section~\ref{sec:emp-trade-off}, we also verified the existence of the trade-off by varying the experimental setting to include different datasets and different levels of overparameterization (e.g., changing model width).

\looseness-1

\noindent {\bf \ourmethod mitigates the trade-off and improves ensembling performance.} 
To address the challenge presented by the sharpness-diversity tradeoff, we propose a novel ensemble training method called \ourmethod.
This method aims to simultaneously reduce the sharpness of individual NNs and prevent diversity reduction among them, as demonstrated in the upper transition branch of Figure~\ref{fig:teaser-cari}.
This method is designed based on our theoretical results, which suggest that training different ensemble members using a loss function that aims to reduce sharpness on different subsets of the training data can improve the trade-off between sharpness and diversity. Our theoretical results are summarized in Figure~\ref{fig:teaser-theory}.
Aligned with theoretical insights, our \ourmethod method lets each ensemble member minimize the sharpness objective exclusively on a subset of training data, termed the \emph{sharpness-aware set}.
The sharpness-aware set of each ensemble member is diversified by an adaptive strategy based on data-dependent sharpness measures.
As shown in Figure~\ref{fig:teaser-cifar}, we verify that \ourmethod improves the sharpness-diversity tradeoff in training the ResNet18 ensemble on CIFAR10. We conducted experiments on three classification datasets to show that \ourmethod boosts ensembling performance in ID and OOD data.

Our contributions are summarized as follows: 
\begin{itemize}[leftmargin=*]%noitemsep, nosep,
\item \textbf{New discovery:} We identify a phenomenon in ensemble learning called the sharpness-diversity trade-off, where reducing the sharpness of individual models can decrease diversity between models within an ensemble.
We show that this trade-off can negatively affect the ensemble improvements.

\item  \textbf{Novel theory:} We prove the existence of the trade-off under a novel theoretical framework based on rigorous analysis of sharpness-aware training objectives~\citep{foret2020sharpness, behdin2023statistical}. Our analysis borrows tools from analyzing Wishart moments~\citep{bishop2018introduction}, and it characterizes the exact dynamics of training, bias-variance tradeoff, and the upper and lower bounds of sharpness. Notably, our novel theoretical analysis generalizes existing analysis to ensemble members trained with different data, which is the key to analyzing our own training method \ourmethod.

\item \textbf{Effective approach:} To mitigate the sharpness-diversity trade-off, we introduce \ourmethod, an ensemble training approach.
Our theoretical framework demonstrates that \ourmethod improves the sharpness-diversity trade-off by reducing sharpness while mitigating the decrease in diversity.
Empirically, we confirm this improvement and demonstrate that \ourmethod enhances overall ensemble performance, outperforming baseline methods in CIFAR-10, CIFAR-100~\citep{Krizhevsky2009LearningML}, TinyImageNet~\citep{Le2015TinyIV}, and their corrupted versions to assess OOD performance.

\end{itemize}

We provide a more detailed discussion on related work in Appendix~\ref{sec:related-work}.

\section{Background}~\label{sec:notation} \vspace{-6mm}

\noindent
{\bf Preliminaries.}
We use a NN denoted as $f_\rvtheta:\mathbb{R}^{d_{\text{in}}} \rightarrow \mathbb{R}^{d_{\text {out }}}$, where $\rvtheta \in \mathbb{R}^p$ denotes the trainable parameters. 
The training dataset comprises $n$ data-label pairs $\mathcal{D}=\left\{\left(\vx_1, \vy_1\right), \ldots,\left(\vx_n, \vy_n\right)\right\}$.
The training loss of NN $f_\rvtheta$ over a dataset $\mathcal{D}$ can be defined as $\mathcal{L}_\mathcal{D}(\rvtheta) = 
\frac{1}{n}\sum_{i=1}^n \ell\left(f_{\rvtheta}\left(\vx_i\right), \vy_i\right)$.
Here $\ell(\cdot)$ is a loss function, which, for instance, can be the cross entropy loss or $\ell_2$ loss.
We construct a deep ensemble consisting of $m$ distinct NNs $f_{\rvtheta_1}$, \dots, $f_{\rvtheta_m}$.
For classification tasks, the ensemble's output is derived by averaging the predicted logits of these individual networks.
We use \emph{flat ensemble} to mean the deep ensemble in which each ensemble member is trained using a sharpness-aware optimization method~\citep{foret2020sharpness}, differentiating it from other ensemble approaches.

\noindent
{\bf Diversity metrics.} Distinct measures of diversity have been proposed in the literature~\citep{laviolette2017risk, fort2019deep, dietterich2000ensemble, baek2022agreement, ortega2022diversity, theisen2023ensembles}, and they are primarily calculated using the predictions made by individual models. 
\citet{ortega2022diversity} define diversity $\mathbb{D}(\rvtheta)$ to be the variance of model outputs averaged over the data-generating distribution, which we adopt in the theoretical analysis:
\looseness-1
\begin{equation}
\label{eq:defdiv}
    \mathbb{D}(\rvtheta) =  \E_{\mathcal{D}}[\text{Var}(f_\rvtheta(\mathcal{D}))].
\end{equation}

In our experiments, diversity is measured using variance defined above, as well as two other widely used metrics in ensemble learning, namely Disagreement Error Ratio (DER) \citep{theisen2023ensembles} defined in \eqref{eqn:disagreement}, and KL divergence~\citep{10.1214/aoms/1177729694} defined in \eqref{eqn:kl-divergence} in the appendices. 
We show in Section~\ref{sec:emp-trade-off} that our main claim generalizes to these three metrics in characterizing the diversity between members within an ensemble.
Specifically, denote $\mathcal{P}$ as the distribution of model weights $\rvtheta$ after training.
Then, the DER is defined as 
\begin{equation}\label{eqn:disagreement}
\text{DER}=\frac{E_{\rvtheta,\rvtheta^{'} \sim \mathcal{P}}[\operatorname{Dis}(f_\rvtheta,f_{\rvtheta^{'}})]}{E_{\rvtheta \sim \mathcal{P}}[\mathcal{E}(f_\rvtheta)]} ,
\end{equation}
where $\operatorname{Dis}(f_\rvtheta,f_{\rvtheta^{'}})$ is the prediction disagreement~\citep{masegosa2020learning, mukhoti2021deterministic, jiang2022assessing} between two classifier $f_\rvtheta,f_{\rvtheta^{'}}$, and $\mathcal{E}(f_\rvtheta)$ is the prediction error.\looseness-1

\noindent {\bf Sharpness Metric.} In accordance with the definition proposed by ~\citet{foret2020sharpness}, we characterize the \emph{first-order sharpness} of a model as the worst-case perturbation within a radius of $\rho_0$. Mathematically, the sharpness $\kappa$ of a model $\rvtheta$ is expressed as follows:
\begin{equation*}
    \kappa(\rvtheta;\rho_0) = \max_{\|\rvvarepsilon\|_2 \leq \rho_0} \mathcal{L}_\mathcal{D}(\rvtheta +\rvvarepsilon) - \mathcal{L}_\mathcal{D}(\rvtheta).
\end{equation*}
Empirically, we measure the sharpness of the NN via the adaptive worst-case sharpness~\citep{kwon2021asam, andriushchenko2023modern}.
The adaptive worst-case sharpness captures how much the loss can increase within the perturbation radius $\rho_0$ of $\rvtheta$:
\begin{equation}\label{eqn:sharpness}
    {\underset{\| T_\rvtheta^{-1} \rvvarepsilon \| _2 \leq \rho_0}{\text{max}} }\mathcal{L}_{\mathcal{D}}(\rvtheta+\rvvarepsilon)-\mathcal{L}_{\mathcal{D}}(\rvtheta),  \\
\end{equation}  
where $\rvtheta = \left[\theta_1, \ldots, \theta_l \right]$, and $T_\rvtheta = \operatorname{diag}\left(\left|\theta_1\right|, \ldots,\left|\theta_l\right|\right)$.
$T_\rvtheta^{-1}$ is a normalization operator to make sharpness ``scale-free'', that is, such that scaling operations on $\boldsymbol{\theta}$ that do not alter NN predictions will not impact the sharpness measure.

\noindent
{\bf Ensembling.}
We characterize the effectiveness of ensembling by the metric called ensemble improvement rate (EIR)~\citep{theisen2023ensembles}, which is defined as the ensembling improvement over the average performance of single models. Let $\mathcal{E}_\text{ens}$ denote the test error of an ensemble; 
the EIR is then defined as follows:
\begin{equation}\label{eqn:eir}
\text{EIR}=\frac{ E_{\rvtheta \sim \mathcal{P}}[\mathcal{E}(f_\rvtheta)] - \mathcal{E}_\text{ens} }{ E_{\rvtheta \sim \mathcal{P}}[\mathcal{E}(f_\rvtheta)] }.
\end{equation}

\noindent 
{\bf Sharpness Aware Minimization (\sam).}
\sam~\citep{foret2020sharpness} has been shown to be an effective method for improving the generalization of NNs by reducing the sharpness of local minima. 
It essentially functions by penalizing the maximum loss within a specified radius $\rho$ of the current parameter $\theta$.
The training objective of \sam is to minimize the following loss function:
\begin{equation}\label{eqn:SAM_objective}
\mathcal{L}_\mathcal{D}^\text{SAM}(\rvtheta) := 
 \underset{\| \rvvarepsilon \| _2 \leq \rho}{\text{max}} \mathcal{L}_\mathcal{D}(\rvtheta+\rvvarepsilon) + \lambda \| \rvtheta \|^2_2,
\end{equation}
where $\lambda$ is the hyperparameter of a standard $\ell_2$ regularization term.

\section{Theoretical Analysis of Sharpness-diversity Trade-off}
\label{sec:theory1}
This section theoretically analyzes the sharpness-diversity trade-off. The diversity among individual models is quantified using~\eqref{eq:defdiv}.
The first theorem establishes the existence of a trade-off between sharpness and diversity. The second theorem demonstrates that training models with only a subset of data samples leads to a more favorable trade-off between these two metrics.\looseness-1

\noindent \textbf{Sharpness and Diversity of \sam.}
Assume the training data matrix $\rmA \in \mathbb{R}^{n_{\text{tr}} \times d_{\text{in}}}$ and test data matrix $\rmT \in \mathbb{R}^{n_{\text{te}} \times d_{\text{in}}}$ are random with entries drawn from Gaussian $\mathcal{N}(0,  \frac{1}{d_{\text{in}}} \mathbf{I})$. 
Suppose the model weight at the 0-th time step, $\rvtheta_0$, is initialized randomly such that $\mathbb{E}[\rvtheta_0] = \mathbf{0}$ and $\mathbb{E}[\rvtheta_0 \rvtheta_0^T] = \sigma^2\mathbf{I}$ and updated with a quadratic optimization objective through \sam. The learned weight matrix after $k$ time steps is denoted as $\rvtheta_k$.
Let $\rvtheta^*$ be the teacher model (i.e., ground-truth model) such that $\rmA\rvtheta^* = \rvy^{(\rmA)}$ and $\rmT\rvtheta^* = \rvy^{(\rmT)}$, where $\rvy^{(\rmD)}$ is the label vector for data matrix $\rmD$. Given a perturbation radius $\rho_0$, the sharpness of a model after $k$ iteration under the random matrix assumption is defined as 
    \begin{equation*}
        \kappa(\rvtheta_k;\rho_0) = \mathbb{E}_{\rmA}[\max_{\|\rvvarepsilon\|_2 \leq \rho_0} f \left( \mathbb{E}_{\rvtheta_0} \left[ \rvtheta_k \right] + \rvvarepsilon; \rmA \right) - f \left( \mathbb{E}_{\rvtheta_0} \left[ \rvtheta_k \right] ;\rmA \right)], 
    \end{equation*}
 which is the expected fluctuation of the model output after perturbation over the data distribution. For simplicity, we denote $\kappa(\rvtheta^{SAM}_k;\rho_0) = \kappa_k^{SAM}$ for the rest of the paper. We derive an explicit formulation of diversity and upper and lower bounds of sharpness for models optimized with \sam in Theorem \ref{thm:varSAM}. Detailed proof can be found in Appendix \ref{apd:thm1}.

\begin{theorem}[Diversity and Sharpness of \sam]
    \label{thm:varSAM}
    Let $\rvtheta_0$ be initialized randomly such that $\mathbb{E}[\rvtheta_0] = \mathbf{0}$ and $\mathbb{E}[\rvtheta_0 \rvtheta_0^T] = \sigma^2\mathbf{I}$. Suppose $\rvtheta^{SAM}_k$ is the model weight after $k$ iterations of training with \sam on $\rmA \in \mathbb{R}^{n_{\textup{tr}} \times d_{\textup{in}}}$ and evaluated on $\rmT \in \mathbb{R}^{n_{\textup{te}} \times d_{\textup{in}}}$. Let $\eta$ be the step size, $\rho$ be the perturbation radius in \sam and $\rho_0$ be the radius for measuring sharpness $\kappa^{SAM}_k$. Then
    \begin{align*}
    \mathbb{D}(\rvtheta_k^{SAM}) &=\phi(2k,0)\sigma^2,\\
     \frac{\rho_0^2}{2}\left(\sqrt{\frac{n_{\textup{tr}}}{d_{\textup{in}}}} - 1\right)^2+\rho_0\sqrt{\phi(2k, 2)}\|\rvtheta^*\|_2 - G \leq &\kappa_k^{SAM} \leq \frac{\rho_0^2}{2}\left(\sqrt{\frac{n_{\textup{tr}}}{d_{\textup{in}}}} + 1\right)^2+\rho_0\sqrt{\phi(2k, 2)}\|\rvtheta^*\|_2,
    \end{align*}
where 
    \begin{align*}
        \phi(i,j) :=& \mathds{1}_{j=0}+\sum\limits_{k_1+k_2+k_3=i}\frac{i!}{k_1!k_2!k_3!}(-\eta)^{k_2+k_3}\rho^{k_3}\left(\frac{n_{\textup{tr}}}{d_{\textup{in}}}\right)^{m}\sum\limits_{l=1}^{m}\left(\frac{d_{\textup{in}}}{n_{\textup{tr}}}\right)^{m-l}\mathcal{O}(1+1/d_\textup{in})N_{m,l},
    \end{align*}
$ G = \frac{\phi(4k,4)-\phi(2k,2)^2}{2\phi(2k,2)^{3/2}\|\rvtheta^*\|_2}$, and $m = k_2+2k_3+j$. $N_{m,l} = \frac{1}{l} \binom{m-1}{l-1}\binom{m}{l-1}$ is the Narayana number.
\end{theorem}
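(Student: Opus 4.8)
The plan is to reduce both $\mathbb{D}(\rvtheta^{SAM}_k)$ and $\kappa^{SAM}_k$ to expected traces of polynomials in $\Hb:=\rmA^T\rmA$ and then evaluate those traces with finite-dimensional Wishart moment identities. The starting point: on the quadratic objective $f(\rvtheta;\rmA)=\tfrac12\|\rmA(\rvtheta-\rvtheta^*)\|_2^2=\tfrac12(\rvtheta-\rvtheta^*)^T\Hb(\rvtheta-\rvtheta^*)$ the \sam update with ascent step $\rho\nabla f(\rvtheta)=\rho\Hb(\rvtheta-\rvtheta^*)$ is exactly affine, so one step reads $\rvtheta_{j+1}-\rvtheta^*=(\rmI-\eta\Hb-\eta\rho\Hb^2)(\rvtheta_j-\rvtheta^*)$; writing $M:=\rmI-\eta\Hb-\eta\rho\Hb^2$ we obtain $\rvtheta^{SAM}_k-\rvtheta^*=M^k(\rvtheta_0-\rvtheta^*)$ and $\mathbb{E}_{\rvtheta_0}[\rvtheta^{SAM}_k]=\rvtheta^*-M^k\rvtheta^*$. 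Because $M$ is a polynomial in $\Hb$, every quantity below expands multinomially into a weighted sum of powers $\Hb^{m}$, $m=k_2+2k_3+j$, with coefficient $\tfrac{i!}{k_1!k_2!k_3!}(-\eta)^{k_2+k_3}\rho^{k_3}$ --- exactly the combinatorial skeleton of $\phi(i,j)$, the second index $j$ recording the extra power of $\Hb$ carried by a given expression.

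For the diversity, conditioning on $\rmA$ the map $\rvtheta_0\mapsto\rvtheta^{SAM}_k$ is affine, so $\Cov_{\rvtheta_0}(\rvtheta^{SAM}_k)=M^k(\sigma^2\rmI)M^k=\sigma^2M^{2k}$. Evaluating the prediction variance at a fresh test point $\vx\sim\mathcal{N}(0,\tfrac1{d_{\textup{in}}}\rmI)$ --- equivalently, averaging the per-coordinate prediction variance over the rows of $\rmT$ and using $\mathbb{E}[\rmT^T\rmT]=\tfrac{n_{\textup{te}}}{d_{\textup{in}}}\rmI$ with $\rmT$ independent of $\rmA$ --- gives $\mathbb{D}(\rvtheta^{SAM}_k)=\tfrac{\sigma^2}{d_{\textup{in}}}\mathbb{E}_{\rmA}[\operatorname{tr}(M^{2k})]$. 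Expanding $M^{2k}$ multinomially leaves a weighted sum of $\mathbb{E}_{\rmA}[\operatorname{tr}(\Hb^m)]$, and I would close the argument with the exact trace-moment formula for a real Wishart matrix: the Narayana-polynomial leading term $\sum_{l=1}^m(n_{\textup{tr}}/d_{\textup{in}})^lN_{m,l}$ together with the $\mathcal{O}(1/d_{\textup{in}})$ real-ensemble corrections (following the tools of \citet{bishop2018introduction}), which is precisely $\mathbb{E}_{\rmA}[\operatorname{tr}(\Hb^m)]/d_{\textup{in}}$ and reassembles the multinomial sum into $\phi(2k,0)\sigma^2$.

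For the sharpness, set $\vu:=-M^k\rvtheta^*=\mathbb{E}_{\rvtheta_0}[\rvtheta^{SAM}_k]-\rvtheta^*$, so the inner objective is $f(\mathbb{E}[\rvtheta^{SAM}_k]+\rvvarepsilon;\rmA)-f(\mathbb{E}[\rvtheta^{SAM}_k];\rmA)=\vu^T\Hb\rvvarepsilon+\tfrac12\rvvarepsilon^T\Hb\rvvarepsilon$. For the upper bound, sub-additivity of the maximum over $\|\rvvarepsilon\|_2\le\rho_0$ gives $\max(\cdot)\le\rho_0\|\Hb\vu\|_2+\tfrac{\rho_0^2}{2}\lambda_{\max}(\Hb)$; taking $\mathbb{E}_{\rmA}$, Jensen gives $\mathbb{E}_{\rmA}\|\Hb\vu\|_2\le\sqrt{\mathbb{E}_{\rmA}\|\Hb\vu\|_2^2}$, and since $\Hb$ is orthogonally invariant ($\mathbb{E}_{\rmA}[g(\Hb)]=\tfrac1{d_{\textup{in}}}\mathbb{E}[\operatorname{tr}g(\Hb)]\,\rmI$) one gets $\mathbb{E}_{\rmA}\|\Hb\vu\|_2^2=(\rvtheta^*)^T\mathbb{E}_{\rmA}[\Hb^2M^{2k}]\rvtheta^*=\phi(2k,2)\|\rvtheta^*\|_2^2$; together with the operator-norm bound $\mathbb{E}[\lambda_{\max}(\Hb)]\le(\sqrt{n_{\textup{tr}}/d_{\textup{in}}}+1)^2$ for a Gaussian matrix this is the claimed upper bound. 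For the lower bound, restricting the perturbation to $\rvvarepsilon=\rho_0\Hb\vu/\|\Hb\vu\|_2$ gives $\max(\cdot)\ge\rho_0\|\Hb\vu\|_2+\tfrac{\rho_0^2}{2}\frac{\vu^T\Hb^3\vu}{\vu^T\Hb^2\vu}$; the ratio is a convex combination of the nonzero eigenvalues of $\Hb$, hence at least the smallest nonzero one, whose expectation is at least $(\sqrt{n_{\textup{tr}}/d_{\textup{in}}}-1)^2$ by Gordon's lower bound on the least singular value of a Gaussian matrix plus Jensen. Finally, the second-order reverse-Jensen bound $\mathbb{E}\sqrt X\ge\sqrt{\mathbb{E}X}-\tfrac{\Var(X)}{2(\mathbb{E}X)^{3/2}}$ applied to $X:=\|\Hb\vu\|_2^2\ge0$, together with Cauchy--Schwarz $\|\Hb\vu\|_2^4=\big((\rvtheta^*)^T\Hb^2M^{2k}\rvtheta^*\big)^2\le\|\rvtheta^*\|_2^2\,(\rvtheta^*)^T\Hb^4M^{4k}\rvtheta^*$ and $\mathbb{E}_{\rmA}[\Hb^4M^{4k}]=\phi(4k,4)\rmI$, bounds $\Var(X)\le\big(\phi(4k,4)-\phi(2k,2)^2\big)\|\rvtheta^*\|_2^4$ and yields the $-G$ correction in the lower bound.

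The two steps that carry the real weight are: (i) the exact evaluation of $\mathbb{E}_{\rmA}[\operatorname{tr}(\Hb^m)]$ at finite $d_{\textup{in}}$ --- the non-crossing-partition/genus bookkeeping that produces the Narayana numbers and, crucially, pins down the $\mathcal{O}(1/d_{\textup{in}})$ corrections specific to the real Gaussian ensemble so that the reassembled multinomial sum is exactly $\phi$; and (ii) on the sharpness side, choosing the perturbation direction so that one term of the maximum reproduces $\mathbb{E}\|\Hb\vu\|_2$ while the other supplies the smallest-nonzero-eigenvalue term $(\sqrt{n_{\textup{tr}}/d_{\textup{in}}}-1)^2$, and keeping the reverse-Jensen variance term aligned with $G$. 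Everything else is commuting-polynomial algebra plus the orthogonal invariance of $\Hb$ and standard extreme-singular-value estimates for Gaussian matrices.
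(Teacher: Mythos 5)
Your proposal is correct and follows essentially the same route as the paper's proof: unroll the \sam recursion on the quadratic objective to get $\rvtheta_k^{SAM}-\rvtheta^*=\rmB^k(\rvtheta_0-\rvtheta^*)$, evaluate all resulting expressions via the multinomial expansion of $\rmB^i(\rmA^T\rmA)^j$ and the Wishart trace moments (yielding $\phi$), obtain the diversity as the prediction variance, and bound the sharpness by combining extreme-singular-value estimates for Gaussian matrices with Jensen (upper bound) and the Jensen-gap/reverse-Jensen correction applied to $Z=(\rvtheta^*)^T\rmB^{2k}(\rmA^T\rmA)^2\rvtheta^*$ (lower bound). The only substantive deviation is in the variance bookkeeping for the Jensen-gap term: your Cauchy--Schwarz bound gives $\mathrm{Var}(Z)\le(\phi(4k,4)-\phi(2k,2)^2)\|\rvtheta^*\|_2^4$ and hence a correction proportional to $\|\rvtheta^*\|_2$, whereas the paper states $\mathrm{Var}(Z)=(\phi(4k,4)-\phi(2k,2)^2)\|\rvtheta^*\|_2^2$ and a $G$ with $\|\rvtheta^*\|_2$ in the denominator; your version is the dimensionally consistent one, and the argument is otherwise identical.
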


To provide a clearer understanding of the relationship between sharpness and diversity, Figure \ref{fig:theorySAM} presents a trade-off curve between these two metrics.
The estimated sharpness and diversity are displayed on the $x$ and $y$ axes, respectively. Each point in the plot corresponds to a model trained using \sam with a different $\rho$ value, showcasing the outcome of varying perturbation radius. In these experiments, we evaluated the sharpness and diversity of the models empirically and compared them to the estimates obtained using Theorem \ref{thm:varSAM}. The soundness of Theorem \ref{thm:varSAM} and tightness of the derived bounds are further supported by empirical evidence, as depicted in Figure \ref{fig:theorySAM}. Further verification results supporting our theoretical analysis are provided in Appendix~\ref{apd:verify}

\noindent \textbf{Training with Data Subsets.} Assume $\rmA$ is partitioned into $S$ horizontal submatrices, such that $\rmA=[\rmA_1^T,\rmA_2^T,\ldots,\rmA_S^T]^T$. We show in Theorem \ref{thm:sub} a similar analysis of the sharpness and diversity of ensembles for which each model is trained with a submatrix. Under this setting, we first selected a subset of data $\rmA_s$ uniformly at random and then train the model with the selected subset with \sam.

\begin{wrapfigure}{r}{0.45\textwidth}
\centering
\vspace{-5mm}
\includegraphics[width=1.0\linewidth,keepaspectratio]{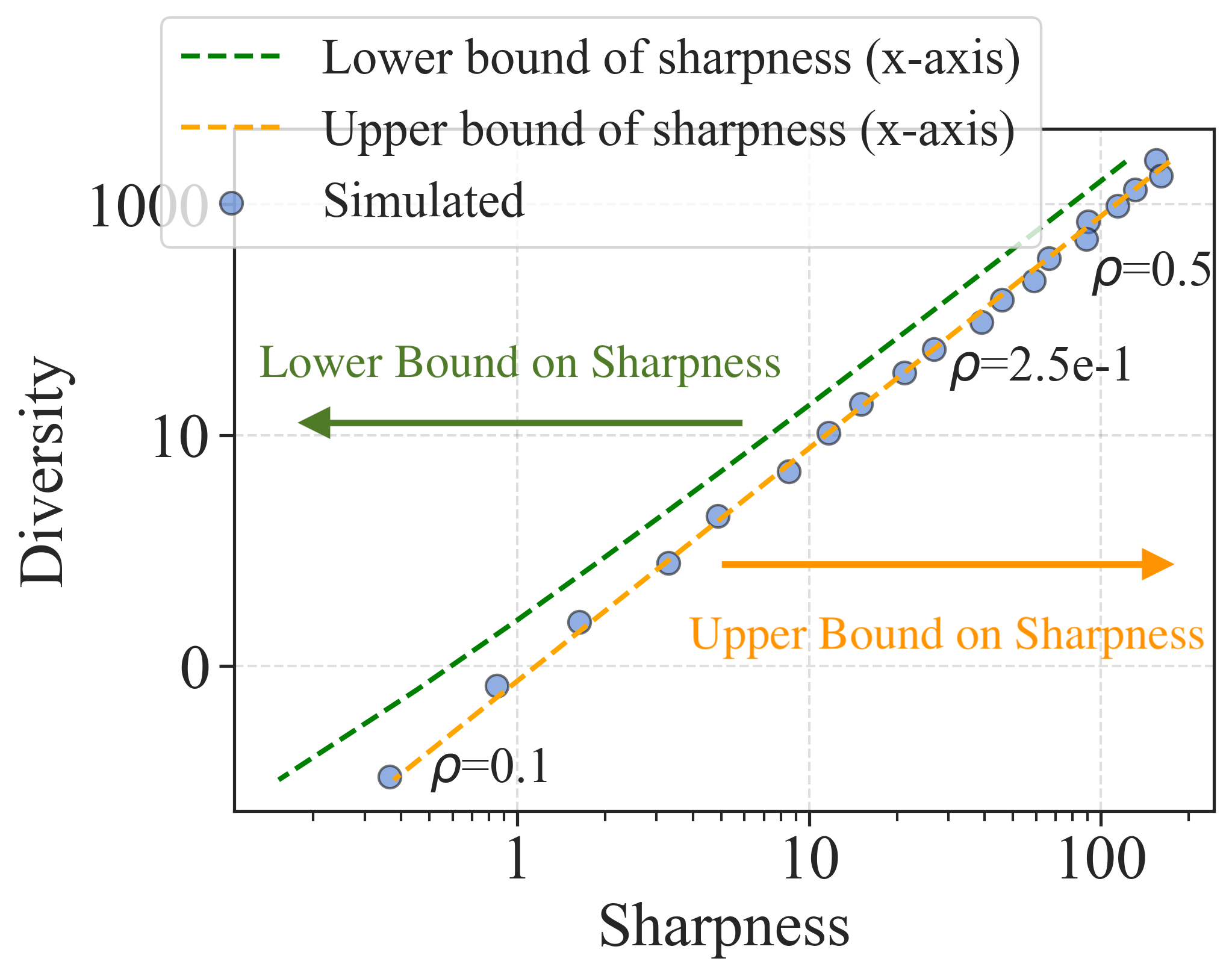}
\caption{\textbf{(Theoretical vs. Simulated sharpness-diversity trade-off)}. This figure illustrates the relationship between sharpness (upper and lower bounds) and diversity as predicted by Thereom \ref{thm:varSAM} and as observed in simulations. Note that the upper and lower bounds correspond to the sharpness values plotted along the x-axis, with the upper bound positioned to the right and the lower bound to the left. Also, note that the bounds provided are for the expected sharpness, which means that random fluctuations can cause the simulation results to move beyond these bounds.\looseness-1
}

\label{fig:theorySAM}
\end{wrapfigure}
\begin{theorem}[Diversity and Sharpness when Models are Trained on Subsets]
\label{thm:sub}
Suppose the training data matrix $\rmA$ is partitioned into $S$ horizontal submatrices. Let $\rvtheta_0$ be initialized randomly such that $\mathbb{E}[\rvtheta_0] = \mathbf{0}$ and $\mathbb{E}[\rvtheta_0 \rvtheta_0^T] = \sigma^2\mathbf{I}$. Let $\rvtheta_k^{SharpBal}$ be the model weight trained with \sam for $k$ iterations on the submatrix $\rmA_s \in \mathbb{R}^{\frac{n_{\textup{tr}}}{S}\times d_\textup{in}}$, selected uniformly at random, and evaluated on test data $\rmT \in \mathbb{R}^{n_{\textup{te}}\times d_\textup{in}}$. Let $\eta$ be the step size, $\rho$ be the perturbation radius in \sam, $\rho_0$ be the radius for measuring sharpness $\kappa^{SAM}_k$, and $r = \frac{n_{\textup{tr}}}{Sd_{\textup{in}}}$. Then
     \begin{align*}
        \mathbb{D}(\rvtheta_k^{SharpBal}) = & \phi'(2k,0)\sigma^2 + \frac{S-1}{d_\textup{in}S}\left(\phi'(2k,0)-\phi'(k,0)^2\right)\|\rvtheta^*\|^2_2,
    \end{align*}
and
\begin{equation*}
    \kappa_k^{SharpBal}(\rho_0) \leq \frac{\rho_0^2}{2}\left(\sqrt{\frac{n_{\textup{tr}}}{d_{\textup{in}}}} + 1\right)^2+\frac{\rho_0}{S}\sqrt{C}\|\rvtheta^*\|_2,
\end{equation*}
where 
    \begin{align*}
        C =& S\phi'(2k, 2)+2rS(S-1)\phi'(2k, 1) + 2S(S-1)  \phi'(k, 2)  \phi'(k, 0)\\
        & +r(1+r) S(S-1)   \phi'(2k, 0)+ 2S(S-1) \phi'(k, 1) \phi'(k, 1) \\ 
        &+ \frac{3}{2}r(1+r)S(S-1)(S-2)  \phi'(k, 0)^2+ \frac{3}{2}r^2S(S-1)(S-2)\phi'(2k, 0)\\
        &+ 3rS(S-1)(S-2)\phi'(k, 0)\phi'(k, 1) + r^2S(S-1)(S-2)(S-3)\phi'(k, 0)^2,\\
        \phi'(i,j) &:= \mathds{1}_{j=0}+\sum\limits_{k_1+k_2+k_3=i}\frac{i!}{k_1!k_2!k_3!}(-\eta)^{k_2+k_3}\rho^{k_3}\left(\frac{n_{\textup{tr}}}{Sd_{\textup{in}}}\right)^{m}\sum\limits_{l=1}^{m}\left(\frac{Sd_{\textup{in}}}{n_{\textup{tr}}}\right)^{m-l}\mathcal{O}(1+\frac{1}{d_\textup{in}} )N_{m,l},
    \end{align*}
    where $m = k_2+2k_3+j$. $N_{m,l} = \frac{1}{l} \binom{m-1}{l-1}\binom{m}{l-1}$ is the Narayana number.
\end{theorem}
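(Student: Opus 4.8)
## Proof Proposal for Theorem~\ref{thm:sub}

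\textbf{Overall strategy.} The plan is to reduce Theorem~\ref{thm:sub} to the single-submatrix analysis that underlies Theorem~\ref{thm:varSAM}, and then to account for the extra randomness coming from the random choice of which submatrix $\rmA_s$ is used for training. Concretely, I would first fix a submatrix $\rmA_s \in \R^{(n_{\textup{tr}}/S)\times d_{\textup{in}}}$ and run the exact same SAM-dynamics computation that produces the $\phi$ function in Theorem~\ref{thm:varSAM}, but with $n_{\textup{tr}}$ replaced everywhere by $n_{\textup{tr}}/S$ (equivalently $r = n_{\textup{tr}}/(Sd_{\textup{in}})$ plays the role that $n_{\textup{tr}}/d_{\textup{in}}$ played before); this is precisely how the primed quantity $\phi'$ is defined in the statement. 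So conditional on the submatrix choice, $\mathbb{E}_{\rvtheta_0}[\rvtheta_k^{SharpBal}]$ is a known polynomial in the projection $\rmA_s^T\rmA_s$ (or its resolvent-like iterates) applied to $\rvtheta^*$, and its fluctuation about $\rvtheta^*$ is controlled by $\phi'(k,\cdot)$.

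\textbf{Diversity computation.} For the diversity formula I would use the law of total variance with respect to the submatrix index $s$ (uniform on $\{1,\dots,S\}$) together with the weight initialization $\rvtheta_0$. The term $\phi'(2k,0)\sigma^2$ is the ``within-submatrix'' variance contribution, identical in form to the $\phi(2k,0)\sigma^2$ term of Theorem~\ref{thm:varSAM} but with the subsampled ratio. The new term $\frac{S-1}{d_{\textup{in}}S}\big(\phi'(2k,0)-\phi'(k,0)^2\big)\|\rvtheta^*\|_2^2$ is the ``between-submatrix'' variance: different ensemble members see different $\rmA_s$, so their learned weights differ even ignoring $\rvtheta_0$-randomness, and this spread is governed by how far $\mathbb{E}_{\rvtheta_0}[\rvtheta_k]$ moves from $\rvtheta^*$, i.e.\ by $\phi'(k,0)$ relative to its second moment $\phi'(2k,0)$. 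The combinatorial prefactor $\frac{S-1}{d_{\textup{in}}S}$ should fall out of computing $\Var_s$ of a rank-$(n_{\textup{tr}}/S)$ random projection direction in $d_{\textup{in}}$ dimensions — essentially $\mathbb{E}[\,\|P_s \rvtheta^*\|^2\,]$ type quantities where $P_s$ is the projector onto the row space of $\rmA_s$, using that a random $(n_{\textup{tr}}/S)$-dimensional Gaussian subspace contributes a $\frac{n_{\textup{tr}}/S}{d_{\textup{in}}}$ fraction in expectation with a variance correction of order $1/d_{\textup{in}}$.

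\textbf{Sharpness upper bound.} Starting from the same algebraic identity used for $\kappa_k^{SAM}$ — namely that after perturbation $f(\bar\rvtheta_k+\rvvarepsilon;\rmA) - f(\bar\rvtheta_k;\rmA)$ splits into a quadratic term in $\rvvarepsilon$ bounded by $\frac{\rho_0^2}{2}\lambda_{\max}(\frac{1}{n_{\textup{tr}}}\rmA^T\rmA)$ and a cross term $\le \rho_0 \sqrt{\mathbb{E}\|\frac{1}{n_{\textup{tr}}}\rmA^T\rmA(\bar\rvtheta_k - \rvtheta^*)\|^2}$ — I would bound the operator norm of $\frac{1}{n_{\textup{tr}}}\rmA^T\rmA$ (the full matrix, evaluated on all the data, hence the $\sqrt{n_{\textup{tr}}/d_{\textup{in}}}+1$ factor unchanged from Bai–Yin) and then expand the residual $\bar\rvtheta_k^{SharpBal} - \rvtheta^*$. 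Because $\bar\rvtheta_k$ depends on only one submatrix while sharpness is measured on the full $\rmA$, the residual term, when squared and expanded, produces cross-products over pairs, triples, and quadruples of submatrix indices — this is the source of the $S(S-1)$, $S(S-1)(S-2)$, and $S(S-1)(S-2)(S-3)$ terms in $C$, and the various $\phi'(k,\cdot)\phi'(k,\cdot)$ products come from the independence of distinct submatrices factoring the expectation. The factors of $r$ and $r(1+r)$ are Wishart-moment normalizations $\mathbb{E}[\frac1{d_{\textup{in}}}\Tr(\rmA_t^T\rmA_t/(n_{\textup{tr}}/S))]$ and its second moment.

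\textbf{Main obstacle.} The hard part will be the bookkeeping of the sharpness cross term: expanding $\mathbb{E}\big\|\frac{1}{n_{\textup{tr}}}\sum_{t=1}^S \rmA_t^T\rmA_t\,(\bar\rvtheta_k^{SharpBal}-\rvtheta^*)\big\|_2^2$ where $\bar\rvtheta_k^{SharpBal}$ itself is a polynomial in one randomly chosen $\rmA_s^T\rmA_s$. One must carefully separate the cases where the ``outer'' index $t$ coincides with the training index $s$ versus not, track which Wishart matrices are independent, and apply the mixed-moment identities for Gaussian Wishart matrices (the Narayana-number expansions from \citet{bishop2018introduction} that already appear in $\phi$, now combined with cross-moments $\mathbb{E}[\rmA_t^T\rmA_t \, g(\rmA_s^T\rmA_s)\,\rmA_s^T\rmA_s]$ for $t\neq s$). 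Keeping the $\mathcal{O}(1+1/d_{\textup{in}})$ error terms uniform through all these products, and verifying that the combinatorial coefficients collapse to exactly the eight-term expression for $C$, is where the real care is needed; the diversity formula and the $\lambda_{\max}$ bound are comparatively routine given Theorem~\ref{thm:varSAM}.
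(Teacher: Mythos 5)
Your proposal follows essentially the same route as the paper's proof: the conditional dynamics are the single-matrix unrolling of Theorem~\ref{thm:unroll} with $n_{\textup{tr}}$ replaced by $n_{\textup{tr}}/S$ (giving $\phi'$ via Proposition~\ref{prop:wishart}); the diversity is obtained by decomposing the variance over $\rvtheta_0$ and the subset index $s$ (the paper phrases this as $\mathrm{Error}-\mathrm{Bias}^2$, which is the same computation and yields the within-subset term $\phi'(2k,0)\sigma^2$ plus a between-subset term whose prefactor $\tfrac{S-1}{d_{\textup{in}}S}$ comes from $\E[\rmT^T\rmT]=\tfrac{n_{\textup{te}}}{d_{\textup{in}}}\mathbf{I}$ together with the diagonal/off-diagonal count in $\tfrac{1}{S^2}\sum_{s,s'}$, not from a row-space projector of $\rmA_s$); and the sharpness bound combines the extreme-singular-value bound on the full $\rmA^T\rmA$ with Jensen's inequality and a mixed Wishart-moment expansion of the cross term, exactly as in the paper. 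One concrete correction to your final paragraph: because the expectation over the random subset index is taken \emph{inside} the sharpness definition, the residual being hit by $\rmA^T\rmA$ is $\E_{\rvtheta_0,s}[\rvtheta_k^{SharpBal}]-\rvtheta^*=-\tfrac{1}{S}\sum_{s=1}^{S}\rmB_s^k\rvtheta^*$, an average over \emph{all} $S$ submatrices, not ``a polynomial in one randomly chosen $\rmA_s^T\rmA_s$'' as you write; the quantity to expand is $(\rvtheta^*)^T\sum_j\rmA_j^T\rmA_j\sum_s\rmB_s^k\sum_{s'}\rmB_{s'}^k\sum_l\rmA_l^T\rmA_l\,\rvtheta^*$ with four free indices $j,s,s',l$, which is precisely what generates the $S(S-1)(S-2)(S-3)$ term you anticipate --- under the literal ``single chosen subset'' reading you would obtain at most three distinct indices and a different constant $C$.
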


The proof of Theorem \ref{thm:sub} is provided in Appendix \ref{apd:proof2}. Similar experimental validations are conducted to verify Theorem \ref{thm:sub}, with results also presented in Appendix \ref{apd:proof}. The main insight from Theorem \ref{thm:sub} is that training models on a randomly selected data subset offers a better trade-off between sharpness and diversity compared to training on the complete dataset. This idea is further illustrated in Figure \ref{fig:teaser-theory}, where we compare the sharpness upper bound and diversity of models trained on the full dataset (labeled as \sam) and those trained on subsets (labeled as \ourmethod).
The results demonstrate that \ourmethod achieves a more favorable trade-off. For a given level of sharpness, deep ensembles with models trained on subsets of the data exhibit higher diversity compared to those trained on the entire dataset.
This indicates that minimizing sharpness on randomly sampled data subsets for each model within the ensemble promotes the diversity among the models, thereby enhancing the sharpness-diversity trade-off.

\section{Experiments}
\label{sec:flat-div}
In this section, we describe our experiments. 
In particular, following Section~\ref{sec:flat-div_SETUP} where we describe our experimental setup, in
Section~\ref{sec:emp-trade-off}, we provide an empirical evaluation across various datasets to explore the trade-off between sharpness and diversity. 
We also examine how this trade-off changes with different levels of overparameterization.
Then, in Section~\ref{sec:sharpbalance-method} and \ref{sec:eval-perf}, we elaborate the \ourmethod algorithm and compare its performance with baseline methods.

\subsection{Experimental setup} 
\label{sec:flat-div_SETUP}
Here, we describe the experiment setup for Section~\ref{sec:emp-trade-off}.
Each ensemble member is trained individually using \sam with a consistent perturbation radius $\rho$, as defined in \eqref{eqn:SAM_objective}. 
We adjust $\rho$ across different ensembles to achieve varying levels of minimized sharpness.
Sharpness for each NN was measured using the adaptive worst-case sharpness metric, defined in \eqref{eqn:sharpness}.
The sharpness measurement was done on the training set, using 100 batches of size 5.
The diversity between NNs is measured using DER defined in \eqref{eqn:disagreement}.
The diversity between ensemble members is tested on OOD data.
We evaluated this trade-off using a variety of image classification datasets, including CIFAR-10, CIFAR-100~\citep{Krizhevsky2009LearningML}, TinyImageNet~\citep{Le2015TinyIV}, and their corrupted versions~\citep{hendrycks2019benchmarking}. 
For the setup of Section~\ref{sec:eval-perf}, we used the same datasets and architecture.
The hyperparameters of the baseline methods has been carefully tuned.
The hyperparameters for conducting the experiments are detailed in Appendix~\ref{abl:hyper}.
\looseness-1

\begin{figure*}[!th]
\centering
\begin{subfigure}{0.32\linewidth}
\includegraphics[width=\linewidth,keepaspectratio]{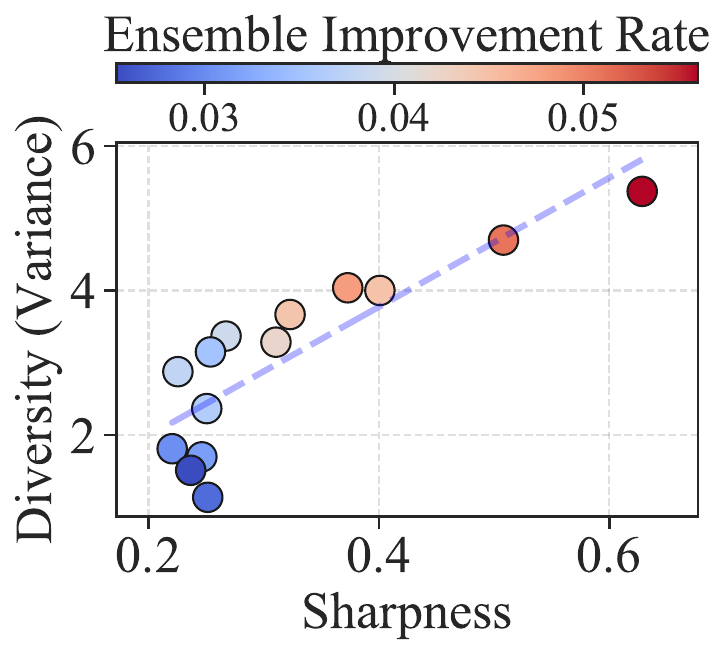}
\caption{Measuring diversity via Variance}
\end{subfigure} 
\centering
\begin{subfigure}{0.33\linewidth}
\includegraphics[width=\linewidth,keepaspectratio]{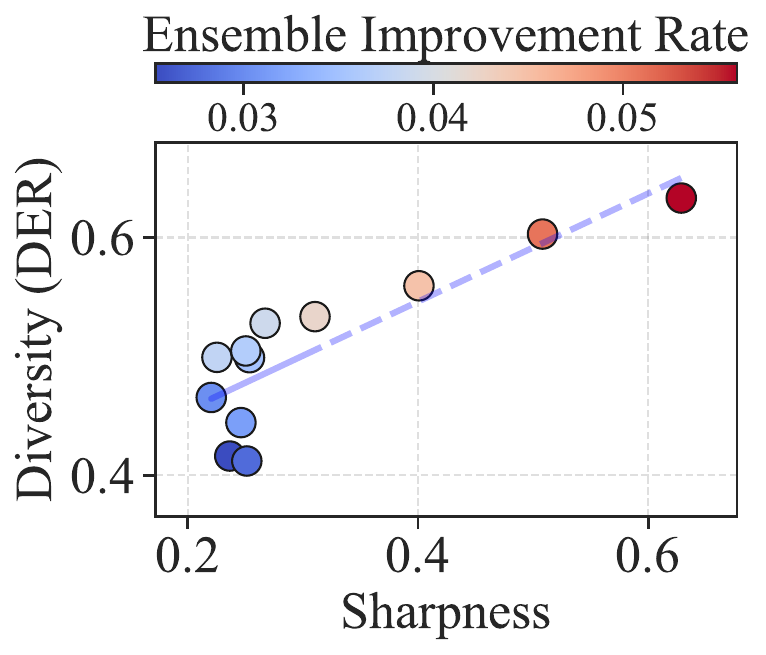}
\caption{Measuring diversity via DER}
\end{subfigure}
\begin{subfigure}{0.33\linewidth}
\includegraphics[width=\linewidth,keepaspectratio]{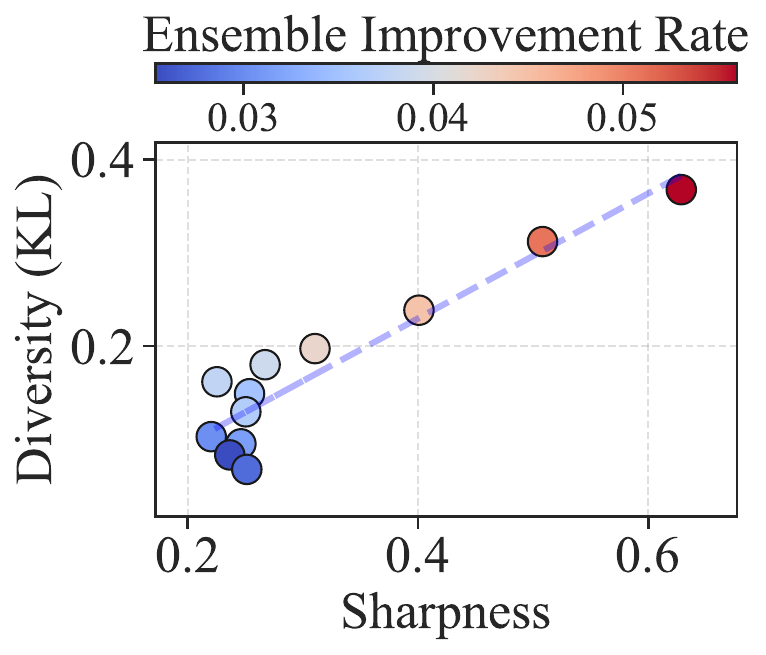}
\caption{Measuring diversity via KL}
\end{subfigure}  
\caption{\textbf{(Varying diversity measure in empirical study).} Three different metrics are employed to measure the diversity of individual models within an ensemble, i.e., Variance in \eqref{eq:defdiv}, DER in \eqref{eqn:disagreement}, and KL divergence in \eqref{eqn:kl-divergence}.
The results of the three metrics show consistent trends, demonstrating the sharpness-diversity trade-off: lower sharpness is correlated with lower diversity. 
The experiment is conducted by training a three-member ResNet18 ensemble on CIFAR10.
}
\label{fig:trade-off-evidence-vary-diversity}  
\end{figure*}

\begin{figure*}[!th]
\centering
\begin{subfigure}{0.32\linewidth}
\includegraphics[width=\linewidth,keepaspectratio]{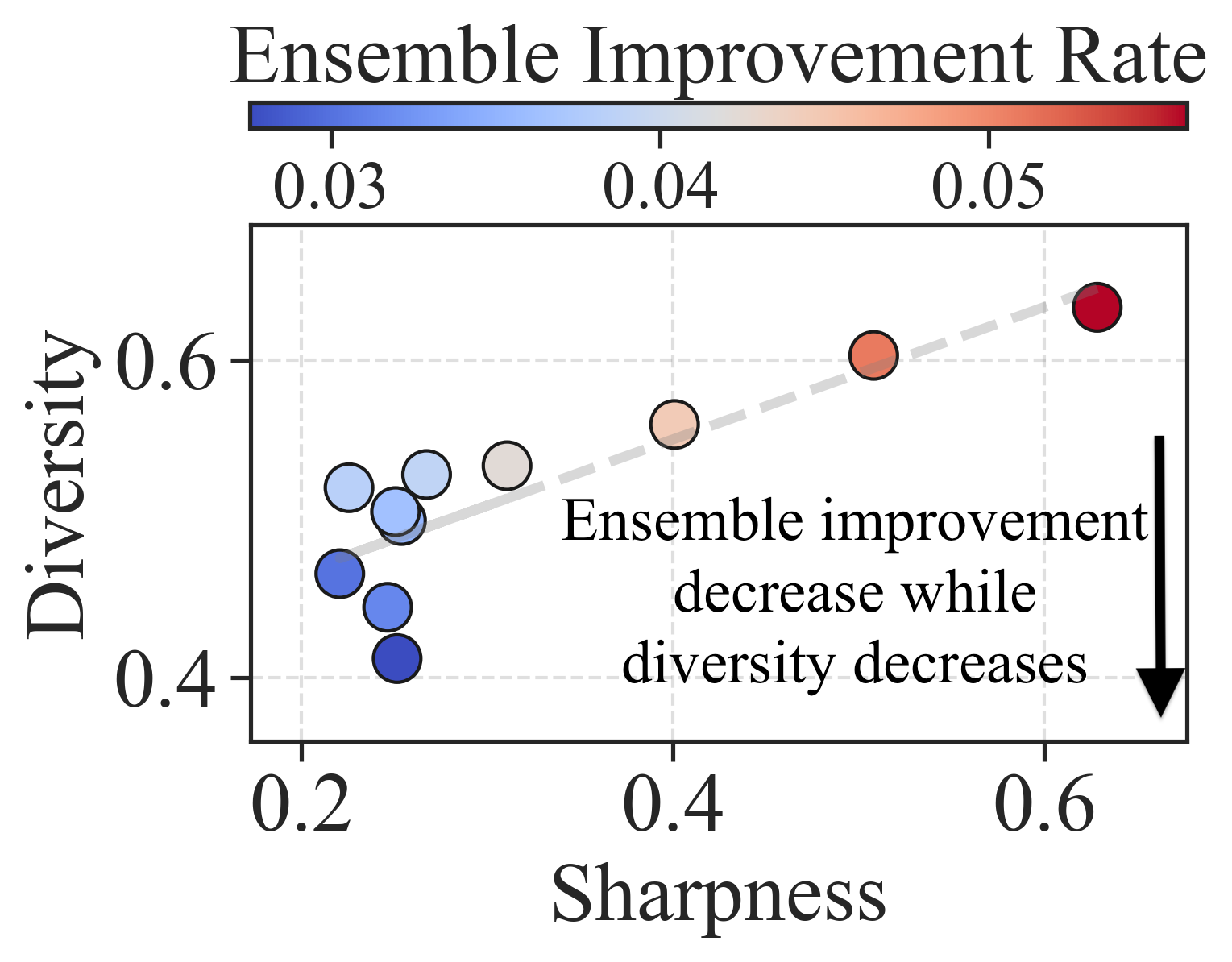}
\end{subfigure}
\centering
\begin{subfigure}{0.32\linewidth}
\includegraphics[width=\linewidth,keepaspectratio]{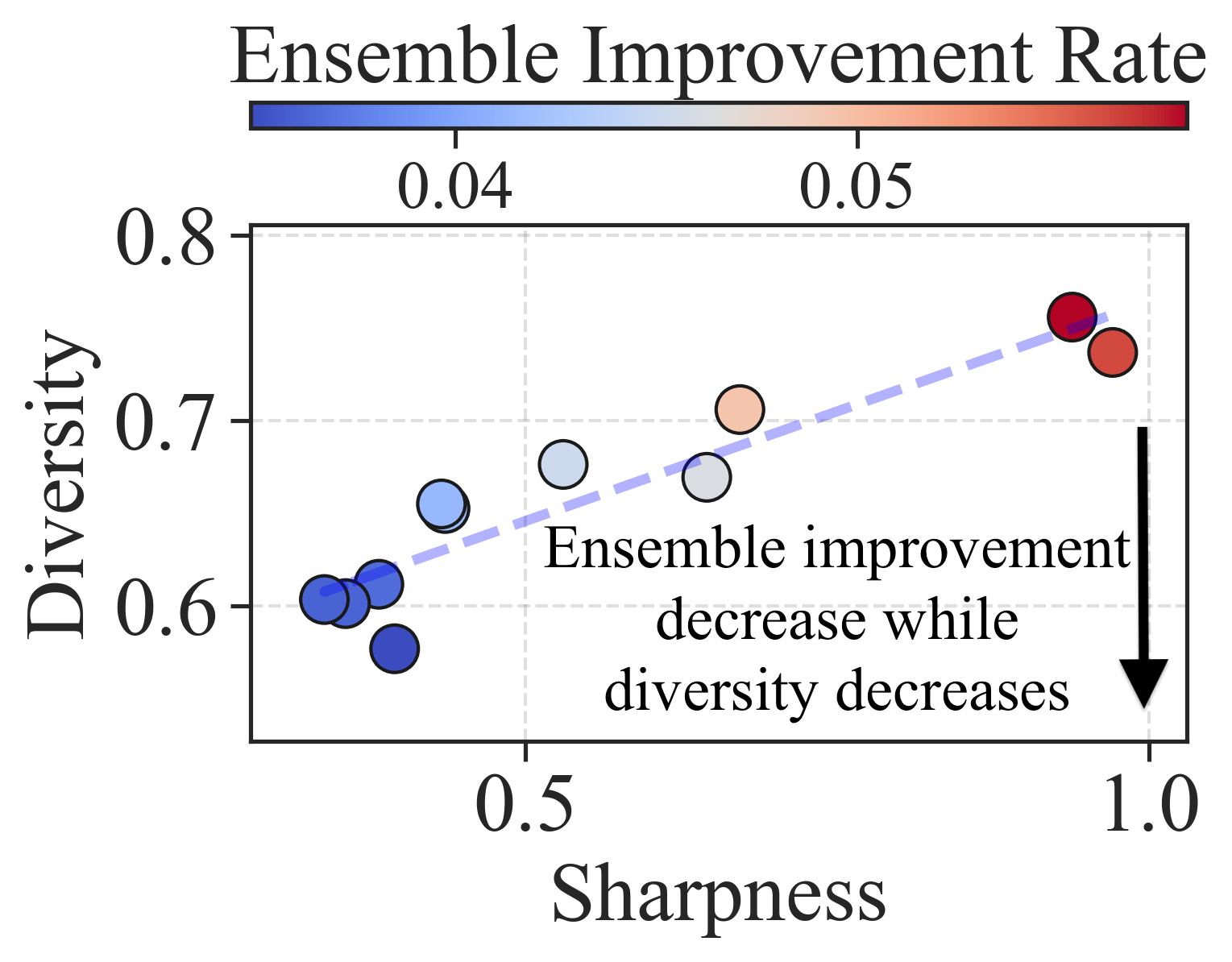} 
\end{subfigure}  
\begin{subfigure}{0.32\linewidth}
\includegraphics[width=\linewidth,keepaspectratio]{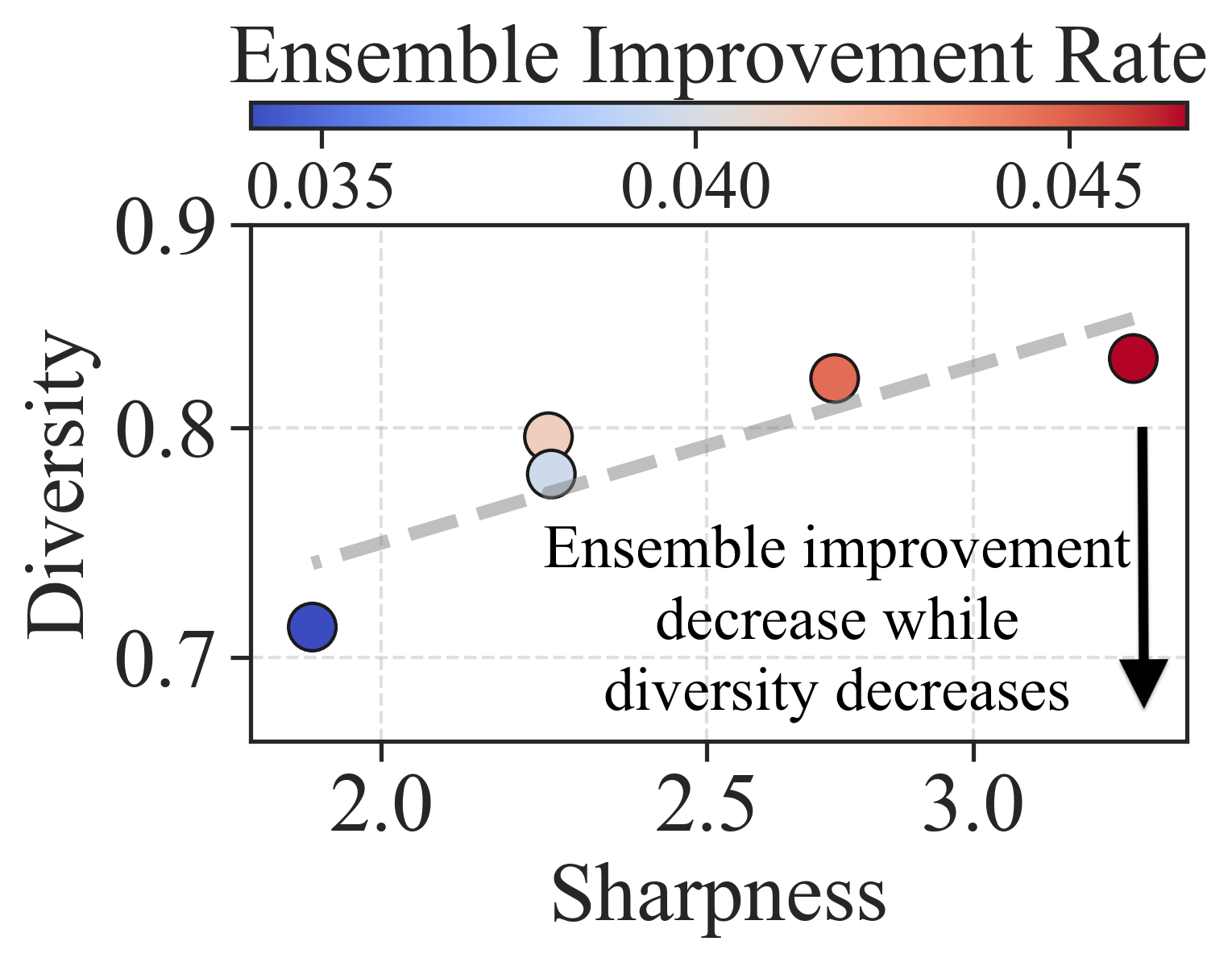} 
\end{subfigure}  
\begin{subfigure}{0.32\linewidth}
\includegraphics[width=\linewidth,keepaspectratio]{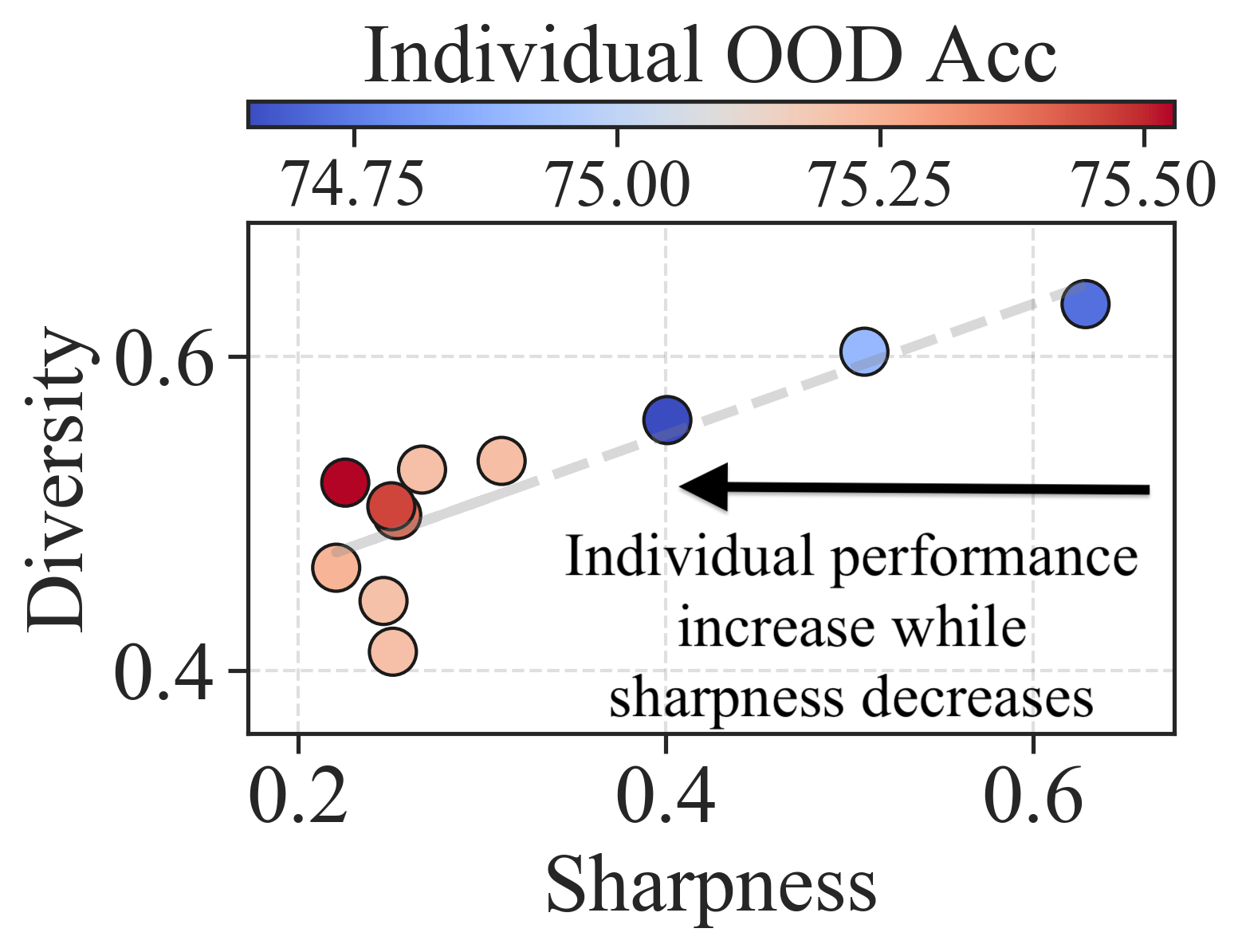}
\caption{CIFAR-10}  
\end{subfigure}
\centering
\begin{subfigure}{0.32\linewidth}
\includegraphics[width=\linewidth,keepaspectratio]{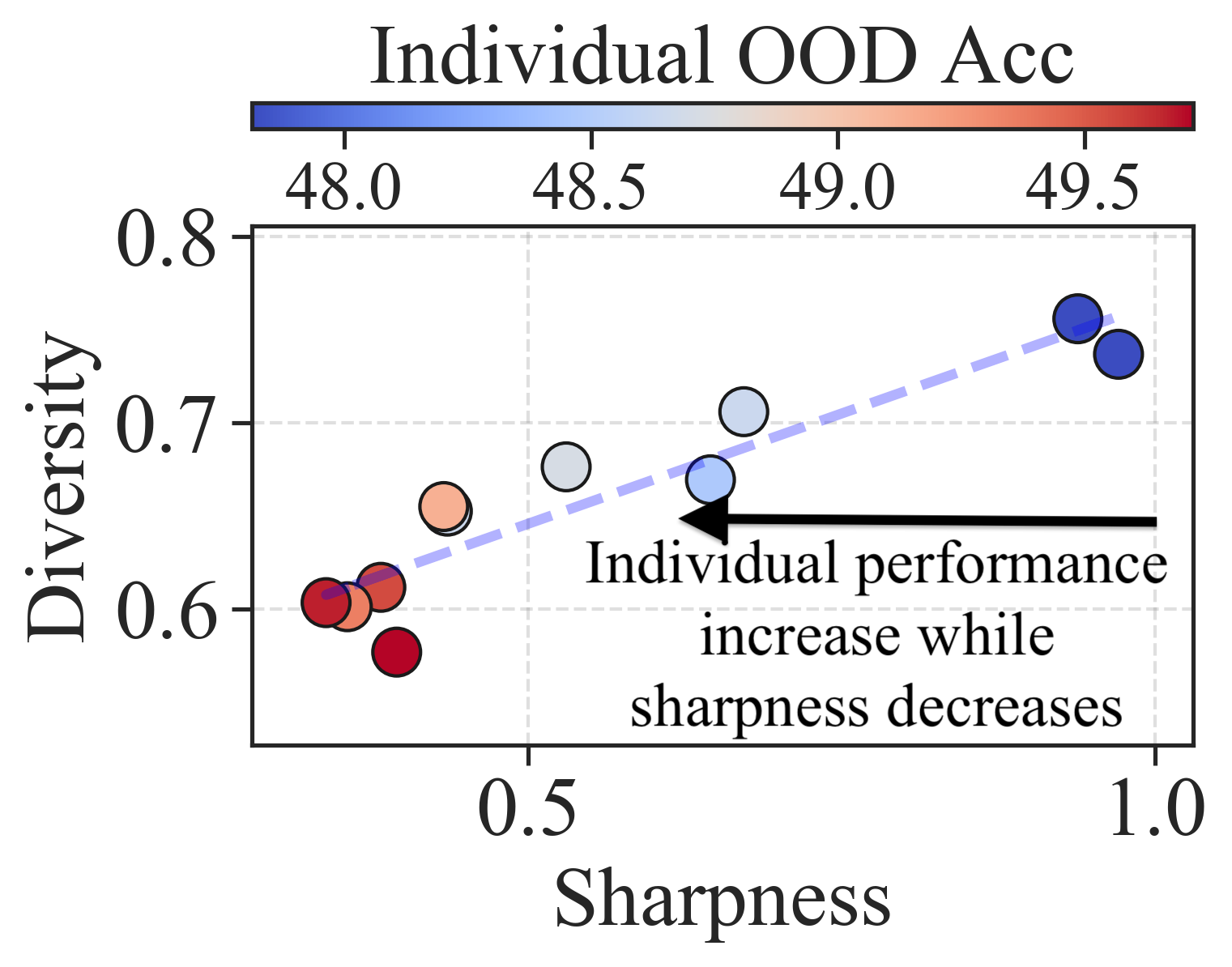} 
\caption{CIFAR-100} 
\end{subfigure}  
\begin{subfigure}{0.32\linewidth}
\includegraphics[width=\linewidth,keepaspectratio]{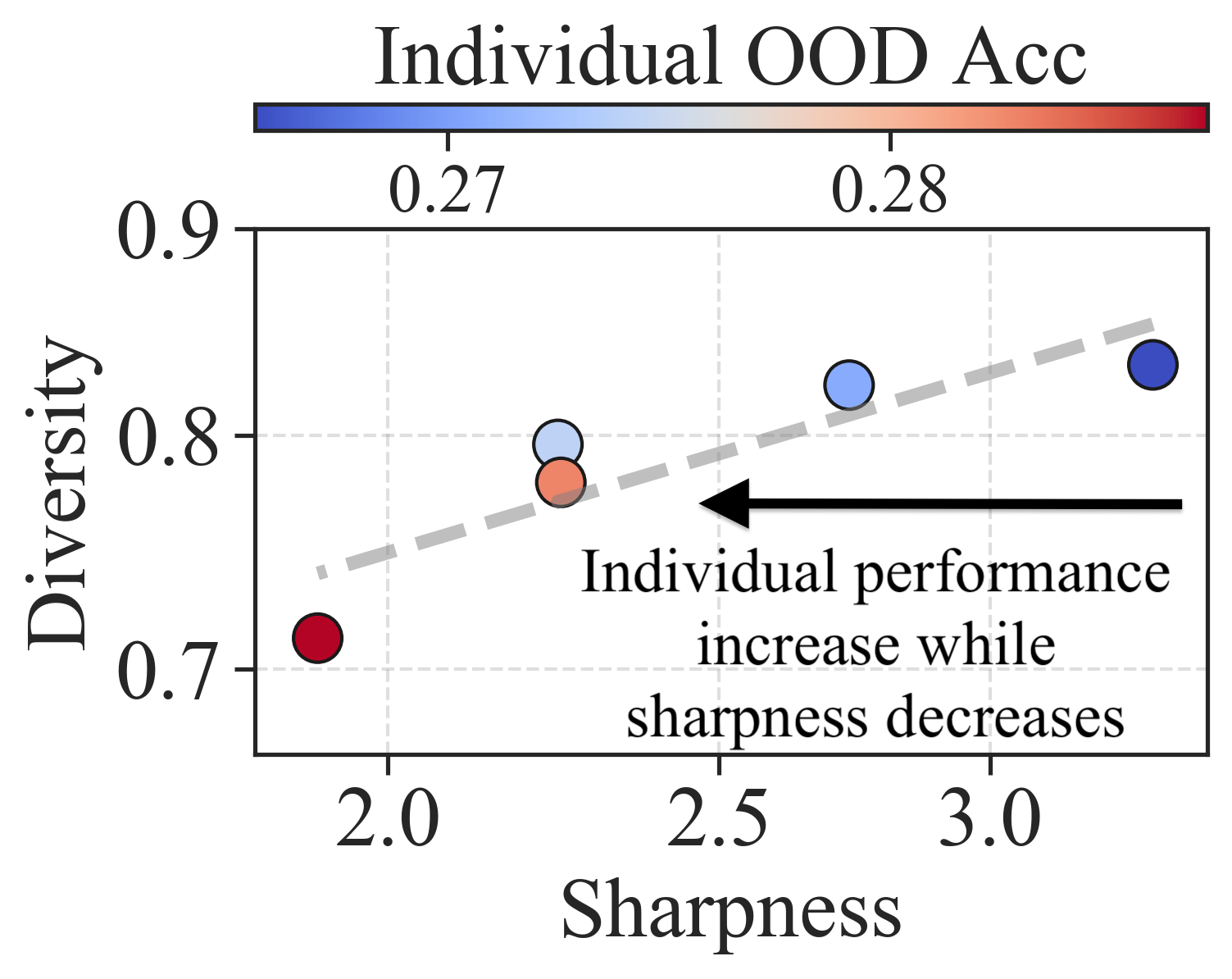} 
\caption{TinyImageNet} 
\end{subfigure}  
\caption{\textbf{(Empirical observations of sharpness-diversity trade-off).} The identified trade-off shows that while reducing sharpness enhances individual model performance, it concurrently lowers diversity and thus diminishes the ensemble improvement rate.
\emph{First row}: the color encoding represents the ensemble improvement rate (EIR) defined in \eqref{eqn:eir}, from red to blue means ensembling improvement decreases.
\emph{Second row}: the color encoding represents the individual ensemble member's OOD accuracy, from blue to red means individual performance becomes better.
Each marker represents a three-member ResNet18 ensemble trained with \sam with a different perturbation radius. 
}
\label{fig:trade-off-evidence}  
\end{figure*}

\subsection{Empirical validation of Sharpness-diversity trade-off}\label{sec:emp-trade-off}

We provide empirical observation to validate and explore the sharpness-diversity trade-off.
Figure~\ref{fig:trade-off-evidence-vary-diversity} presents the validation of observing the trade-off phenomenon on training ResNet18 ensembles on CIFAR10 applying three different metrics to measure the diversity. 
The results demonstrate that this trade-off phenomenon generalizes to the three diversity metrics defined in Section~\ref{sec:notation}.
Figure~\ref{fig:trade-off-evidence} presents the validation on three different datasets. 
In the following empirical study, DER will be the primary metric for measuring diversity of models.
 
Experimental results obtained with the other two metrics are available in Appendix \ref{abl:abl-loss}. 
The three sets of results first verify that minimizing individual member's sharpness indeed reduces diversity. 
This is confirmed by the consistent trends of markers moving from upper right to lower left.
Second, the first row of Figure~\ref{fig:trade-off-evidence} shows that an ensemble with decreased diversity (lower in $y$-axis) shows a lower ensemble improvement rate (from red to blue), highlighting the negative impact of this trade-off.
Lastly, the second row shows when the sharpness of the individual model is reduced (lower in $x$-axis), the individual model's OOD accuracy is improved (from blue to red), demonstrating the benefits of minimizing sharpness. We verify the robustness of the phenomenon by measuring the sharpness and diversity using different metrics in Appendix~\ref{abl:abl-loss}.

Figure~\ref{fig:vary-overpara} illustrates the trade-off curves as the overparameterization level of the model is adjusted by changing width or sparsity (introduced using model pruning).
This visualization confirms that the trade-off is a consistent phenomenon across models of different sizes, and the ensemble provides less improvement (blue color) at the lower left end of each trade-off curve. 
It also highlights that models with smaller or sparser configurations show a more significant trade-off effect, as evidenced by the steeper slopes and higher coefficient values of the linear fitting curves.
As sparse ensembles are now being used to demonstrate the benefits of ensembling for efficient models~\citep{liu2021deep, diffenderfer2021winning, whitaker2022prune, kobayashi2022diverse, zimmer2023sparse}, addressing the conflict between sharpness and diversity becomes particularly crucial.

\begin{figure*}[!th]
\centering
\begin{subfigure}{0.45\linewidth}
\includegraphics[width=\linewidth,keepaspectratio]{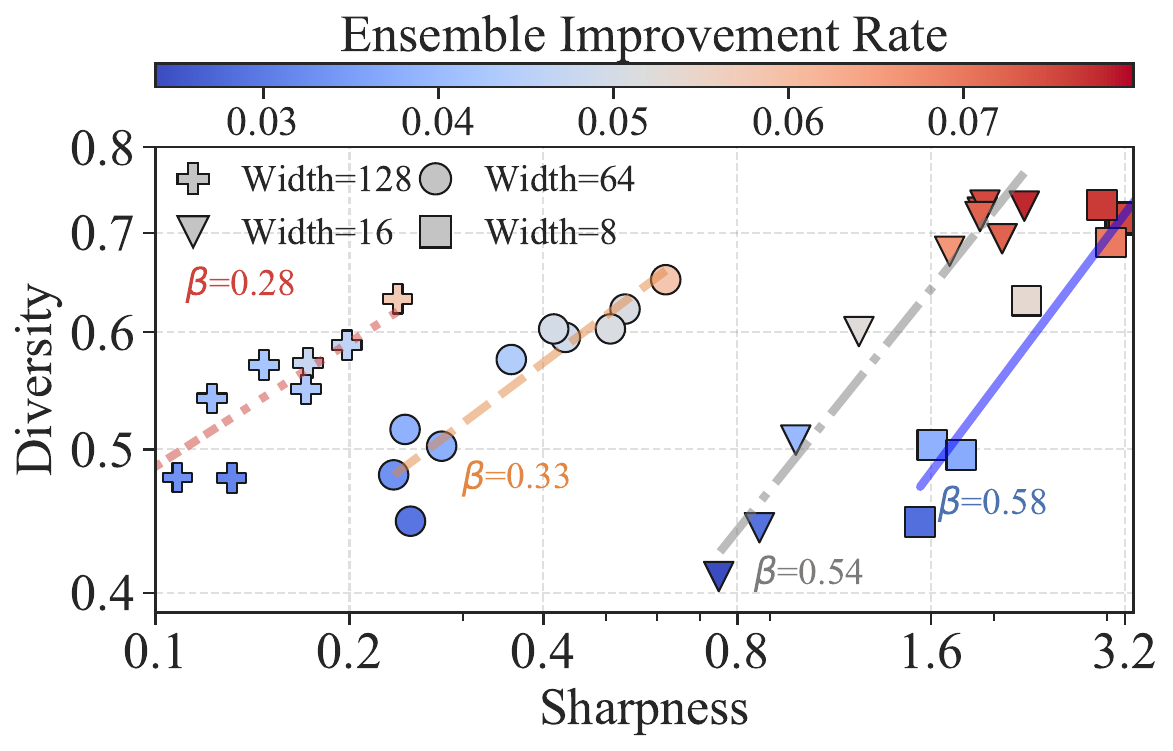}
\caption{Varying model width}  \label{fig:vary-width}
\end{subfigure}
\centering
\begin{subfigure}{0.45\linewidth}
\includegraphics[width=\linewidth,keepaspectratio]{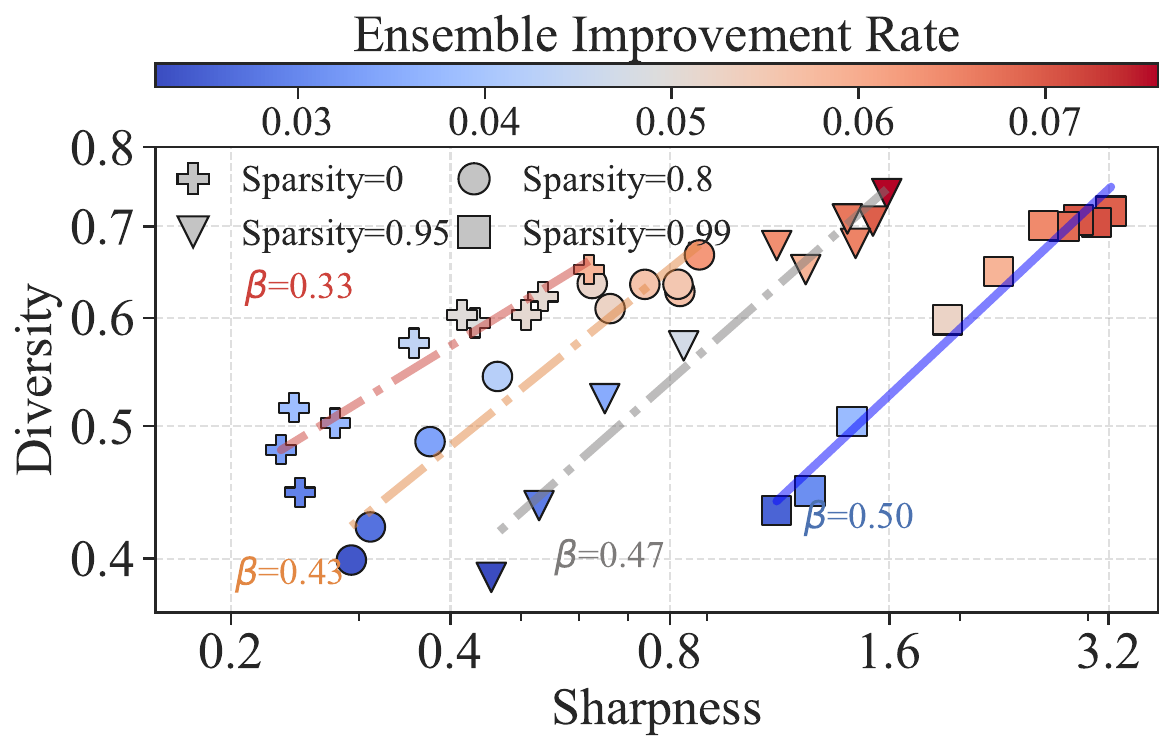} 
\caption{Varying model sparsity} \label{fig:vary-sparsity}
\end{subfigure} 
\caption{
\textbf{
(Sharpness-diversity trade-off in models varying overparameterization levels).} 
Different types of markers represent models with varying degrees of overparameterization, determined by changing the model width (a) or sparsity (b). 
Each marker represents a three-member ensemble trained with \sam with a different perturbation radius.
The $\beta$ reflects the rate of decline in the trade-off curve, calculated via applying linear fitting over the ensembles at each level of overparameterization.
A higher $\beta$ points to a steeper decline in the trade-off. Ensembles with narrower widths or increased sparsity display more pronounced trade-off effects. The model used in ResNet18 and the dataset is CIFAR-10.\looseness-1}
\label{fig:vary-overpara}  
\end{figure*}

\subsection{Our \ourmethod method}~\label{sec:sharpbalance-method} \vspace{-6mm}

Here, we describe the design and implementation of our main method, \ourmethod. Figure~\ref{fig:sharpb-sys} provides an overview. Our approach is motivated by the theoretical analysis in Section~\ref{sec:theory1}, which suggests that having each ensemble member minimize sharpness on diverse subsets of the data can lead to a better trade-off between sharpness and diversity.
\ourmethod aims to achieve the optimal balance by applying \sam to a carefully selected subset of the data, while performing standard optimization on the remaining samples.
More specifically, for each ensemble member NN $f_{\rvtheta_i}$, our method divides the entire training dataset $\mathcal{D}$ into two distinct subsets: sharpness-aware set $\mathcal{D}^{i}_\text{SAM}$ and normal set $\mathcal{D}^{i}_\text{Normal}$. 
The model is trained to optimize the sharpness reduction objective on $\mathcal{D}^{i}_\text{SAM}$, while it optimizes the normal training objective on $\mathcal{D}^{i}_\text{Normal}$.
These training objectives are denoted as $\mathcal{L}_{\mathcal{D}^{i}_\text{SAM}}^\text{SAM}(\rvtheta_i)$ and  $\mathcal{L}_{\mathcal{D}^{i}_\text{Normal}}(\rvtheta_i)$, respectively.
The $\mathcal{D}^{i}_\text{SAM}$ is selected by an adaptive strategy from the whole dataset $\mathcal{D}$: it is composed of the union of samples that are deemed ``sharp'' by all other members of the ensemble except the $i$-th.
Specifically, for each model, we pick the subset of data samples with the top-$k$\% highest ``per-data-sample sharpness.'' Then, we take the union of all such subsets expect the $i$-th for creating the subset $\mathcal{D}^{i}_\text{SAM}$.

\begin{figure}[!ht]
\centering
\includegraphics[width=0.8\linewidth,keepaspectratio]{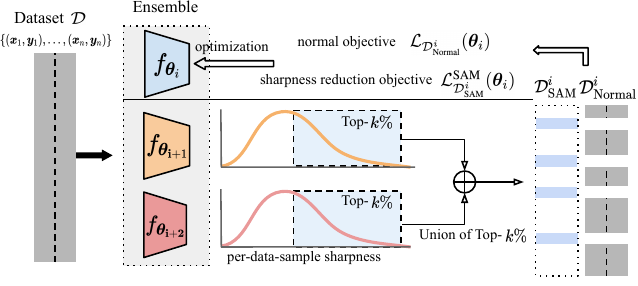}
\caption{\textbf{(System diagram of \ourmethod).} Each ensemble member $f_{\rvtheta_i}$ optimizes the sharpness reduction objective on subset $\mathcal{D}^{i}_\text{SAM}$ and the normal training objective on $\mathcal{D}^{i}_\text{Normal}$. $\mathcal{D}^{i}_\text{SAM}$ is formed by selecting data samples from $\mathcal{D}$ that significantly affect the loss landscape sharpness of other ensemble members. \looseness-1
} 
\label{fig:sharpb-sys}
\end{figure}

\noindent
{\bf Per-data-sample sharpness.} This metric is designed to efficiently assess the sharpness of a model for individual data samples. 
For each data point $(\vx_j, \vy_j)$, sharpness is quantified using the Fisher Information Matrix (FIM), which is expressed as $\nabla_\rvtheta \ell( f_{\rvtheta}(\vx_j), \vy_j ) \nabla_{\rvtheta} \ell( f_{\rvtheta}(\vx_j), \vy_j )^T$. 
Following a well-established approach~\citep{bottou2018optimization}, we approximate the trace of the FIM by computing the squared $\ell_2$ norm of the gradient: $\| \nabla_{\rvtheta} \ell( f_{\rvtheta}(\vx_j), \vy_j ) \|^2_2$.
Other common sharpness metrics, such as worst-case sharpness, trace of the Hessian, or Hessian eigenvalues, are computationally slightly more expensive to approximate~\citep{yao2020pyhessian,yao2021adahessian}, but are expected to lead to similar results.
\subsection{Empirical evaluation of \ourmethod} \vspace{-0mm}
\label{sec:eval-perf}
\begin{figure}[!h]
    \centering
    \begin{subfigure}{0.8\textwidth}
    \includegraphics[width=\textwidth]{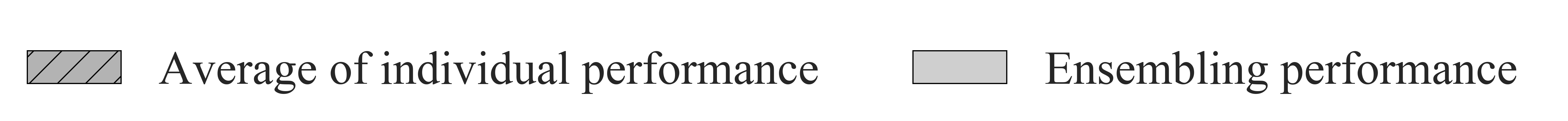}
    \end{subfigure} \\
    \centering
    \begin{subfigure}{0.30\textwidth}
    \includegraphics[width=\textwidth]{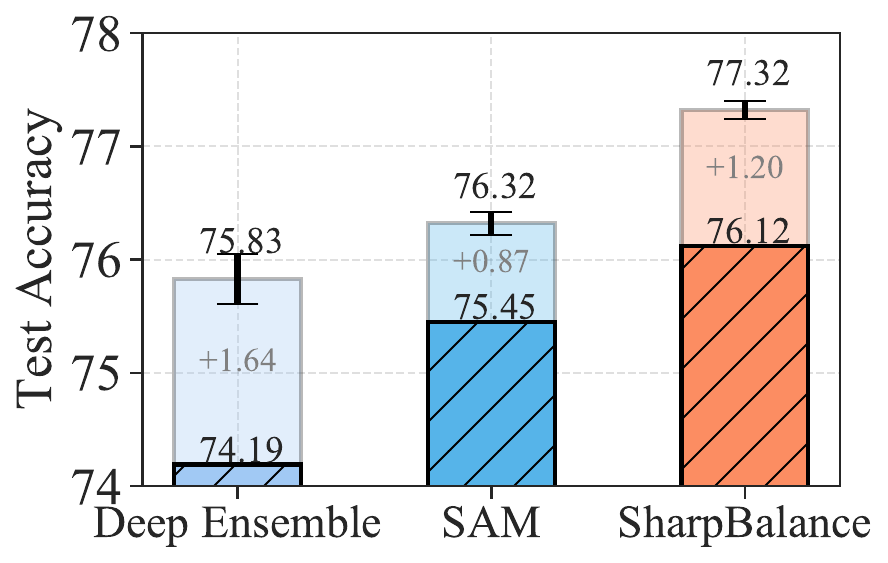}
    \caption{CIFAR10-C}
    \end{subfigure}
    \begin{subfigure}{0.30\textwidth}
    \includegraphics[width=\textwidth]{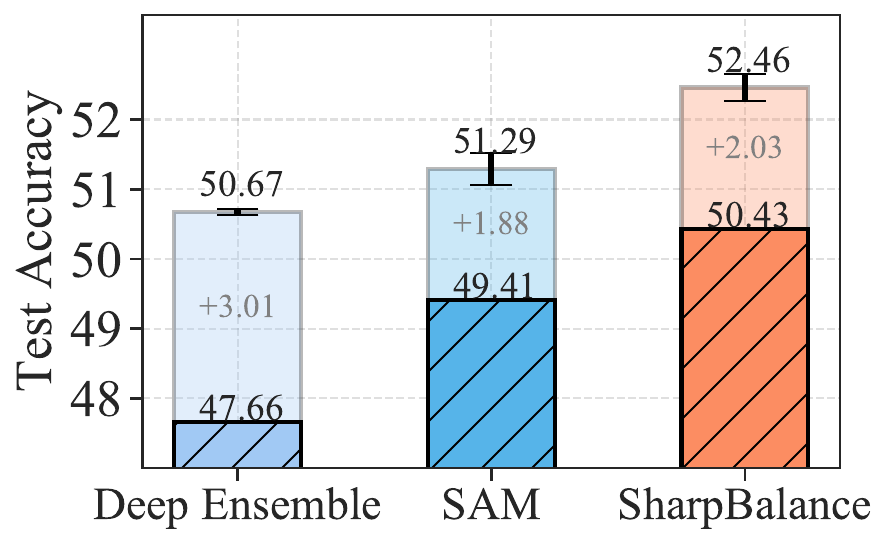}
    \caption{CIFAR100-C}
    \end{subfigure}
    \begin{subfigure}{0.30\textwidth}
    \includegraphics[width=\textwidth]{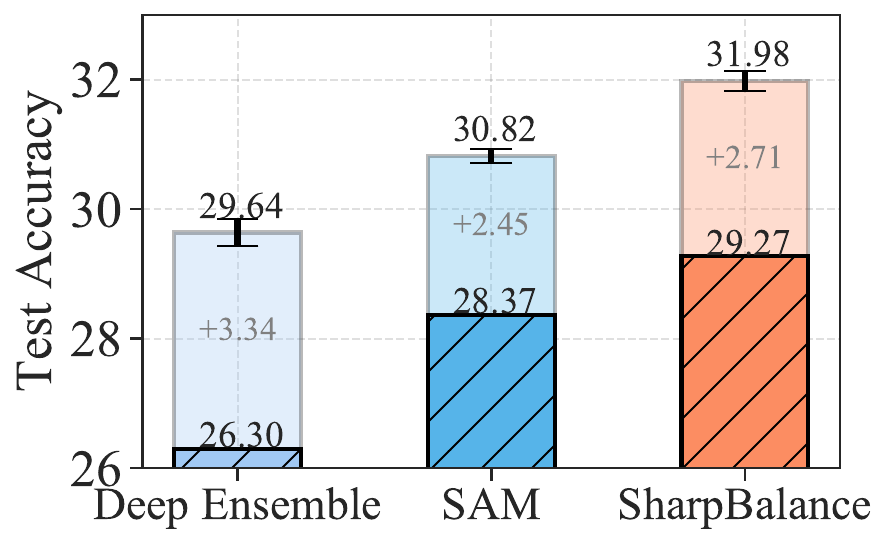}
    \caption{TinyImageNet-C}
    \end{subfigure} \\
    \begin{subfigure}{0.30\textwidth}
    \includegraphics[width=\textwidth]{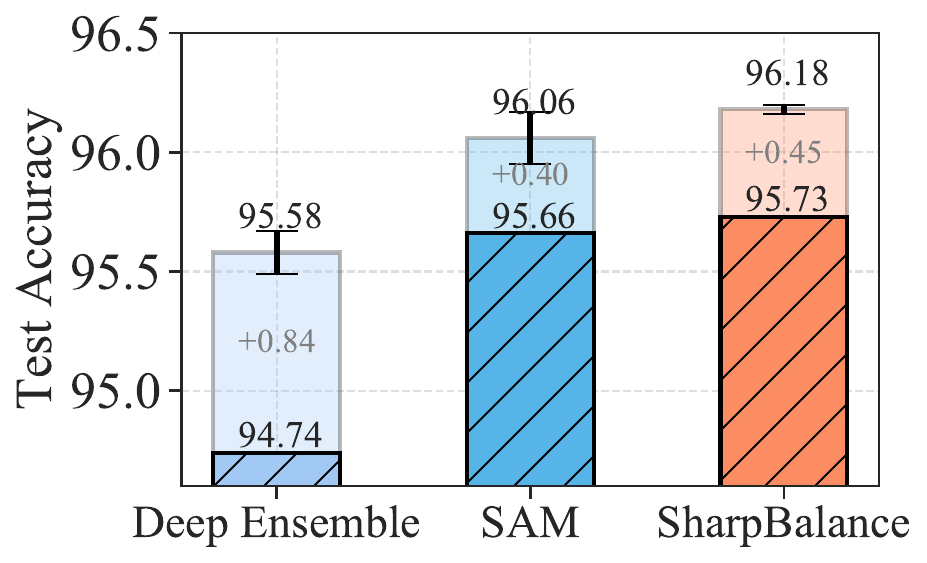}
    \caption{CIFAR10}
    \end{subfigure}
    \begin{subfigure}{0.30\textwidth}
    \includegraphics[width=\textwidth]{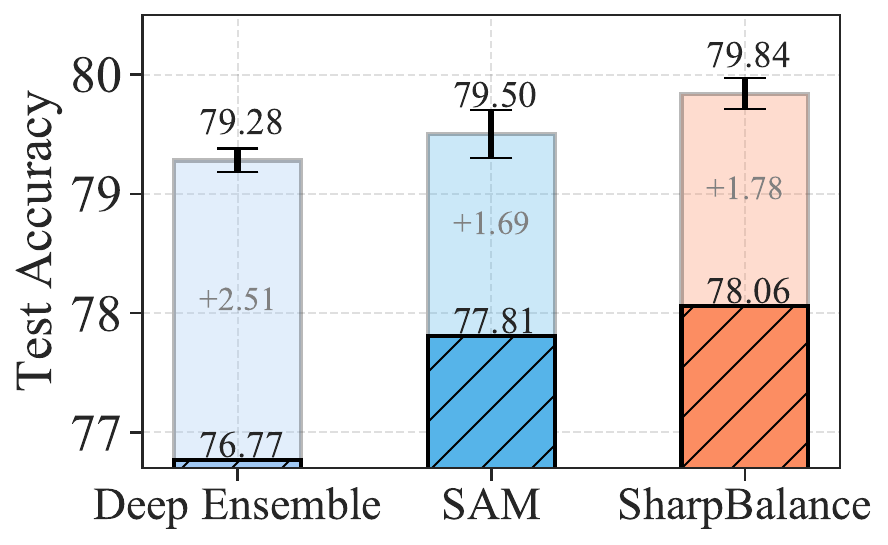}
    \caption{CIFAR100}
    \end{subfigure}
    \begin{subfigure}{0.30\textwidth}
    \includegraphics[width=\textwidth]{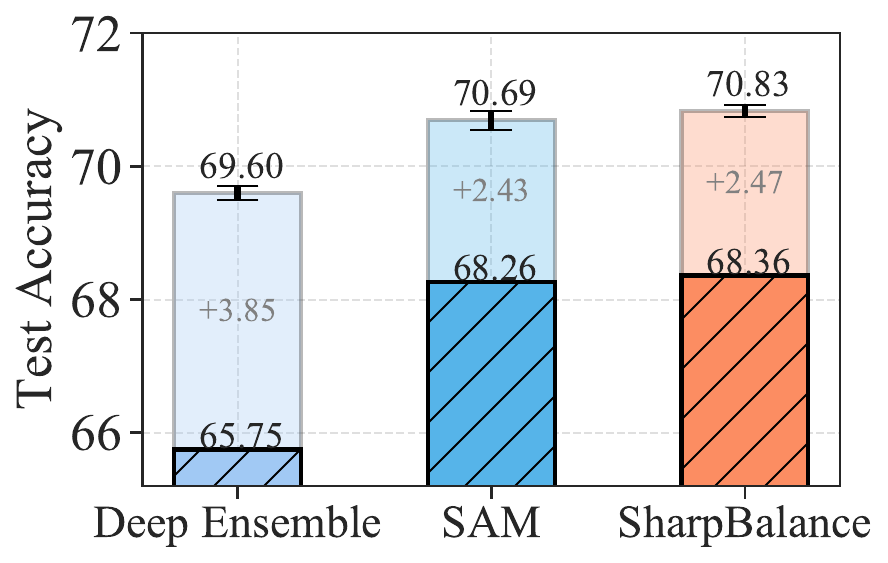}
    \caption{TinyImageNet}
    \end{subfigure}
    \caption{\textbf{(Main results: \ourmethod improves the overall ensembling performance and mitigates the reduced ensembling improvement caused by sharpness-diversity trade-off).} The three-member ResNet18 ensemble is trained with different methods on three datasets. The first row reports the OOD accuracy and the second row reports the ID accuracy. 
    The lower part of each bar with the diagonal lines represents the individual model performance. 
    The upper part of each bar represents the ensembling improvement.
    The results are reported by averaging three ensembles, and each ensemble is comprised of three models.
 \looseness-1} \vspace{-0mm}
    \label{fig:compare_results} 
\end{figure}

We evaluate \ourmethod by benchmarking it against both a standard Deep Ensemble, trained using SGD, and a Deep Ensemble enhanced with \sam. 
The results are presented in Figure~\ref{fig:compare_results} for CIFAR-10, CIFAR-100, and TinyImageNet. 
The comparison between the middle and left bars shows that \sam enhances individual model performance by reducing sharpness. 
However, this reduction in sharpness also diminishes the overall ensemble effectiveness by lowering diversity, exemplifying the sharpness-diversity trade-off discussed in Section~\ref{sec:emp-trade-off}.
Further comparison between the right and middle bars shows that \ourmethod maintains or improves individual performance while improving ensemble effectiveness. 
These results confirm that \ourmethod consistently boosts both ID and OOD performance across the datasets studied.

To further evaluate \ourmethod, we provide corroborating results in Appendix~\ref{sec:more-results}, which includes:
\begin{itemize}[leftmargin=*] %nosep,noitemsep,
    \item We evaluate \ourmethod on different severity of the corruption on CIFAR10-C, CIFAR100-C and Tiny-ImageNet-C. \ourmethod increasingly outperforms the baselines as the severity of the corruption increases.
    \item We further evaluate \ourmethod on another model WRN-40-2.% Results demonstrate that \ourmethod obtains an improvement of more than 1\% accuracy on CIFAR10-C and  CIFAR100-C, compared to baselines.
    \item We compare our method of measuring sharpness with another method of measuring the curvature of the loss around a data point~\citep{garg2023samples} and show the strong correlation between these two methods.
    \item  We further compare \ourmethod with EoA~\citep{arpit2022ensemble} and an improved version of \sam (for which individual models in an ensemble are trained with different $\rho$ values). Results show that \ourmethod can significantly outperform the baselines.
    \item We demonstrate that, compared to training a deep ensemble with \sam, our method adds only minimal computational cost. The extra time complexity is dominated by the computation of Fisher trace for evaluating per-sample sharpness, which empirically increases the training time by 1\%.
\end{itemize}
\section{Conclusion}\label{sec:conclusions}

Our theoretical and empirical analyses demonstrate the existence of a sharpness-diversity trade-off when sharpness-minimization training methods are applied to deep ensembles.
This leads to two main insights that are relevant for improving model performance.
First, reducing the sharpness in individual models proves to be beneficial in enhancing the performance of the ensemble as a whole.
Second, the accompanying reduction in diversity suggests that popular ensembling methods have limitations, and also highlights the potential for more sophisticated designs that promote diversity among models with lower sharpness.
These results are particularly timely, given recent theoretical work on characterizing ensemble improvement~\citep{theisen2023ensembles}.
In response to these findings, we have proposed \ourmethod, which ``diagnoses'' the training data by evaluating the sharpness of each sample and then fine-tunes the training of individual models to focus on a diverse subset of the sharpest training data samples. 
This targeted approach helps maintain diversity among models while also reducing their individual sharpness.
Extensive evaluations indicate that \ourmethod not only improves the sharpness-diversity trade-off but also delivers superior OOD performance for both dense and sparse models across various datasets and architectures when compared to other ensembling approaches.

{\bf Limitations.} One limitation of the study is that our theoretical analysis in Section \ref{sec:theory1} relies on the assumption that the data matrices $\rmA, \rmT$ follow a Gaussian distribution and assumed the optimization objective to be quadratic, which may not always hold in practice. Despite the potentially strong assumptions, our empirical findings in Section \ref{sec:flat-div} show that the conclusions remain robust in real-world datasets with various model architectures. This suggests the insights discovered in our study are applicable to a wider range of real-world scenarios, beyond just those strictly adhering to the Gaussian assumption. Nevertheless, future research could explore how such assumptions can be relaxed and extend the theoretical analysis to a weaker condition.

\bibliographystyle{plainnat}

\bibliography{ref}

\newpage
\appendix
\section*{Appendix}
\section{Impact Statement}\label{sec:impact-statement}
This paper uncovers a trade-off between sharpness and diversity in deep ensembles and introduces a novel training strategy to achieve an optimal balance between these two crucial metrics. While the proposed method could potentially be misused for malicious purposes, we believe that the study itself does not pose any direct negative societal impact. More importantly, this research advances the field of ensemble learning and contribute to the development of more reliable deep ensemble models. These advancements consequently result in enhanced robustness when dealing with OOD data and enable the quantification of uncertainty, thereby strengthening the reliability and applicability of deep learning systems in real-world scenarios. 
\section{Related work}\label{sec:related-work}

\noindent \textbf{Ensembling}.
Diversity is one of the major factors that contribute to the success of the ensembling method. 
Popular ensemble techniques have been developed for tree-type individual learners, which are known to have a high variance. This is evident such as in~\citep{breiman2001random, chen2016xgboost, freund1995boosting, freund1997decision}. In contrast, more stable algorithms, such as support vector machines (SVM) type learners, are less commonly used for ensembles, unless they are tuned to a low-bias, high-variance regime, as explored in \citep{valentini2003low}.
When it comes to diversity and ensembling, NNs are known to exhibit properties different than traditional models, e.g., as described in recent theoretical and empirical work on loss landscapes and emsemble improvement~\citep{theisen2023ensembles,yang2021taxonomizing}.
Therefore, ensembling techniques that work well for traditional models (e.g., tree-type models) often underperform the simple yet efficient deep ensembles method~\citep{fort2019deep,ortega2022diversity} that uses the independent initialization and optimization.
Previous literature has explored various new methods to learn diverse NNs \citep{lee2022diversify,rame2022diverse,pang2019improving,parker2020ridge}.
Our work is different from previous work in that we study flat ensembles obtained from sharpness-aware training methods, especially focusing on diversifying flat ensembles by reducing the overlap between sharpness-aware data subsets.
While our work demonstrates significant improvements in OOD generalization, it is known that (in some cases, see also~\citet{theisen2023ensembles}) deep ensembling is a simple, yet effective method to improve OOD performance~\citep{diffenderfer2021winning}. Therefore, we compare the OOD performance of \ourmethod to deep ensembles.

\noindent
\textbf{Sharpness and generalization}. 
A large body of work has studied the relationship between the sharpness (or flatness) of minima and the generalizability of models~\citep{ 10.1162, 10.1145, keskar2016large, neyshabur2018a, yang2021taxonomizing,kaddour2022fair,yao2021adahessian,yao2020pyhessian}.
Works such as those by \citep{10.1162} and \citep{10.1145} use Bayesian learning and minimum description length to explain why we should train models to flat minima.
\citep{keskar2016large} introduces a sharpness-based metric, demonstrating how large-batch training can skew NNs towards sharp local minima, adversely affecting generalization. 
In addition, \citep{neyshabur2018a} uses a PAC-Bayesian framework to prove bounds on generalization, which can be interpreted as the relationship between sharpness and test accuracy. Furthermore, \citep{cha2021swad} presents a theoretical exploration of the link between the sharpness of minima and OOD generalization.

Motivated by the good generalization property of flat minima, variants of sharpness-guided optimization techniques have been proposed~\citep{yao2018hessian,yao2021adahessian,du2021efficient,jiang2023adaptive}, including sharpness-aware minimization ~\citep{foret2020sharpness}.
The DiWA method~\citep{rame2022diverse} observed that \sam can decrease the diversity of models in the context of weight averaging (WA)~\citep{izmailov2018averaging}. 
However, WA imposes constraints on different models, requiring them to share the same initialization and stay close to each other in the parameter space.  
In contrast, our work focuses on deep ensembles that do not pose additional constraints on the training trajectories of individual ensemble members. Previous work by \citep{behdin2023statistical} provided a theoretical characterization of important statistical properties for kernel regression models and single-layer ReLU networks, optimized using \sam on noisy datasets. Our theoretical analysis borrows ideas from~\citep{behdin2023statistical} and extends the analysis using random matrix theory.\looseness-1

\section{Proof of Theorems in Section \ref{sec:theory1}}
\label{apd:proof}

Recall that \sam updates the model weights, ignoring the normalization constant and regularization, through the following recursive rule 
\begin{equation*}
    \rvtheta_{k+1}^{SAM} = \rvtheta_{k}^{SAM} -\eta \nabla f\left(\rvtheta_{k}^{SAM} + \rho \nabla f(\rvtheta_{k}^{SAM})\right).
\end{equation*}
We first show an unrolling of the iterative optimization on a quadratic objective. \looseness-1
\begin{theorem}[Unrolling \sam]
\label{thm:unroll}
Let $\rvtheta^*$ be the teacher model. Let $\rvtheta_0$ be randomly initialized and updated with \sam to solve a quadratic objective $\mathcal{L}_\rmA(\rvtheta)=\frac{1}{2}(\rvtheta-\rvtheta^*)^T \rmA^T\rmA(\rvtheta-\rvtheta^*)$. Then, \looseness-1

    \begin{equation*}
        \rvtheta^{SAM}_{k+1} = \eta \sum_{i=0}^{k} \rmB^i \left(\rmA^T\rmA+ \rho (\rmA^T\rmA)^2\right)\rvtheta^* + \rmB^{k+1} \rvtheta_0,
    \end{equation*}
where $\rmB = \mathbf{I}- \eta \rmA^T\rmA - \eta \rho (\rmA^T\rmA)^2$.
\begin{proof}
    The gradient of the objective $f$ is given by $\nabla f(\rvtheta) = \rmA^T\rmA(\rvtheta-\rvtheta^*)$. Therefore,
    \begin{equation*}
        \rvtheta_k^{SAM} + \rho \nabla f(\rvtheta_k^{SAM}) = (\mathbf{I} + \rho \rmA^T\rmA)\rvtheta_k^{SAM} - \rho \rmA^T\rmA \rvtheta^* .
    \end{equation*}
    %w_k^{sam} + \rho Q(w_k^{SAM} - w^*) + \rho b
    With SAM update,
    \begin{align*}
        \rvtheta^{SAM}_{k+1} &= \rvtheta_{k}^{SAM} -\eta \nabla f\left(\rvtheta_k^{SAM} + \rho \nabla f(\rvtheta_k^{SAM})\right)\\
        &=\rvtheta_{k}^{SAM} -\eta\rmA^T\rmA\left(\rvtheta_k^{SAM} + \rho \nabla f(\rvtheta_k^{SAM}) - \rvtheta^*\right)\\
        &= \rvtheta_{k}^{SAM} -\eta\rmA^T\rmA\left( (\mathbf{I} + \rho \rmA^T\rmA)\rvtheta_k^{SAM} - \rho \rmA^T\rmA \rvtheta^*- \rvtheta^*\right)\\
        &= \left(\mathbf{I}-\eta \rmA^T\rmA - \eta \rho (\rmA^T\rmA)^2\right)\rvtheta_{k}^{SAM} + \eta\left(\rmA^T\rmA+\rho(\rmA^T\rmA)^2\right)\rvtheta^*\\
        &= \eta \sum\limits_{i=0}^k\rmB^i\left(\rmA^T\rmA+\rho(\rmA^T\rmA)^2\right)\rvtheta^*+ \rmB^{k+1} \rvtheta_0,
    \end{align*}
where the last equation is obtained by recursively unrolling the weight by previous updates.
\end{proof}
\end{theorem}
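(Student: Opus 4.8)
The plan is to prove Theorem~\ref{thm:unroll} by reducing the \sam update to a simple inhomogeneous linear recursion in $\rvtheta_k^{SAM}$ and then solving that recursion in closed form; the induction on $k$ is then routine.

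\textbf{Step 1 (one-step linear recursion).} Since $\mathcal{L}_\rmA$ is quadratic, its gradient $\nabla f(\rvtheta) = \rmA^T\rmA(\rvtheta - \rvtheta^*)$ is affine. I would first evaluate the gradient at the ascent point $\rvtheta_k^{SAM} + \rho\nabla f(\rvtheta_k^{SAM})$: because we are composing the affine map $\nabla f$ with itself, this produces exactly the term $\bigl(\rmA^T\rmA + \rho(\rmA^T\rmA)^2\bigr)(\rvtheta_k^{SAM} - \rvtheta^*)$. Substituting into the \sam step $\rvtheta_{k+1}^{SAM} = \rvtheta_k^{SAM} - \eta\nabla f(\cdot)$ and collecting the terms linear in $\rvtheta_k^{SAM}$ separately from the constant (teacher) terms gives $\rvtheta_{k+1}^{SAM} = \rmB\,\rvtheta_k^{SAM} + \eta\bigl(\rmA^T\rmA + \rho(\rmA^T\rmA)^2\bigr)\rvtheta^*$ with $\rmB = \mathbf{I} - \eta\rmA^T\rmA - \eta\rho(\rmA^T\rmA)^2$.

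\textbf{Step 2 (solve and induct).} This is a first-order linear recursion with constant matrix coefficient $\rmB$ and constant forcing term $\vc := \eta(\rmA^T\rmA + \rho(\rmA^T\rmA)^2)\rvtheta^*$, so unrolling it from $\rvtheta_0$ yields $\rvtheta_{k+1}^{SAM} = \rmB^{k+1}\rvtheta_0 + \bigl(\sum_{i=0}^{k}\rmB^i\bigr)\vc$, which is precisely the claimed formula. I would formalize this as an induction on $k$: the base case $k=0$ is exactly the single update derived in Step 1, and the inductive step applies that update to the assumed formula for $\rvtheta_k^{SAM}$ and reindexes the geometric sum. No commutativity bookkeeping is needed, since $\rmB$, $\rmA^T\rmA$ and $(\rmA^T\rmA)^2$ are all polynomials in $\rmA^T\rmA$ and hence commute.

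\textbf{Expected main obstacle.} There is no genuine analytic difficulty in this lemma; the only place to be careful is the algebra of Step 1, where one must correctly account for the $\rho(\rmA^T\rmA)^2$ contribution arising from plugging the \emph{perturbed} weight into the gradient rather than $\rvtheta_k^{SAM}$ itself, and must keep the teacher terms $\rvtheta^*$ separated from the $\rvtheta_k^{SAM}$ terms when forming $\rmB$. The real work of the paper lies downstream: this closed form is the input to the Wishart-type moment computations used to extract the explicit diversity expression and the sharpness bounds in Theorems~\ref{thm:varSAM} and~\ref{thm:sub}.
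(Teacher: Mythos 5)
Your proposal is correct and follows essentially the same route as the paper's proof: reduce the \sam step on the quadratic objective to the linear recursion $\rvtheta_{k+1}^{SAM} = \rmB\,\rvtheta_k^{SAM} + \eta(\rmA^T\rmA+\rho(\rmA^T\rmA)^2)\rvtheta^*$ and then unroll it from $\rvtheta_0$. The paper phrases the unrolling informally rather than as an explicit induction, but the argument is identical.
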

Theorem \ref{thm:unroll} offers a valuable tool to analyze the statistical behavior of the models optimized by \sam. However, one more ingredient is required to arrive at the interesting conclusions claimed in Section \ref{sec:theory1}, the random matrix theory.
Recall that the data matrix $\rmA \in \mathbb{R}^{n_{\text{tr}} \times d_{\text{in}}}$ is random with entries drawn from Gaussian $\mathcal{N}(0,\mathbf{I}/d_{\text{in}})$. As a result, entries in $\rmA^T\rmA$ follows the Wishart distribution and according to Corollary 3.3 in \citet{bishop2018introduction}, for $k \geq 1$,

\begin{equation}
    \label{eq: wishartM}
    \mathbb{E}[(\rmA^T\rmA)^k]= \left(\frac{n_{\text{tr}}}{d_\text{in}}\right)^k\sum\limits_{i=1}^{k}\left(\frac{d_\text{in}}{n_{\text{tr}}}\right)^{k-i}\mathcal{O}\left(1+1/d_\text{in}\right)N_{k,i}\mathbf{I},
\end{equation}
where $N_{k,i} = \frac{1}{i}\binom{k-1}{i-1}\binom{k}{i-1}$ is the Narayana number. With the help of this Corollary, we now prove a proposition on the expectation of $\rmB^k$.
\begin{prop}[Expectation of Wishart Moments]
\label{prop:wishart}
Let $i,j$ be non-negative integers, then
\begin{equation*}
    \mathbb{E}_{\rmA}[{\rmB^i(\rmA^T\rmA)^j}] = \phi(i,j)\mathbf{I},
\end{equation*}
where 
    \begin{align*}
        \phi(i,j) :=& \mathds{1}_{j=0}+\sum\limits_{k_1+k_2+k_3=i}\frac{i!}{k_1!k_2!k_3!}(-\eta)^{k_2+k_3}\rho^{k_3}\left(\frac{n_{\text{tr}}}{d_{\text{in}}}\right)^{m}\sum\limits_{l=1}^{m}\left(\frac{d_{\text{in}}}{n_{\text{tr}}}\right)^{m-l}\mathcal{O}(1+1/d_\text{in})N_{m,l},
    \end{align*}
and $m = k_2+2k_3+j$.
\end{prop}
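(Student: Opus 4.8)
The goal is to compute $\mathbb{E}_{\rmA}[\rmB^i (\rmA^T\rmA)^j]$ where $\rmB = \mathbf{I} - \eta \rmA^T\rmA - \eta\rho(\rmA^T\rmA)^2$. The key structural observation is that $\rmB$ is a polynomial in $\rmA^T\rmA$, so $\rmB^i(\rmA^T\rmA)^j$ is again a polynomial in $\rmA^T\rmA$, and since $\mathbb{E}[(\rmA^T\rmA)^k]$ is a scalar multiple of $\mathbf{I}$ by \eqref{eq: wishartM}, the whole expectation is a scalar multiple of $\mathbf{I}$. So the real task is just to track the scalar coefficient. First I would expand $\rmB^i = (\mathbf{I} - \eta\rmA^T\rmA - \eta\rho(\rmA^T\rmA)^2)^i$ using the multinomial theorem: writing $\rmM = \rmA^T\rmA$, we get
\begin{equation*}
\rmB^i = \sum_{k_1+k_2+k_3 = i} \frac{i!}{k_1!k_2!k_3!}\, \mathbf{I}^{k_1} (-\eta \rmM)^{k_2} (-\eta\rho \rmM^2)^{k_3} = \sum_{k_1+k_2+k_3 = i} \frac{i!}{k_1!k_2!k_3!} (-\eta)^{k_2+k_3}\rho^{k_3}\, \rmM^{k_2 + 2k_3}.
\end{equation*}
(One should note the terms commute since they are all polynomials in $\rmM$, so the multinomial expansion is valid without ordering subtleties.) Multiplying by $\rmM^j$ gives $\rmM^{k_2 + 2k_3 + j} = \rmM^m$ with $m := k_2 + 2k_3 + j$.

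Next I would take the expectation term by term. For each multinomial term we need $\mathbb{E}_{\rmA}[\rmM^m]$. There are two cases. If $m \geq 1$, \eqref{eq: wishartM} gives $\mathbb{E}[\rmM^m] = \left(\frac{n_{\text{tr}}}{d_{\text{in}}}\right)^m \sum_{l=1}^m \left(\frac{d_{\text{in}}}{n_{\text{tr}}}\right)^{m-l}\mathcal{O}(1+1/d_{\text{in}})N_{m,l}\,\mathbf{I}$, which is exactly the sum appearing in the definition of $\phi(i,j)$. If $m = 0$ — which happens precisely when $k_2 = k_3 = j = 0$, i.e., when $j = 0$ and the term is $\mathbf{I}^i$ — then $\mathbb{E}[\rmM^0] = \mathbf{I}$; this is accounted for by the indicator $\mathds{1}_{j=0}$ added outside the sum. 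I would handle this edge case carefully: when $j = 0$, the multinomial term with $k_1 = i$, $k_2 = k_3 = 0$ contributes $\mathbf{I}$ with coefficient $1$, and the convention in the paper is to pull this $m=0$ contribution out as the $\mathds{1}_{j=0}$ term and restrict the Wishart formula \eqref{eq: wishartM} to the $m \geq 1$ terms (where it is valid). Collecting everything, $\mathbb{E}_{\rmA}[\rmB^i\rmM^j] = \phi(i,j)\mathbf{I}$ with $\phi(i,j)$ exactly as stated.

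The main obstacle — really the only subtle point — is the bookkeeping of the $m=0$ case and confirming that the $\mathds{1}_{j=0}$ term in the definition is not double-counted: when $j \geq 1$ every term has $m \geq 1$ so the indicator vanishes and the formula is a clean sum of Wishart moments; when $j = 0$, one must verify that the term $k_2 = k_3 = 0$ (giving $m=0$) is the one extracted by $\mathds{1}_{j=0}$, while all other terms in the $k_1+k_2+k_3 = i$ sum have $k_2 + k_3 \geq 1$ hence $m \geq 1$ and are correctly given by \eqref{eq: wishartM}. A minor additional check is that the $\mathcal{O}(1+1/d_{\text{in}})$ factor from \eqref{eq: wishartM} is a scalar depending on $m$ and $l$ (not on the matrix), so it passes through the sum without issue. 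Everything else is a routine application of the multinomial theorem and linearity of expectation.
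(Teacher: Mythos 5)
Your proposal is correct and follows essentially the same route as the paper's proof: a multinomial expansion of $\rmB^i$ in powers of $\rmA^T\rmA$, term-by-term application of the Wishart moment formula \eqref{eq: wishartM}, and extraction of the $m=0$ term as the $\mathds{1}_{j=0}$ indicator. Your explicit bookkeeping of the $m=0$ edge case is slightly more careful than the paper's one-line remark, but the argument is the same.
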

\begin{proof}
    By Multinomial Theorem,
    \begin{align*}
        \rmB^i(\rmA^T\rmA)^j &=  \left(\sum\limits_{k_1+k_2+k_3=i}\frac{i!}{k_1!k_2!k_3!} \mathbf{I}^{k_1}(-\eta \rmA^T\rmA)^{k_2}(-\eta\rho (\rmA^T\rmA)^2)^{k_3}\right)(\rmA^T\rmA)^j\\
        &=\sum\limits_{k_1+k_2+k_3=i}\frac{i!}{k_1!k_2!k_3!}(-\eta)^{k_2+k_3}\rho^{k_3} (\rmA^T\rmA)^{k_2+2k_3+j}.
    \end{align*}
    Let $m = k_2+2k_3+j$ and taking the expectation with \eqref{eq: wishartM} gives

    \begin{align*}
      \mathbb{E}_{\rmA}[{\rmB^i(\rmA^T\rmA)^j}]&= \sum\limits_{k_1+k_2+k_3=i}\frac{i!}{k_1!k_2!k_3!}(-\eta)^{k_2+k_3}\rho^{k_3} \E_\rmA[(\rmA^T\rmA)^{k_2+2k_3+j}]\\
        &=\sum\limits_{k_1+k_2+k_3=i}\frac{i!}{k_1!k_2!k_3!}(-\eta)^{k_2+k_3}\rho^{k_3} \left(\frac{n_{\text{tr}}}{d_{\text{in}}}\right)^{m}\sum\limits_{l=1}^{m}\left(\frac{d_{\text{in}}}{n_{\text{tr}}}\right)^{m-l}\mathcal{O}(1+1/d_\text{in})N_{m,l}\mathbf{I}.
    \end{align*}
    If $j=0$, then there is a case when $k_2=k_3=0$, and the expectation of $(\rmA^T\rmA)^0$ simply becomes $\mathbf{I}$. Therefore, 
    \begin{align*}
        \mathbb{E}_{\rmA}[{\rmB^i(\rmA^T\rmA)^j}]
        &=\mathds{1}_{j=0}\mathbf{I}+\sum\limits_{k_1+k_2+k_3=i}\frac{i!}{k_1!k_2!k_3!}(-\eta)^{k_2+k_3}\rho^{k_3} \left(\frac{n_{\text{tr}}}{d_{\text{in}}}\right)^{m}\sum\limits_{l=1}^{m}\left(\frac{d_{\text{in}}}{n_{\text{tr}}}\right)^{m-l} \mathcal{O}(1+1/d_\text{in})N_{m,l}\mathbf{I}\\
        &=\phi(i,j)\mathbf{I}.
    \end{align*}
\end{proof}
\subsection{Proof of Theorem \ref{thm:varSAM}}
\label{apd:thm1}
In this subsection, we show a proof for Theorem \ref{thm:varSAM}.

\begin{proof}
    Apply Singular Value Decomposition (SVD) to obtain $\rmA =\rmV\Sigma\rmU^T$ and $\rmA^T\rmA = \rmU\Sigma^2\rmU^T$. Let $\rmD = \Sigma^2$. By Theorem \ref{thm:unroll},
    \begin{align*}
        \rvtheta_k^{SAM} &= \eta \sum_{i=0}^{k-1} \rmB^i (\rmA^T\rmA+ \rho (\rmA^T\rmA)^2)\rvtheta^* + \rmB^{k} \rvtheta_0\\
        &=\eta\sum_{i=0}^{k} \left(\mathbf{I}-\eta\rmA^T\rmA-\eta\rho(\rmA^T\rmA)^2\right)^i \left(\rmA^T\rmA+ \rho (\rmA^T\rmA)^2\right)\rvtheta^*+ \rmB^{k} \rvtheta_0\\
        &=\eta\sum_{i=0}^{k-1} \rmU(\mathbf{I}-\eta\rmD-\eta\rho\rmD^2)^i\rmU^T \rmU(\rmD+ \rho \rmD^2)\rmU^T\rvtheta^*+ \rmB^{k} \rvtheta_0\\
        &=\eta\rmU \cdot \textbf{diag} \left(\left\{{\sum_{i=0}^{k-1} (1-\eta d_j-\eta\rho d_j^2)^i} (d_j+\rho d_j^2)\right\}_{j=1}^{d_\text{in}}\right)\rmU^T\rvtheta^*+ \rmB^{k} \rvtheta_0\\
        &=\eta\rmU \cdot \textbf{diag} \left(\left\{\frac{1-(1-\eta d_j-\eta\rho d_j^2)^k}{\eta d_j + \eta \rho d_j^2} (d_j+\rho d_j^2)\right\}_{j=1}^{d_\text{in}}\right)\rmU^T\rvtheta^*+ \rmB^{k} \rvtheta_0\\
        &=\rmU \left(\mathbf{I}-(\mathbf{I}-\eta \rmD -\eta \rho \rmD^2)^k\right)\rmU^T\rvtheta^*+ \left(\mathbf{I}-\eta \rmA^T\rmA -\eta \rho (\rmA^T\rmA)^2\right)^k \rvtheta_0\\
        &=\rvtheta^*+\left(\mathbf{I}-\eta \rmA^T\rmA -\eta \rho (\rmA^T\rmA)^2\right)^k(\rvtheta_0-\rvtheta^*).
    \end{align*}
    As a result, $\E_{\rvtheta_0}[\rvtheta_k^{SAM}] = \rvtheta^*-\left(\mathbf{I}-\eta \rmA^T\rmA -\eta \rho (\rmA^T\rmA)^2\right)^k\rvtheta^* =\rvtheta^*-\rmB^k\rvtheta^*$. By definition,
    \begin{align*}
        n_{\text{te}}\text{Bias}^2(\rvtheta_k^{SAM}) 
        &=\E_{\rmA,\rmT}[\sum\limits_{i=1}^p
        (\E_{\rvtheta_0}[f(\rvtheta_k^{SAM};\rmT_i)]-y^{(\rmT)}_i)^2]\\
        &= \E_{\rmA,\rmT}[(\E_{\rvtheta_0}[\rvtheta_k^{SAM}] - \rvtheta^*)^T \rmT^T\rmT (\E_{\rvtheta_0}[\rvtheta_k^{SAM}] - \rvtheta^*)]\\
        &= \E_{\rmA}[(\E_{\rvtheta_0}[\rvtheta_k^{SAM}] - \rvtheta^*)^T \E_\rmT[\rmT^T\rmT] (\E_{\rvtheta_0}[\rvtheta_k^{SAM}] - \rvtheta^*)]\\
        &= \frac{n_{\text{te}}}{d_\text{in}} \E_{\rmA}[(\rvtheta^*)^T \rmB^{2k}\rvtheta^*]\\
        &=\frac{n_{\text{te}}}{d_\text{in}}\phi(2k,0) \|\rvtheta^*\|_2^2,\\
        n_{\text{te}}\text{Error}(\rvtheta^{SAM}_k) &= \E_{\rmA,\rmT,\theta_0}[\sum\limits_{i=1}^p(y^{(\rmT)}_i - f(\rvtheta_k^{SAM};\rmT_i))^2]\\
        &= \E_{\rmA,\rmT,\theta_0}[(\rvtheta^* - \rvtheta^{SAM}_k)^T\rmT^T\rmT (\rvtheta^*-\rvtheta^{SAM}_k)]\\
        &= \E_{\rmA,\rmT,\theta_0}[(\rvtheta^* - \rvtheta_0)^T\rmB^k\rmT^T\rmT \rmB^k(\rvtheta^*-\rvtheta_0)]\\
        &=\E_{\rmA,\rmT,\theta_0}[(\rvtheta^*) ^T\rmB^k\rmT^T\rmT\rmB^k \rvtheta^*] + \E_{\rmA,\rmT,\theta_0}[\rvtheta_0 ^T\rmB^k\rmT^T\rmT\rmB^k \rvtheta_0]\\
        &=\frac{n_{\text{te}}}{d_{\text{in}}}\phi(2k,0)\|\rvtheta^*\|^2_2 + n_{\text{te}}\phi(2k,0)\sigma^2,
    \end{align*}
    Since $\E_\rmT[\rmT^T\rmT] = \frac{n_{\text{te}}}{d_\text{in}}\mathbf{I}$ and $\mathbb{E}[\rvtheta_0 \rvtheta_0^T] = \sigma^2\mathbf{I}$.
    Hence, 
    \begin{equation*}
    \label{dif_result}
        \mathbb{D}(\rvtheta_k^{SAM}) = \text{Var}\left(f(\rvtheta_k^{SAM};\rmT)\right) =\frac{1}{n_{\text{te}}}\left(n_{\text{te}}\text{Error}(\rvtheta_k^{SAM}) -  n_{\text{te}}\text{Bias}^2(\rvtheta_k^{SAM})\right)= \phi(2k,0)\sigma^2.
    \end{equation*}

    Recall that given a perturbation radius $\rho_0$, the sharpness is defined as 
    \begin{equation*}
        \kappa(\rvtheta_k) = \mathbb{E}_{A}[\max_{\|\rvvarepsilon\|_2 \leq \rho_0} f \left( \mathbb{E}_{\rvtheta _0} \left[ \rvtheta_k \right] + \rvvarepsilon \right) - f \left( \mathbb{E}_{\rvtheta_0} \left[ \rvtheta_k \right] \right)].
    \end{equation*}
    We first compute
    \begin{align}
        f \left( \E_{\rvtheta_0} \left[ \rvtheta_k^{SAM} \right] + \rvvarepsilon ;\rmA\right) &= \frac{1}{2}(\mathbb{E}_{\rvtheta_0} \left[ \rvtheta_k^{SAM} \right] + \rvvarepsilon-\rvtheta^*)^T{A}^T {A} (\mathbb{E}_{\rvtheta_0} \left[ \rvtheta_k^{SAM} \right] + \rvvarepsilon-\rvtheta^*)\nonumber\\
        &= \frac{1}{2}( \rvvarepsilon -\rmB^k\rvtheta^*)^T\rmA^T\rmA (\rvvarepsilon -\rmB^k\rvtheta^*)\nonumber\\
        &=\frac{1}{2}\rvvarepsilon^T \rmA^T\rmA \rvvarepsilon - \rvvarepsilon^T \rmB^k\rmA^T\rmA\rvtheta^*+\frac{1}{2}(\rvtheta^*)^T\rmB^{2k}\rmA^T\rmA\rvtheta^*.\label{eq:f(ew+e)}
    \end{align}
    Similarly, 
    \begin{equation}
        \label{eq:f(ew)}
        f \left( \E_{\rvtheta_0} \left[ \rvtheta_k^{SAM} \right] ;\rmA\right) = \frac{1}{2} (\rvtheta^*)^T \rmB^{2k}\rmA^T\rmA\rvtheta^*.
    \end{equation}
    Let $\lambda_{min}$ be the least eigenvalue of $\rmA^T\rmA$. By subtracting \eqref{eq:f(ew+e)} with \eqref{eq:f(ew)}, we have
    \begin{align*}
        \kappa_k^{SAM} &= \mathbb{E}_{\rmA}[\max_{\|\rvvarepsilon\|_2 \leq \rho_0} \frac{1}{2}\rvvarepsilon^T\rmA^T\rmA\rvvarepsilon - \rvvarepsilon^T \rmB^k \rmA^T\rmA\rvtheta^*]\\
        &\geq \mathbb{E}_{\rmA}[\max_{\|\rvvarepsilon\|_2 = \rho_0} \frac{1}{2}\lambda_{min}\|\rmU^T\rvvarepsilon\|_2^2 - \rvvarepsilon^T \rmB^k \rmA^T\rmA\rvtheta^*]\\
        &\geq \mathbb{E}_{\rmA}[\max_{\substack{\|\rvvarepsilon\|_2 = \rho_0 \\ {\rvvarepsilon} = {\rmU}{\rvv}}} \frac{1}{2}\lambda_{min}\|\rmU^T\rvvarepsilon\|_2^2 - \rvvarepsilon^T \rmB^k \rmA^T\rmA\rvtheta^*]\\
        &= \mathbb{E}_{\rmA}[\max_{\|\rvv\|_2=\rho_0} \frac{1}{2}\lambda_{min}\|\rvv\|_2^2 - \min_{\|\rvvarepsilon\|_2=\rho_0}\rvvarepsilon^T \rmB^k \rmA^T\rmA\rvtheta^*]\\
        &= \mathbb{E}_{\rmA}[\frac{1}{2}\lambda_{min}\rho_0^2 + \rho_0\|\rmB^k \rmA^T\rmA\rvtheta^*\|_2].
    \end{align*}

The smallest singular value $\lambda_{min}$ of a random $n \times d_{\text{in}}$ matrix $\rmA$ can be bounded by the following inequality on the smallest singular value $\sigma_{min}(A)$ by \citet{Vershynin_2018}, assuming $n_{\text{tr}} \geq d_{\text{in}}$, then almost surely

\begin{equation*}
    \mathbb{E}_\rmA[\sigma_{min}(\rmA)] \geq \sqrt{\frac{n_{\text{tr}}}{d_{\text{in}}}} -1.
\end{equation*}
Therefore, $ \mathbb{E}_A[\lambda_{min}] \geq  \mathbb{E}_A[\sigma_{min}(A)]^2 \geq \left(\sqrt{\frac{n_{\text{tr}}}{d_{\text{in}}}} -1\right)^2$. Now we show a lower bound on $\E_\rmA[\rho_0\|\rmB^k\rmA^T\rmA\rvtheta^*\|_2]$. By \citet{gaox2019bounds}, the Jensen gap $(\E[Z])^{1/2} - \E[(Z)^{1/2}]$ is upper bounded by $\frac{\text{Var}(Z)}{2}$ when $Z$ is non-negative and $\E[Z] = 1$. Notice that 
\begin{align*}
    \E_\rmA[ \rho_0 \|\rmB^k\rmA^T\rmA\rvtheta^*\|_2]
    &= \rho_0 \E_\rmA[\left((\rvtheta^*)^T\rmB^{2k}(\rmA^T\rmA)^2\rvtheta^*\right)^{1/2}],
\end{align*}
and we let $Z = (\rvtheta^*)^T\rmB^{2k}(\rmA^T\rmA)^2\rvtheta^*$. Then $\E_\rmA[Z] =\phi(2k,2)\|\rvtheta^*\|^2_2$ and 
\begin{equation*}
    \text{Var}[Z] = \left(\phi(4k,4)-\phi(2k,2)^2\right)\|\rvtheta^*\|^2_2.
\end{equation*}
By normalizing $Z$ and applying the Jensen gap upperbound, we have
\begin{equation*}
    \E_\rmA [\rho_0 \|\rmB^k\rmA^T\rmA\rvtheta^*\|_2]
    \geq \rho_0\sqrt{\phi(2k,2)} \|\rvtheta^*\|^2_2 - \frac{\phi(4k,4)-\phi(2k,2)^2}{2\phi(2k,2)^{3/2}\|\rvtheta^*\|_2}.
\end{equation*}
As a result,
\begin{equation*}
    \kappa_k^{SAM} \geq \frac{\rho_0^2}{2}\left(\sqrt{\frac{n_{\text{tr}}}{d_{\text{in}}}} - 1\right)^2+\rho_0\sqrt{\phi(2k, 2)}\|\rvtheta^*\|_2 - \frac{\phi(4k,4)-\phi(2k,2)^2}{2\phi(2k,2)^{3/2}\|\rvtheta^*\|_2}.
\end{equation*}
The derivation of the upper bound follows from a similar proof, ignoring the Jensen gap.

\end{proof}

\subsection{Proof of Theorem \ref{thm:sub}}
\label{apd:proof2}
Below we show a proof of Theorem \ref{thm:sub}.

\begin{proof}
    We apply SVD to $\rmA_s$ to obtain $\rmA_s = \rmV_s\Sigma_s\rmU_s^T$ and $\rmA_s^T\rmA = \rmU_s\Sigma_s^2\rmU_s^T$. Let $\rmD_s = \Sigma_s^2$ and $\rmB_s = \mathbf{I}-\eta\rmA_s^T\rmA_s-\eta\rho(\rmA_s^T\rmA_s)^2$. By Theorem \ref{thm:unroll} and a similar derivation in the proof of Theorem \ref{thm:varSAM},
    \begin{align*}
        \rvtheta_k^{SharpBal} &= \eta \sum_{j=0}^{k-1} \rmB_s^j \left(\rmA_s^T\rmA_s+ \rho (\rmA_s^T\rmA_s)^2\right)\rvtheta^* + \rmB_s^{k} \rvtheta_0\\
        &=\rvtheta^*+\left(\mathbf{I}-\eta \rmA_s^T\rmA_s -\eta \rho (\rmA_s^T\rmA_s)^2\right)^k(\rvtheta_0-\rvtheta^*).
    \end{align*}
    As a result, $\E_{\rvtheta_0,s}[\rvtheta^{Sharpbal}_k] = \E_{s}[\rvtheta^*-\rmB_s^k\rvtheta^*] = \rvtheta^*-\frac{1}{S}\sum\limits_{s=1}^S\rmB_s^k\rvtheta^*$.
    
    Applying Proposition \ref{prop:wishart}, we have 
    \begin{equation*}
        \E_\rmA[\rmB_s^i(\rmA_s^T\rmA_s)^j] = \phi'(i,j),
    \end{equation*}
    where 
    \begin{equation*}
        \phi'(i,j) = \mathds{1}_{j=0}+\sum\limits_{k_1+k_2+k_3=i}\frac{i!}{k_1!k_2!k_3!}(-\eta)^{k_2+k_3}\rho^{k_3}\left(\frac{n_{\text{tr}}}{Sd_{\text{in}}}\right)^{m}\sum\limits_{l=1}^{m}\left(\frac{Sd_{\text{in}}}{n_{\text{tr}}}\right)^{m-l}N_{m,l}.
    \end{equation*}
    Then,
    \begin{align*}
    n_{\text{te}}\text{Bias}^2(\rvtheta_k^{SAM}) &= \E_{\rmA,\rmT}[\left(\E_{\rvtheta_0,s}[\rvtheta^{Sharpbal}_k]-\rvtheta^*\right)^T\rmT^T\rmT\left(\E_{\rvtheta_0,s}[\rvtheta^{Sharpbal}_k]-\rvtheta^*\right)]\\
    &=\frac{n_{\text{te}}}{d_\text{in}}\E_{\rmA}[(-\frac{1}{S}\sum\limits_{s=1}^S\rmB_s^k\rvtheta^*)^T(-\frac{1}{S}\sum\limits_{s'=1}^S\rmB_{s'}^k\rvtheta^*)]\\
    &= \frac{n_{\text{te}}}{d_\text{in}S^2}\E_{\rmA}[\sum\limits_{s=1}^S\rmB_s^k\rvtheta^*\sum\limits_{s'=1}^S\rmB_{s'}^k]\|\rvtheta^*\|^2_2\\
    &= \frac{n_{\text{te}}}{d_\text{in}S}\left(\phi'(2k,0)+(s-1)\phi'(k,0)^2\right)\|\rvtheta^*\|^2_2.
\end{align*}
The last equality is the result of applying $\E_\rmA[B_s^i] = \phi'(i,0)$ with different combinations of $\rmB_s$, $\rmB_{s'}$, counting multiplicity. Similarly,
\begin{align*}
    n_{\text{te}}\text{Error}(\rvtheta_k^{Sharpbal}) &= \E_{\rmA,\rmT,\rvtheta_0,s}[(\rvtheta^*) ^T\rmB_s^k\rmT^T\rmT\rmB_s^k \rvtheta^*]+\E_{\rmA,\rmT,\rvtheta_0,s}[\rvtheta_0 ^T\rmB_s^k\rmT^T\rmT\rmB_s^k \rvtheta_0]\\
    &=\frac{n_{\text{te}}}{d_\text{in}}\phi'(2k,0)\|\rvtheta^*\|^2_2 + n_{\text{te}}\phi'(2k,0)\sigma^2.
\end{align*}

 Therefore, 
    \begin{align*}
        \text{Var}\left(f(\rvtheta_k^{SharpBal};\rmT)\right) =&\frac{1}{n_{\text{te}}}\left(n_{\text{te}}\text{Error}(\rvtheta_k^{SharpBal}) -  n_{\text{te}}\text{Bias}^2(\rvtheta_k^{SharpBal})\right)\\
        =&\phi'(2k,0)\sigma^2 + \frac{S-1}{d_\text{in}S}\left(\phi'(2k,0)-\phi'(k,0)^2\right)\|\rvtheta^*\|^2_2.
    \end{align*}
When the model is trained on the submatrix, the sharpness of model $\rvtheta_k^{SharpBal}$ is defined as 
    \begin{equation*}
        \kappa_k^{SharpBal} = \mathbb{E}_{\rmA}[\max_{\|\rvvarepsilon\|_2 \leq \rho_0} f \left( \mathbb{E}_{\rvtheta_0, s} \left[ \rvtheta_k^{SharpBal} \right] + \rvvarepsilon;\rmA \right) - f \left( \mathbb{E}_{\rvtheta_0, s} \left[ \rvtheta_k^{SharpBal} \right] ;\rmA\right)].
    \end{equation*}
From a similar analysis of the proof for Theorem \ref{thm:varSAM},
    \begin{equation*}
        \kappa_k^{SharpBal} \leq \frac{\rho_0^2}{2}\left(\sqrt{\frac{n_{\text{tr}}}{d_{\text{in}}}} + 1\right)^2+ \frac{\rho_0}{S}\mathbb{E}_{\rmA}[\| \sum\limits_{s=1}^{S}\rmB_s^k \rmA^T\rmA\rvtheta^*\|_2],
    \end{equation*}
and with $r = \frac{n_{\text{tr}}}{Sd_\text{in}}$,
\begin{align*}
    \mathbb{E}_{\rmA}[\| \sum\limits_{s=1}^{S}\rmB_s^k \rmA^T\rmA\rvtheta^*\|_2] =&\E_{\rmA}[((\rvtheta^*)^T \rmA^T\rmA\sum\limits_{s=1}^{S}\rmB_s^k \sum\limits_{s'=1}^{S}\rmB_s'^k\rmA^T\rmA\rvtheta^*)^{1/2}]\\
    \leq&\left((\rvtheta^*)^T\E_{\rmA}[\sum\limits_{j=1}^S\rmA_j^T\rmA_j \sum\limits_{s=1}^{S}\rmB_s^k \sum\limits_{s'=1}^{S}\rmB_s'^k \sum\limits_{l=1}^S\rmA_l^T\rmA_l ] \rvtheta^*\right)^{1/2}\\
    =&(S\phi'(2k,2)+2rS(S-1)\phi'(2k,1)+2S(S-1)\phi'(k,2)\phi'(k,0)\\
    &+r(1+r)S(S-1)\phi'(2k,0)+2S(S-1)\phi'(k,1)\phi'(k,1)\\
    &+\frac{3}{2}r(1+r)S(S-1)(S-2)\phi'(k,0)^2\\
    &+\frac{3}{2}r^2S(S-1)(S-2)\phi'(2k,0)\\
    &+3rS(S-1)(S-2) \phi'(k,1) \phi'(k,0)\\
    &+r^2S(S-1)(S-2)(S-3)\phi'(k,0)^2)^{1/2}\|\rvtheta^*\|_2.
\end{align*}

The last equality is the result of applying $\E_\rmA[B_s^i(\rmA_s^T\rmA_s)^j] = \phi'(i,j)$ with different combinations of $\rmB_s$, $\rmB_{s'}$, $\rmA_j^T\rmA_j$, and $\rmA_l^T\rmA_l$, counting multiplicity and the fact that $\E_\rmA[(\rmA_s^T\rmA_s)^2] = r(1+r)\mathbf{I}$. In conclusion,

\begin{equation*}
        \kappa_k^{SharpBal} \leq \frac{\rho_0^2}{2}\left(\sqrt{\frac{n_{\text{tr}}}{d_{\text{in}}}} + 1\right)^2+ \frac{\rho_0}{S}\sqrt{C}\|\rvtheta^*\|_2,
\end{equation*}
where
\begin{align*}
        C =& S\phi'(2k, 2)+2rS(S-1)\phi'(2k, 1) + 2S(S-1)  \phi'(k, 2)  \phi'(k, 0)\\
        & +r(1+r) S(S-1)   \phi'(2k, 0)+ 2S(S-1) \phi'(k, 1) \phi'(k, 1) \\ 
        &+ \frac{3}{2}r(1+r)S(S-1)(S-2)  \phi'(k, 0)^2+ \frac{3}{2}r^2S(S-1)(S-2)\phi'(2k, 0)\\
        &+ 3rS(S-1)(S-2)\phi'(k, 0)\phi'(k, 1) + r^2S(S-1)(S-2)(S-3)\phi'(k, 0)^2.
\end{align*}

\end{proof}
The claims in Theorem \ref{thm:sub} is further supported by the experimental validations with results presented in Figure \ref{fig:apx_verify}. 

\begin{figure}
\centering
\begin{tabular}{cc}
\begin{subfigure}{0.42\linewidth}
\includegraphics[width=\linewidth,keepaspectratio]{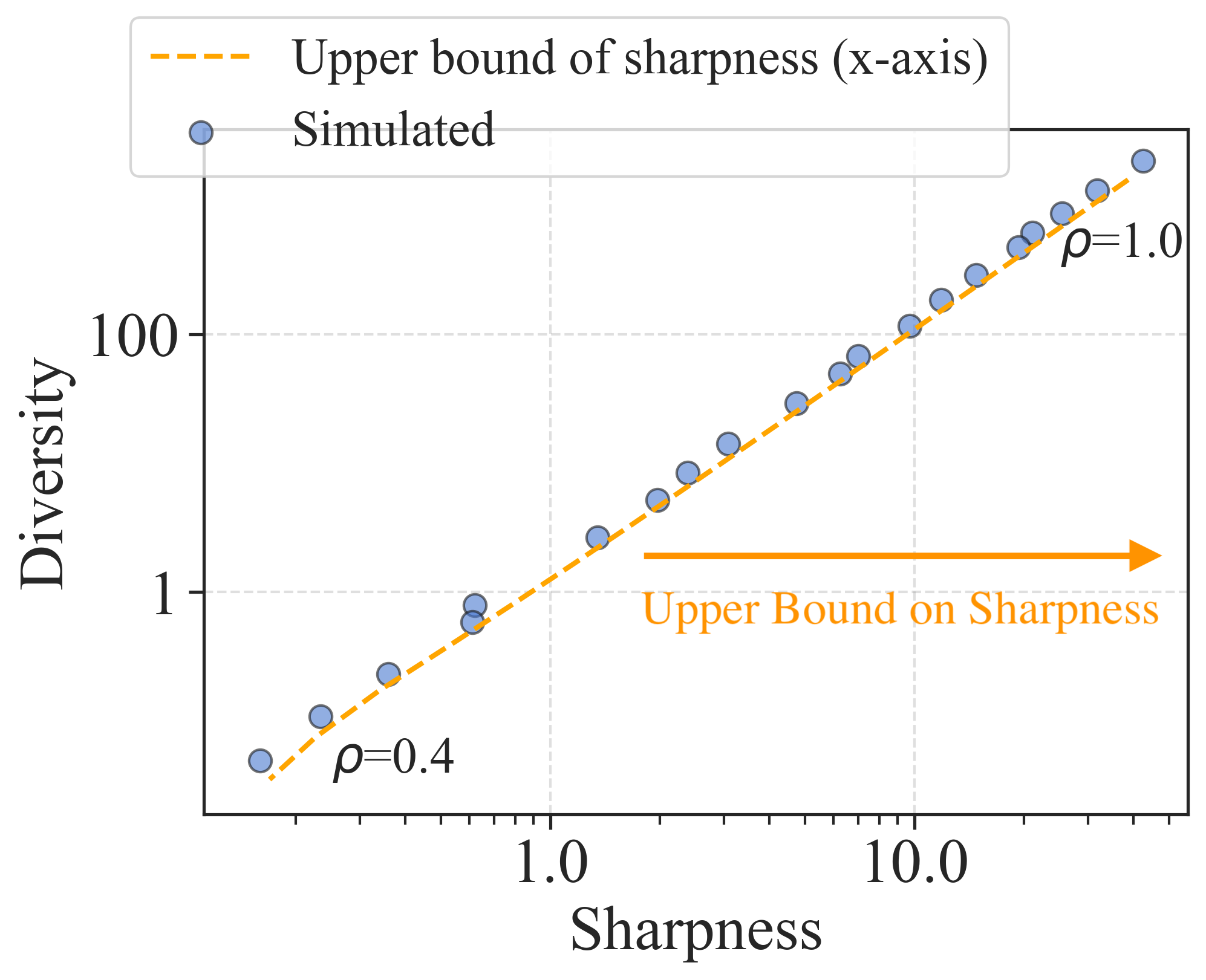}
\caption{Varying perturbation radius $\rho$ \label{subfig: theory_vary_rho}}
\end{subfigure}
&
\begin{subfigure}{0.42\linewidth}
\includegraphics[width=\linewidth,keepaspectratio]{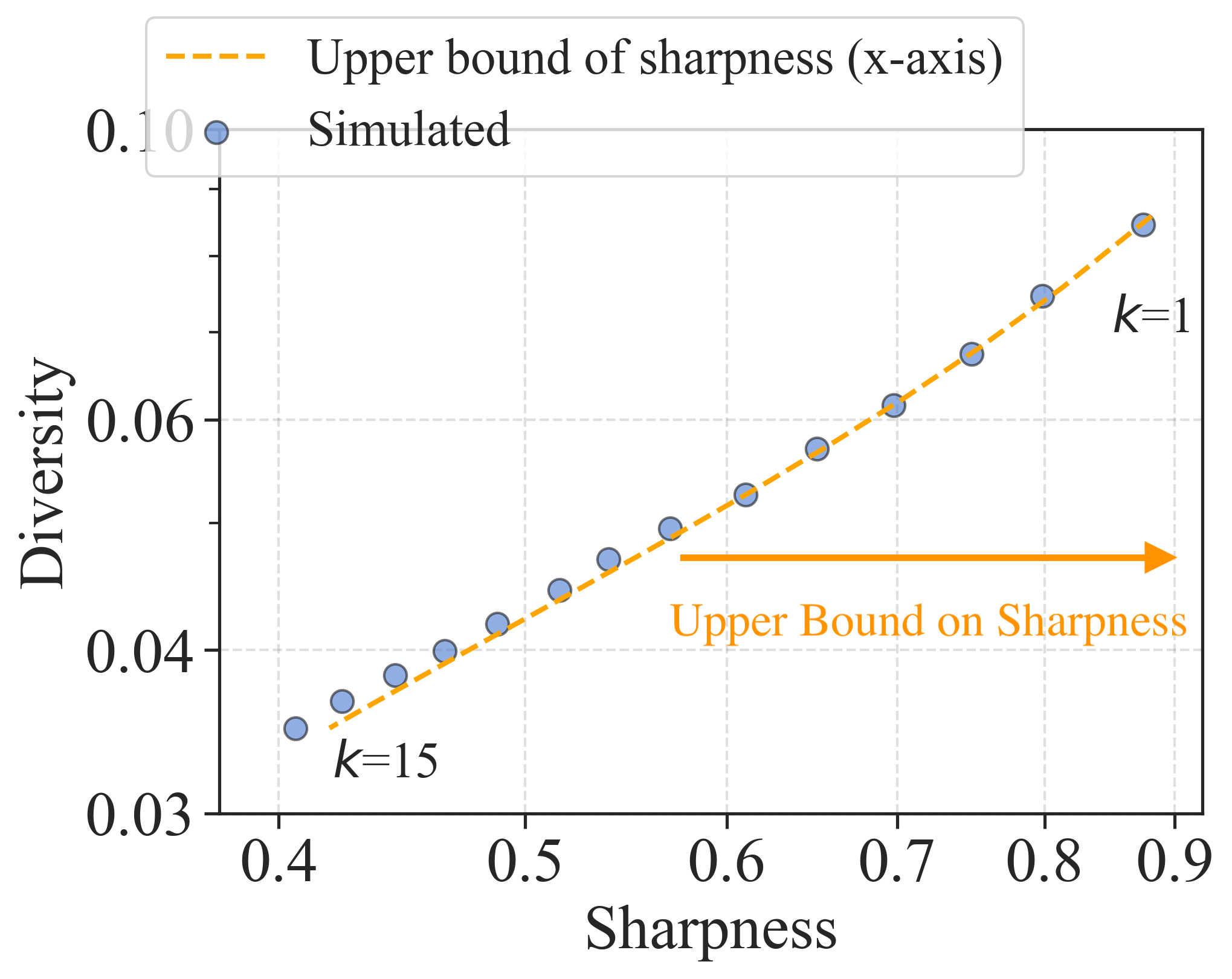}
\caption{Varying number of training iterations $k$\label{subfig: theory_vary_k}}
\end{subfigure}
\end{tabular}
\caption{
\textbf{(Theoretical vs. Simulated sharpness-diversity trade-off in \ourmethod)} This figure illustrates the relationship between sharpness(upper bound) and diversity as predicted by Theorem \ref{thm:sub} and as observed in simulations under different configurations. \textbf{(a)} validates our theoretical results by varying the perturbation radius $\rho$ from $1.0$ to $0.4$. \textbf{(b)} validates the derivation by varying number of iterations $k$ from $1$ to $15$. These results demonstrate the soundness of our derivation across a range of parameters.
 \vspace{-3mm}
\label{fig:apx_verify}}
\end{figure}

\subsection{Empirical Verification of Theorem \ref{thm:varSAM} and \ref{thm:sub}}
\label{apd:verify}
\begin{figure}[!ht]
%2,0.1
\centering
\begin{subfigure}{0.32\linewidth}
\includegraphics[width=\linewidth,keepaspectratio]{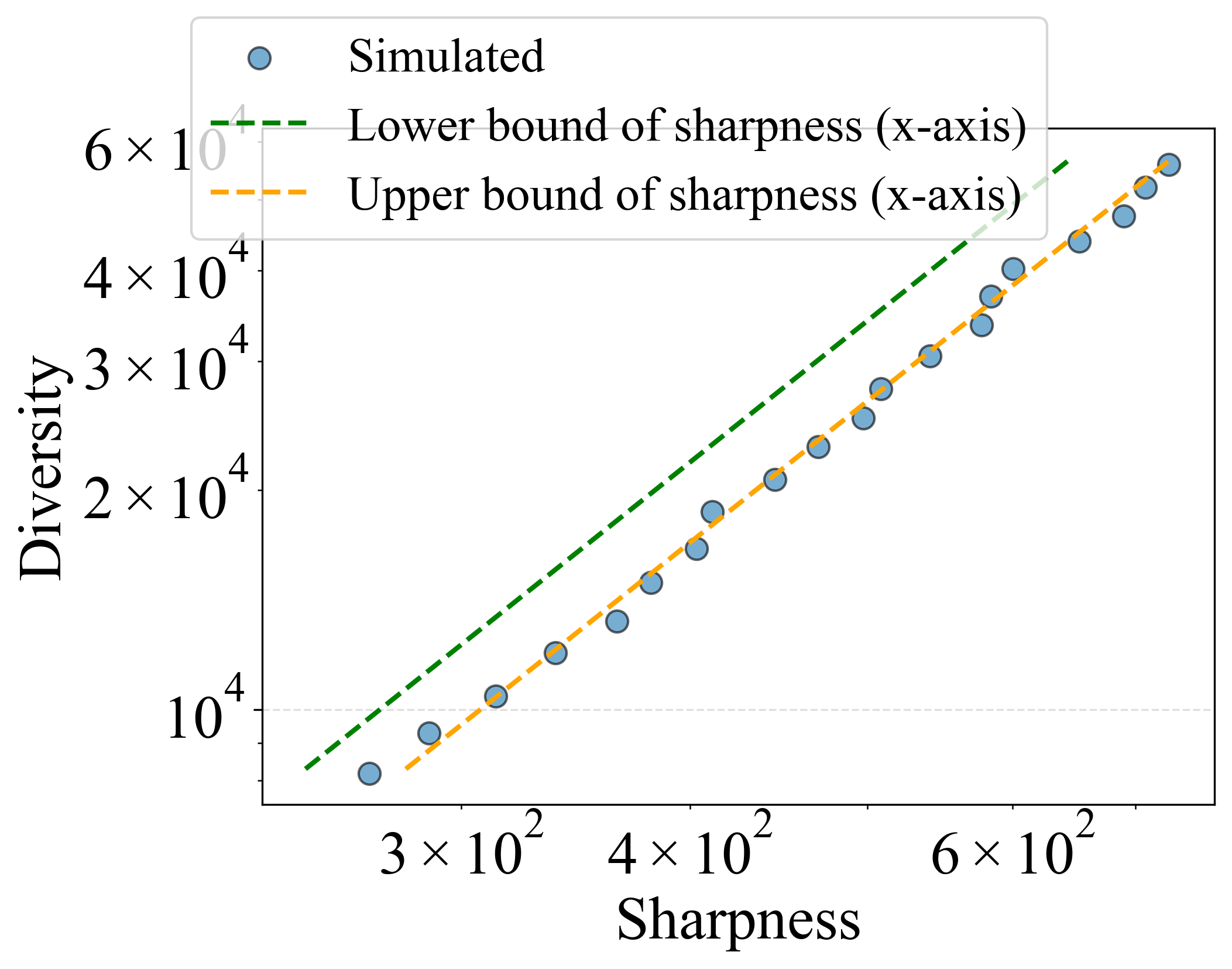} 
\caption{\footnotesize $k=2,\eta =0.1$} 
\end{subfigure}  
%2,0.05
\centering
\begin{subfigure}{0.32\linewidth}
\includegraphics[width=\linewidth,keepaspectratio]{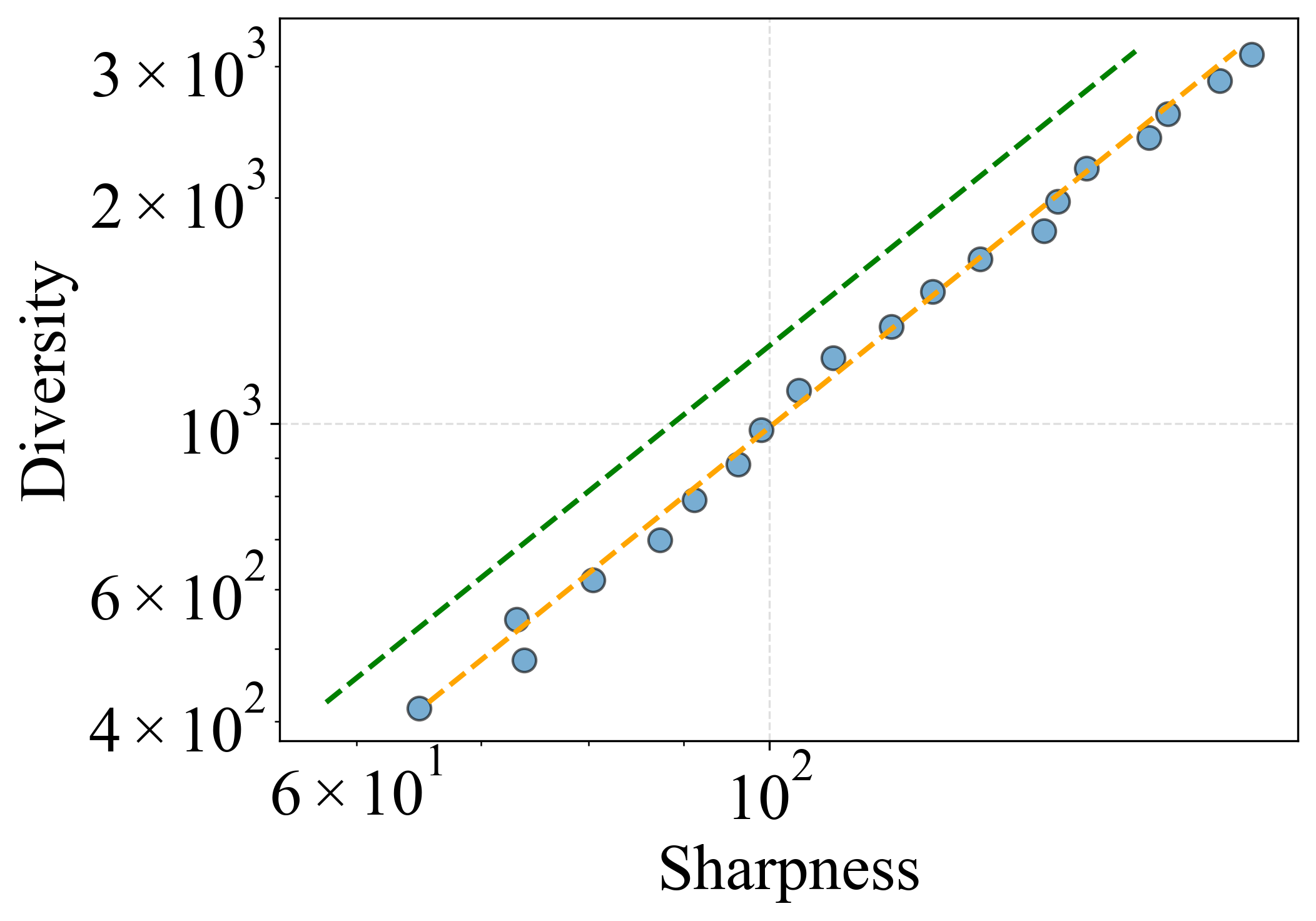} 
\caption{\footnotesize $k=2,\eta =0.05$} 
\end{subfigure}  
%2,0.01
\centering
\begin{subfigure}{0.32\linewidth}
\includegraphics[width=\linewidth,keepaspectratio]{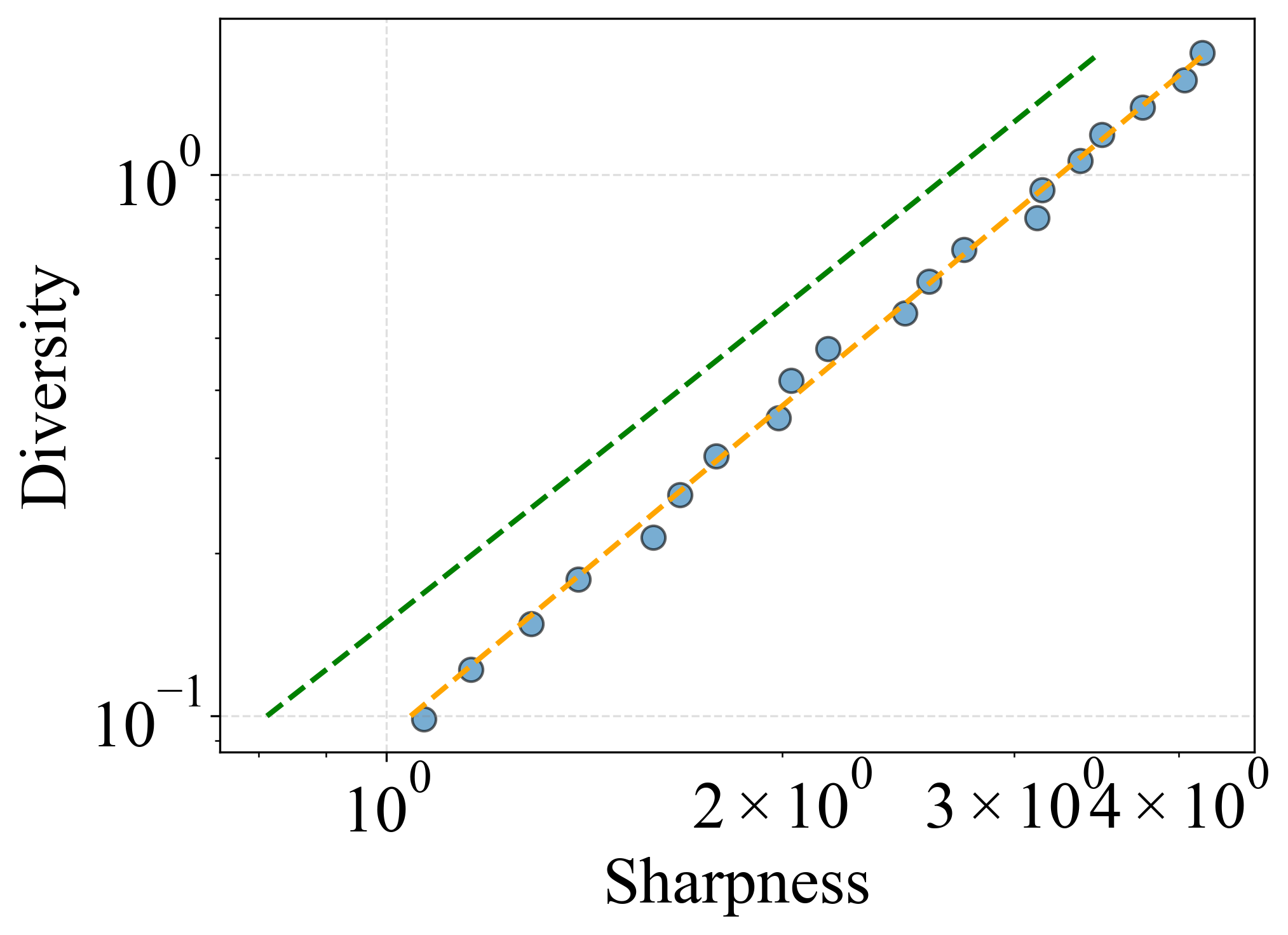} 
\caption{\footnotesize $k=2,\eta =0.01$} 
\end{subfigure}  
%4,0.1
\centering
\begin{subfigure}{0.32\linewidth}
\includegraphics[width=\linewidth,keepaspectratio]{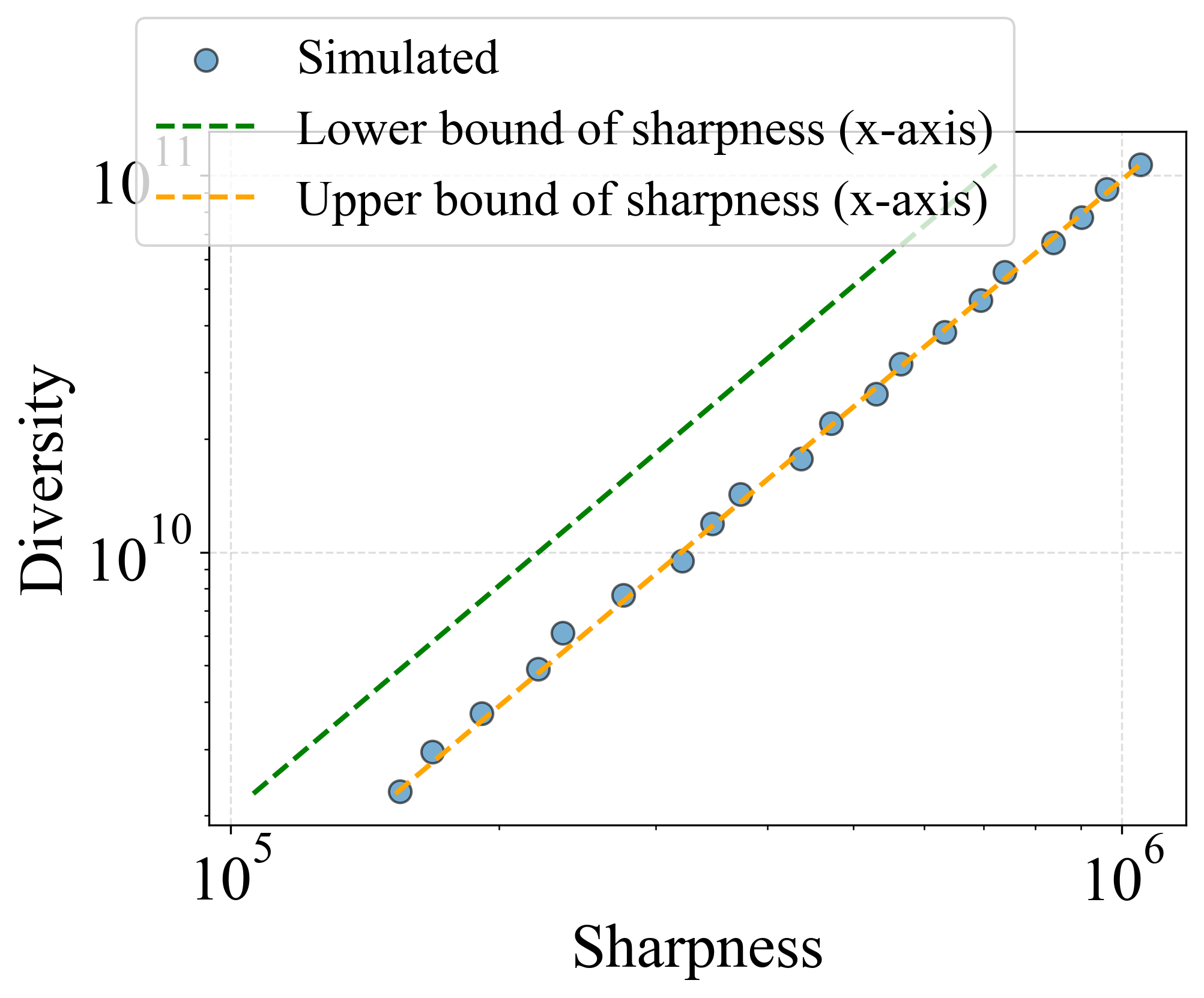} 
\caption{\footnotesize $k=4,\eta =0.1$} 
\end{subfigure}  
%4,0.05
\centering
\begin{subfigure}{0.32\linewidth}
\includegraphics[width=\linewidth,keepaspectratio]{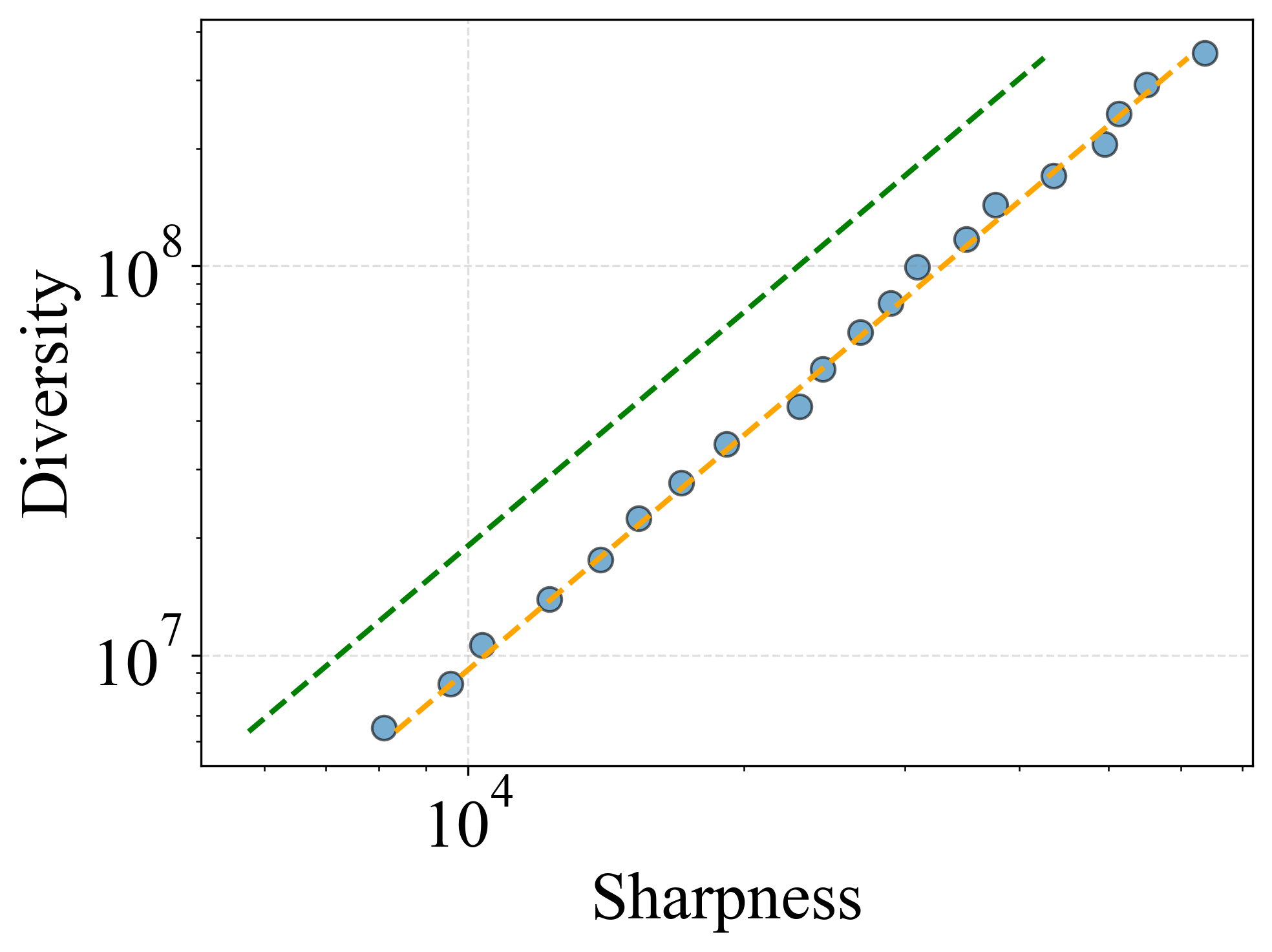}
\caption{\footnotesize $k=4,\eta =0.05$} 
\end{subfigure}  
%4,0.01
\centering
\begin{subfigure}{0.32\linewidth}
\includegraphics[width=\linewidth,keepaspectratio]{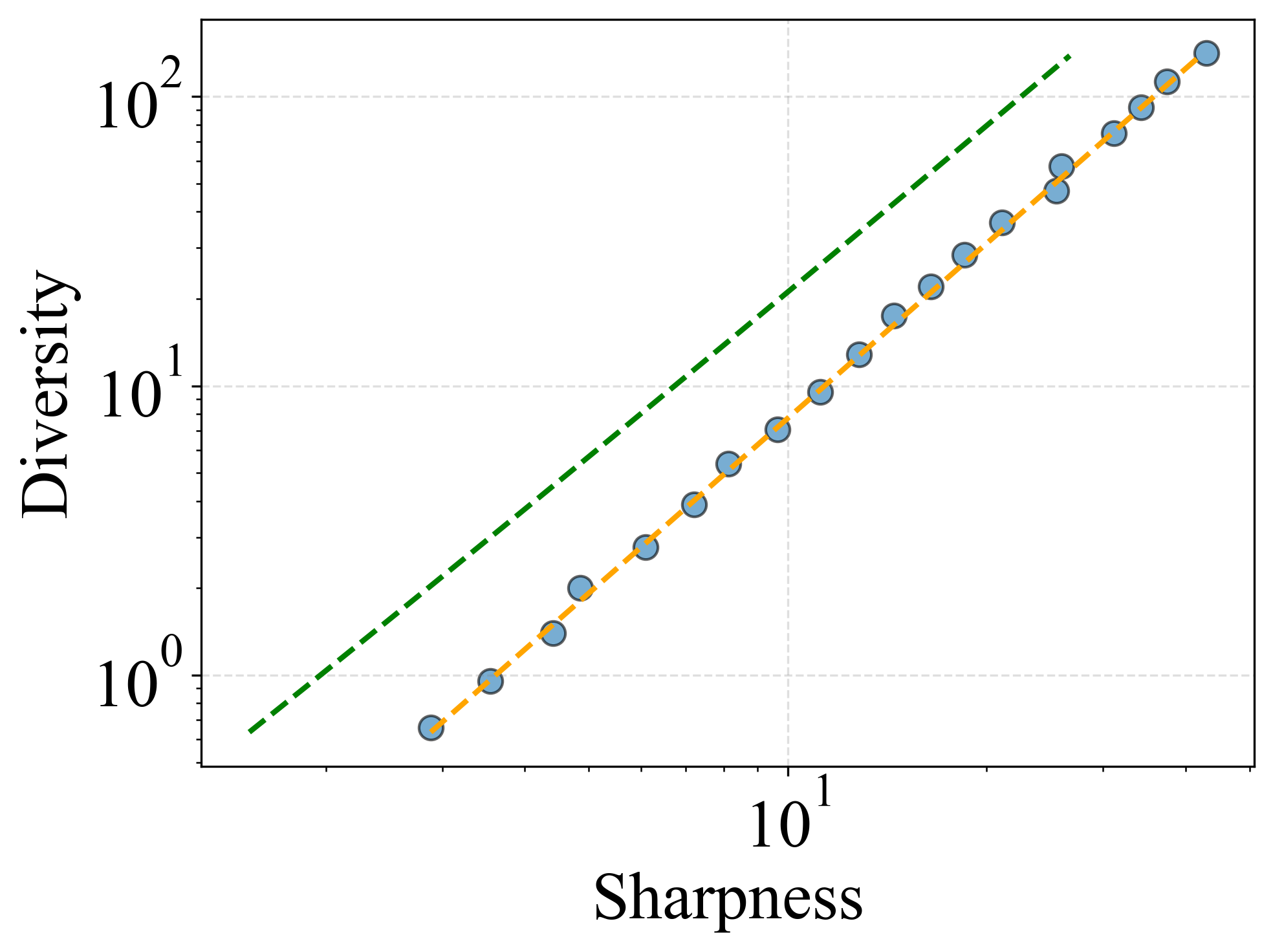} 
\caption{\footnotesize $k=4,\eta =0.01$} 
\end{subfigure}  
%8,0.1
\centering
\begin{subfigure}{0.32\linewidth}
\includegraphics[width=\linewidth,keepaspectratio]{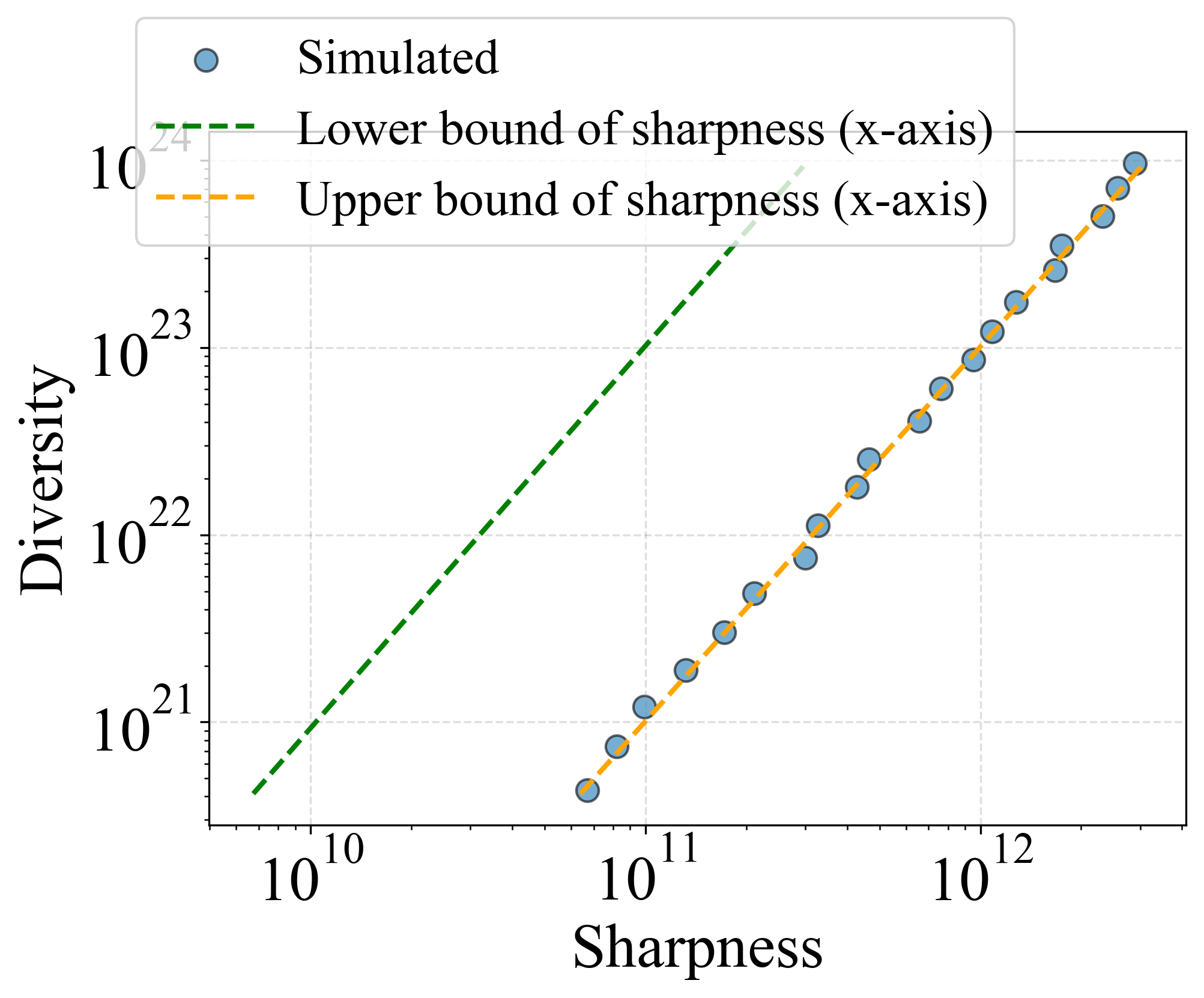} 
\caption{\footnotesize $k=8,\eta =0.1$} 
\end{subfigure}  
%8,0.01
\centering
\begin{subfigure}{0.32\linewidth}
\includegraphics[width=\linewidth,keepaspectratio]{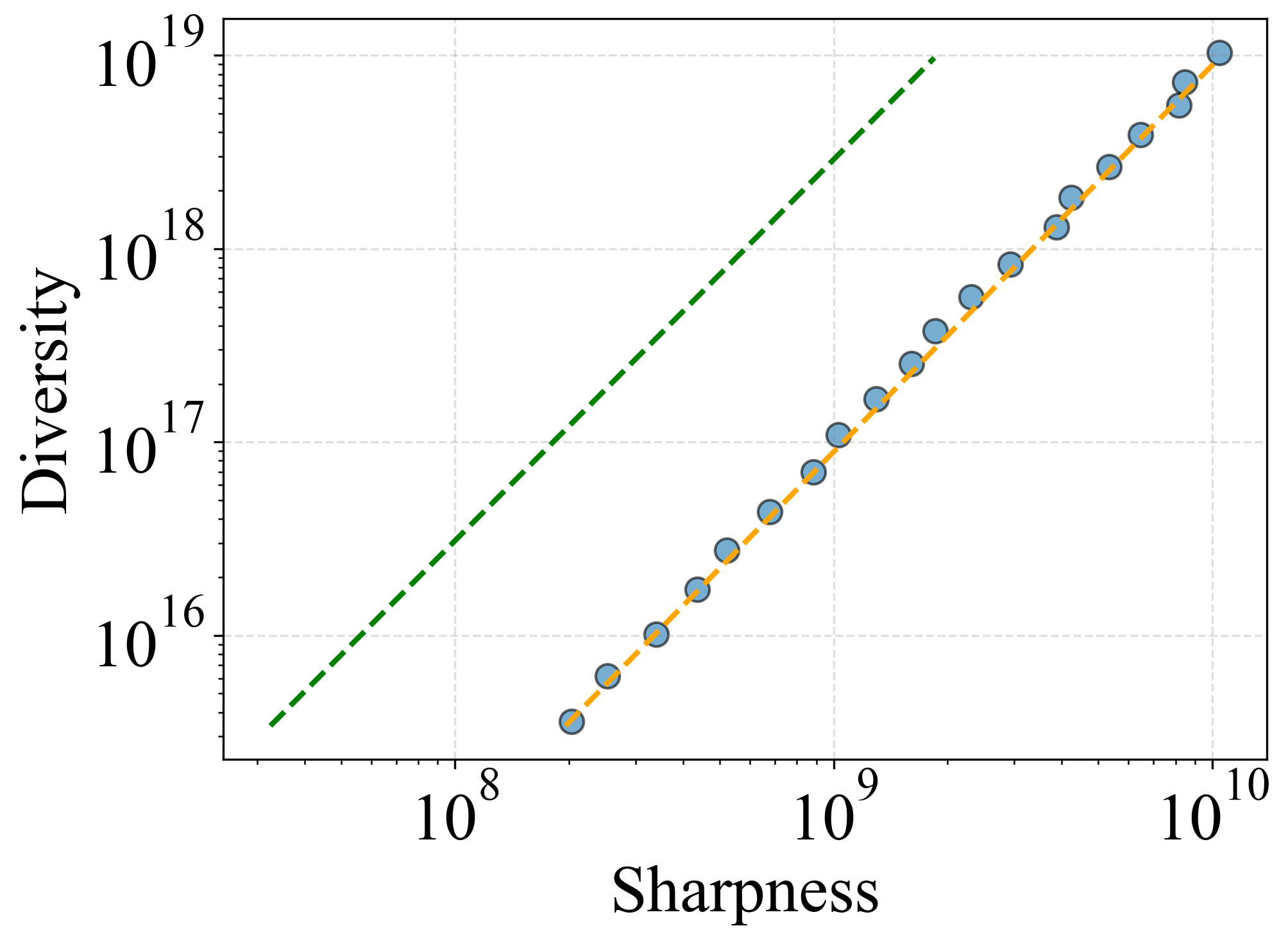} 
\caption{\footnotesize $k=8,\eta =0.05$} 
\end{subfigure}  
%8,0.05
\centering
\begin{subfigure}{0.32\linewidth}
\includegraphics[width=\linewidth,keepaspectratio]{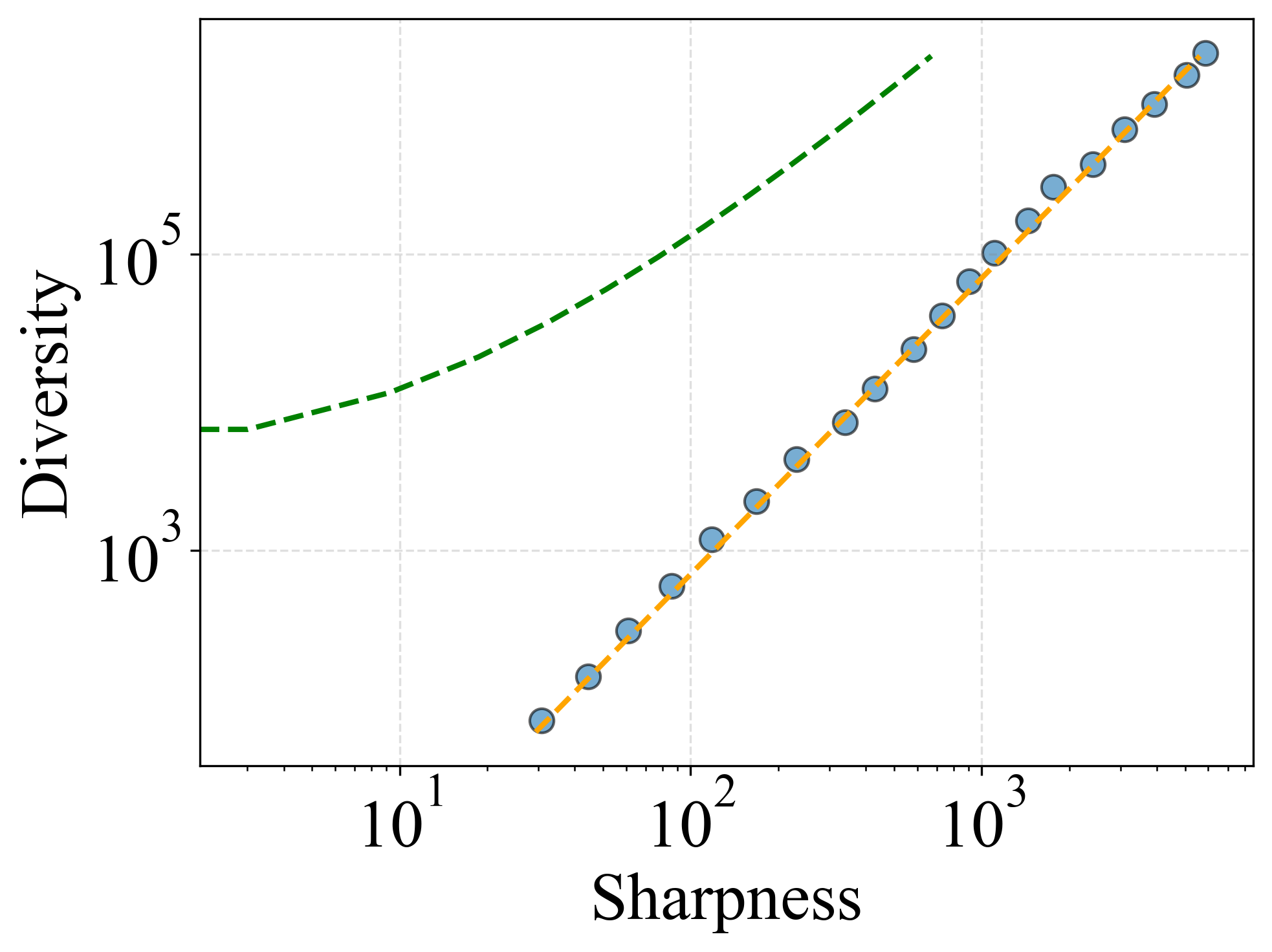} 
\caption{\footnotesize $k=8,\eta =0.01$} 
\end{subfigure}  
\caption{
\textbf{(Theoretical vs. Simulated sharpness-diversity trade-off in \sam).}
This figure compares the sharpness and diversity as predicted by Theorem \ref{thm:varSAM} and as observed in simulations under various parameter configurations. Results demonstrates the robustness of our theoretical analysis and tightness of the derived sharpness upper bound.
} 
\label{fig:SAM verify}
\end{figure}

To demonstrate the robustness and tightness of the bounds presented in Theorem \ref{thm:varSAM}, we provide verification results across a range of parameter configurations. Interestingly, the observed model behaviors closely align with the upper bound derived in Theorem \ref{thm:varSAM}, highlighting the effectiveness of our theoretical analysis in capturing the underlying dynamics of the ensemble.  Figure \ref{fig:SAM verify} illustrates these results, with each sub-figure corresponding to a specific combination of $k$ and $\eta$ with $\rho$ from range $0.5$ to $0.3$. In these experiment, we generated 50 random data matrices $\rmA$ of size $3000 \times 150$ and test data $\rmT$ of size $1000 \times 150$. For each random dataset, we initialized 50 random model weights $\rvtheta_0$ and collected the expected statistics of interest after training. To measure the sharpness $\kappa_k^{SAM}$, we employed projected gradient ascent to find the optimal perturbation, using a step size of $0.01$ and a maximum of $50$ steps. Similar experiments are performed to verify the derivations in Theorem \ref{thm:sub} with results presented in Figure \ref{fig:sharpbal verify}, with the number of partitions $S=10$.

\begin{figure}[!ht]
%4,0.5
\centering
\begin{subfigure}{0.32\linewidth}
\includegraphics[width=\linewidth,keepaspectratio]{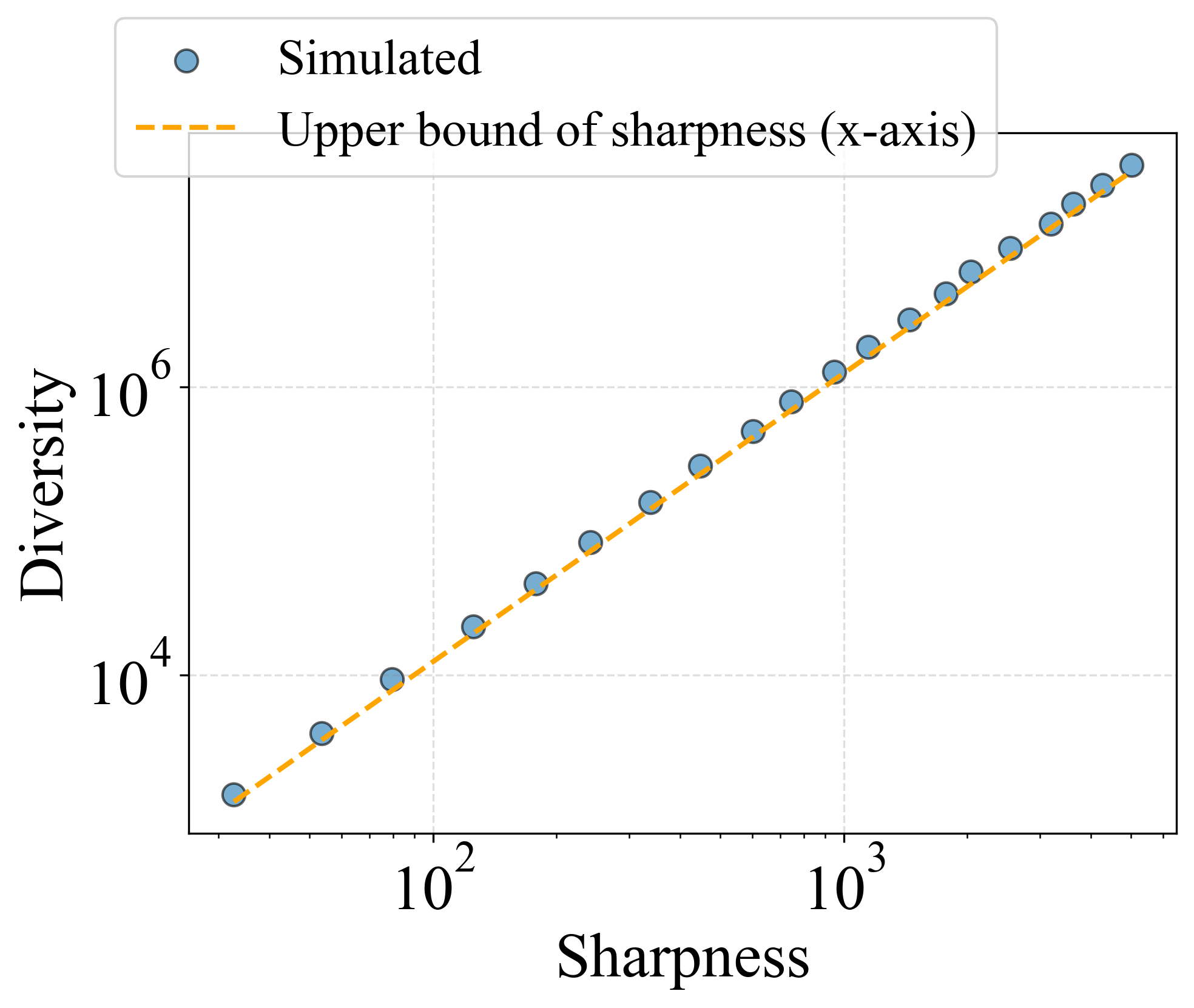}
\caption{\footnotesize $k=4,\eta =0.5$} 
\end{subfigure}  
%4,0.3
\centering
\begin{subfigure}{0.32\linewidth}
\includegraphics[width=\linewidth,keepaspectratio]{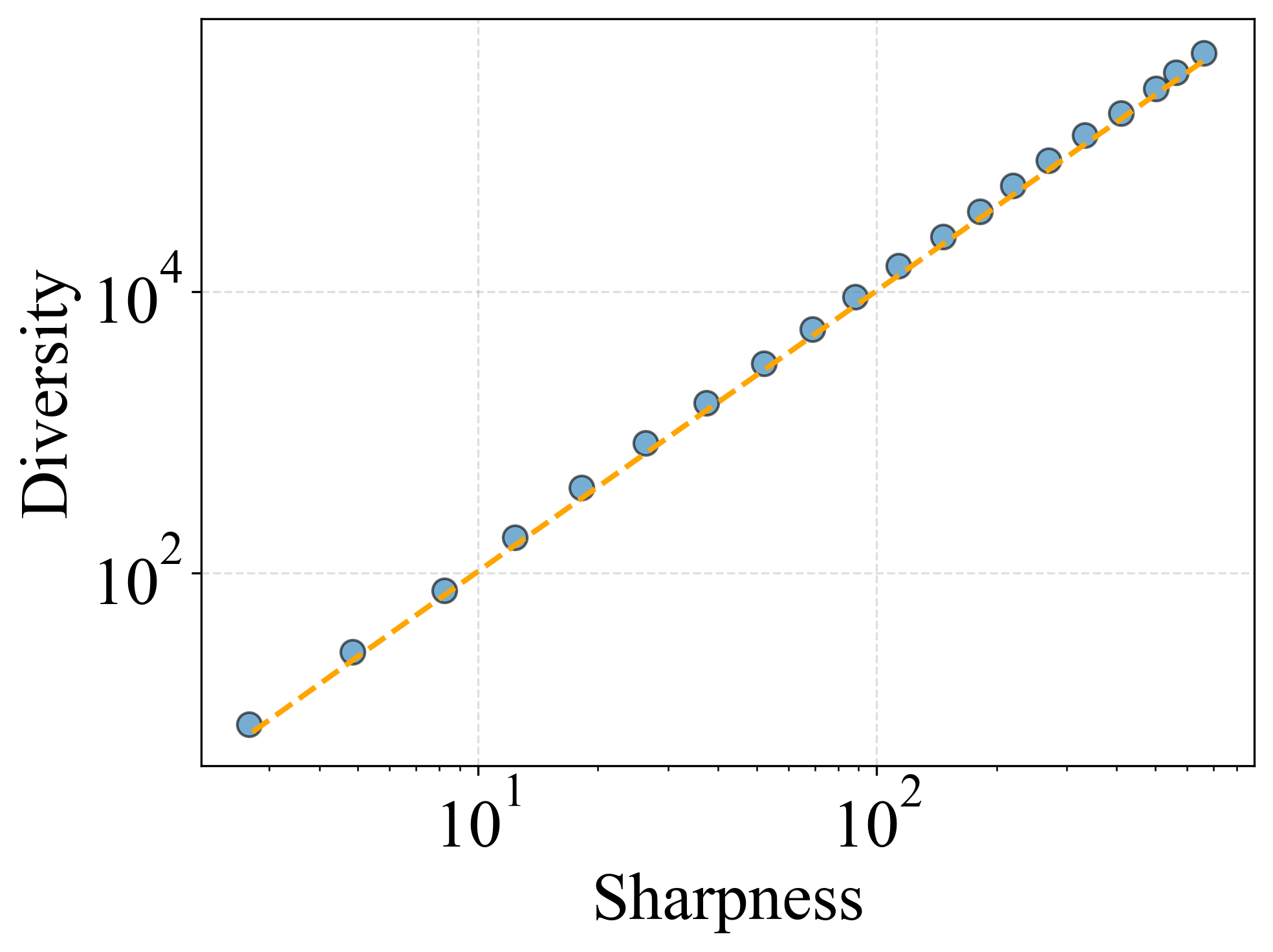}
\caption{\footnotesize $k=4,\eta =0.3$} 
\end{subfigure}  
%4,0.1
\centering
\begin{subfigure}{0.32\linewidth}
\includegraphics[width=\linewidth,keepaspectratio]{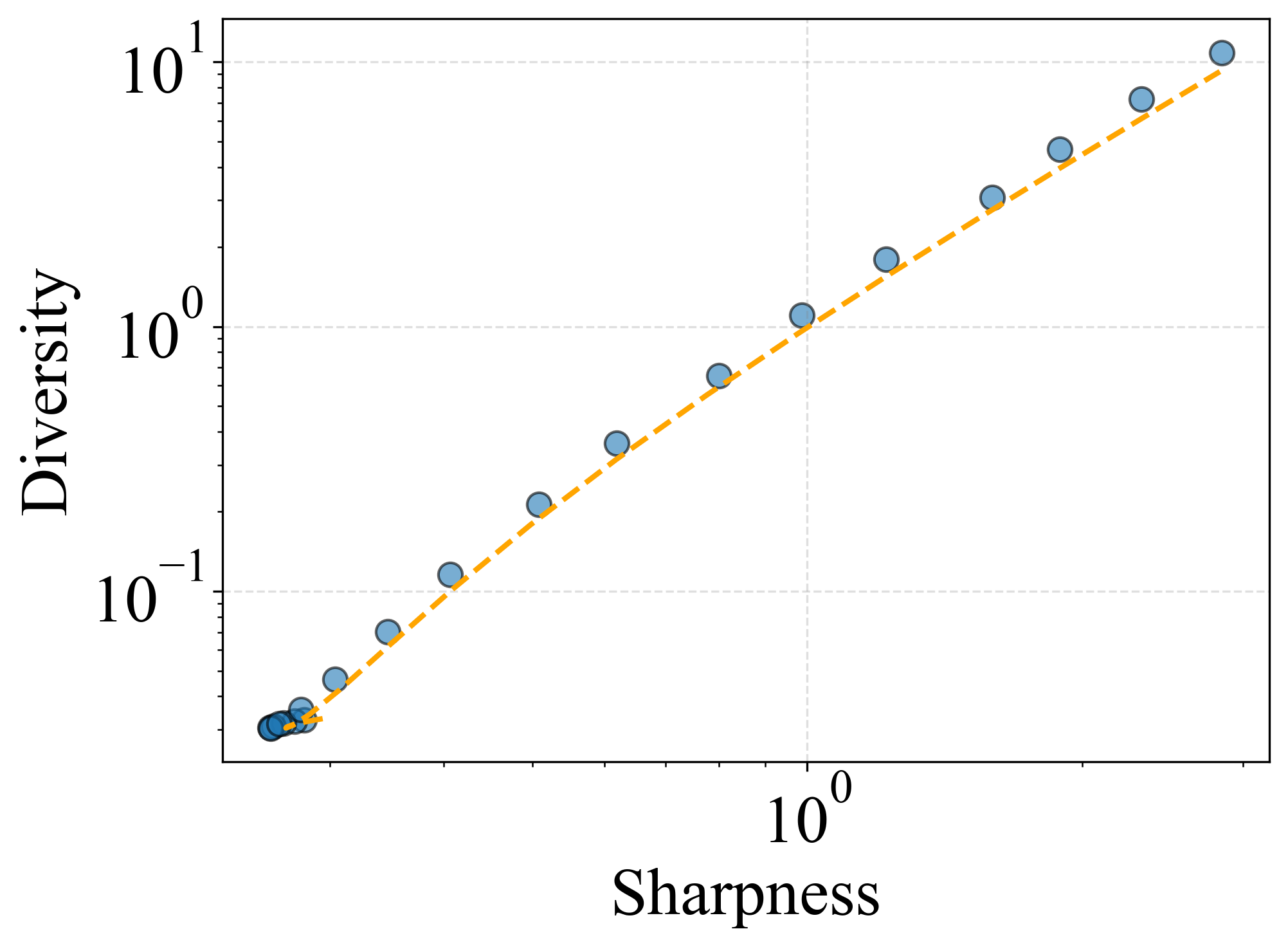}
\caption{\footnotesize $k=4,\eta =0.1$} 
\end{subfigure}  
%8,0.5
\centering
\begin{subfigure}{0.32\linewidth}
\includegraphics[width=\linewidth,keepaspectratio]{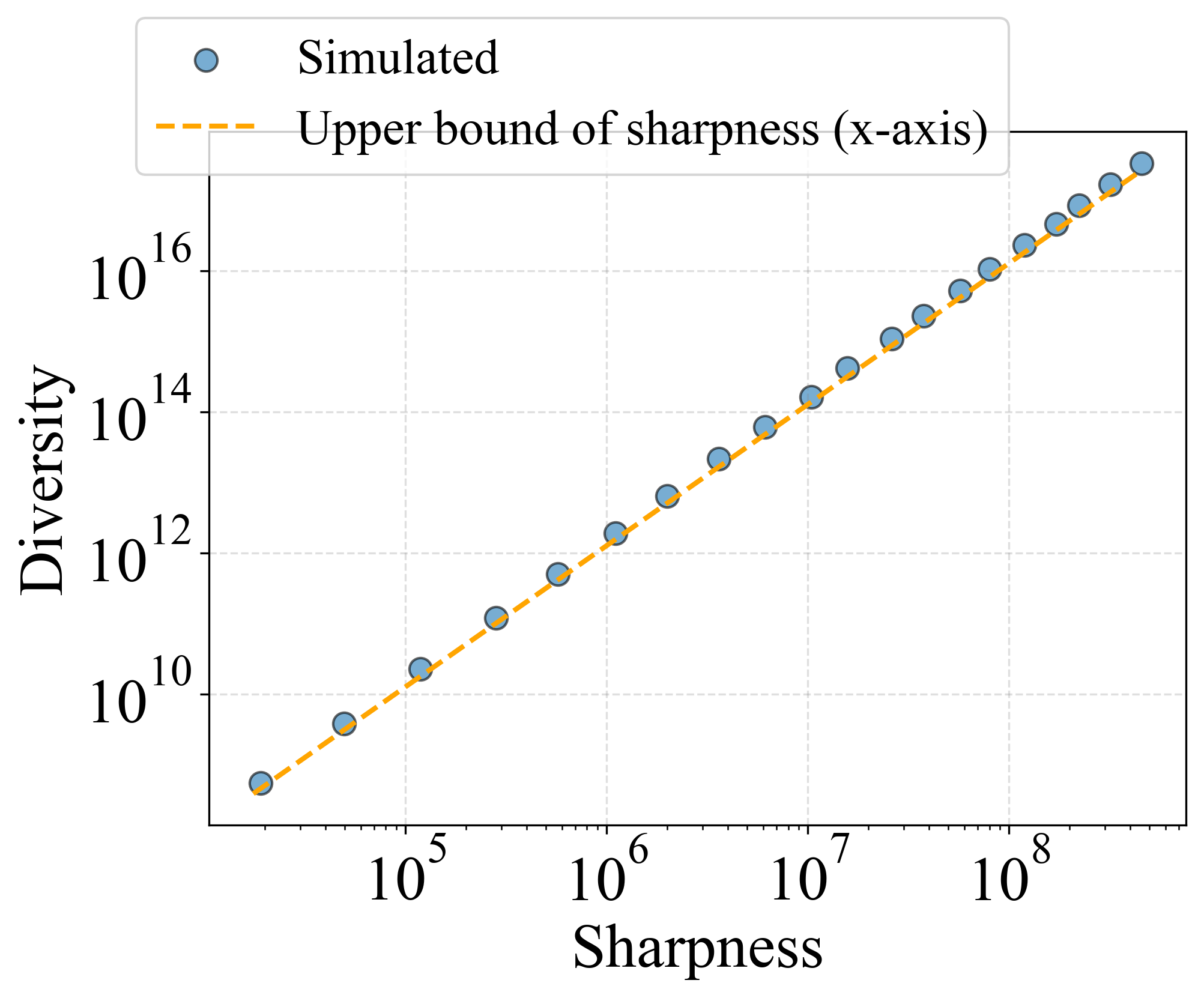} 
\caption{\footnotesize $k=8,\eta =0.5$} 
\end{subfigure}  
%8,0.3
\centering
\begin{subfigure}{0.32\linewidth}
\includegraphics[width=\linewidth,keepaspectratio]{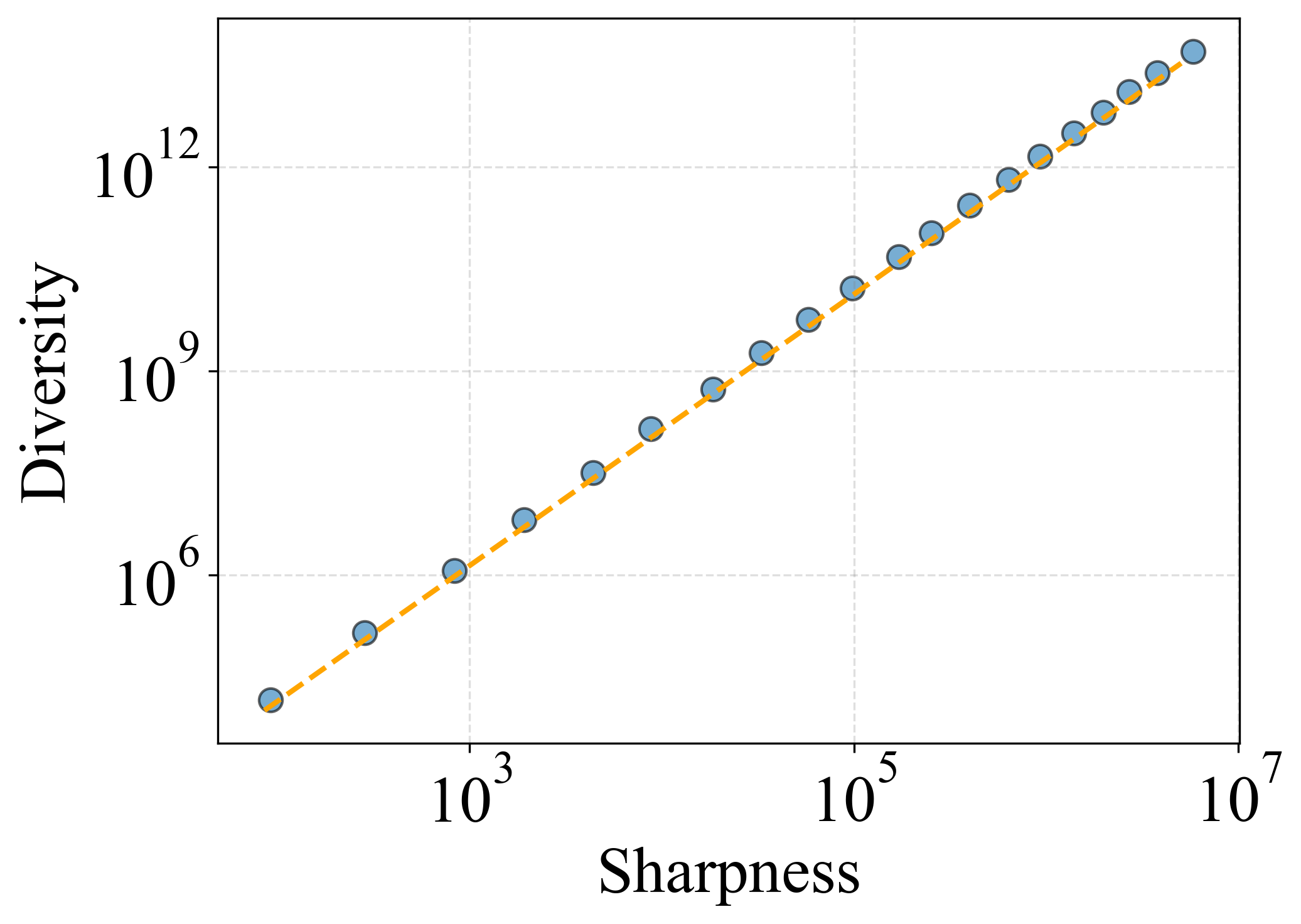}
\caption{\footnotesize $k=8,\eta =0.3$} 
\end{subfigure}  
%8,0.1
\centering
\begin{subfigure}{0.32\linewidth}
\includegraphics[width=\linewidth,keepaspectratio]{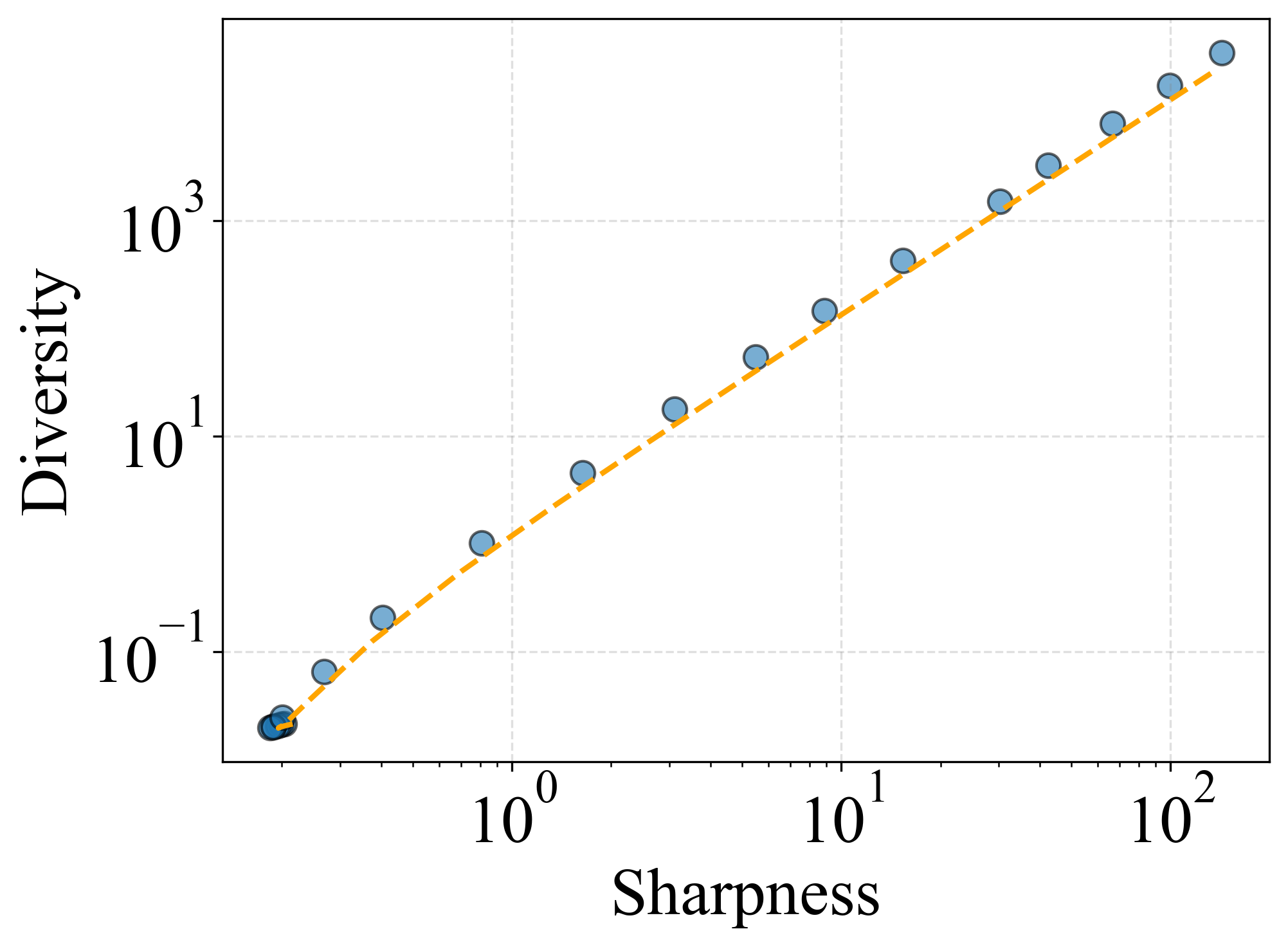} 
\caption{\footnotesize $k=8,\eta =0.1$} 
\end{subfigure}  
%12,0.5
\centering
\begin{subfigure}{0.32\linewidth}
\includegraphics[width=\linewidth,keepaspectratio]{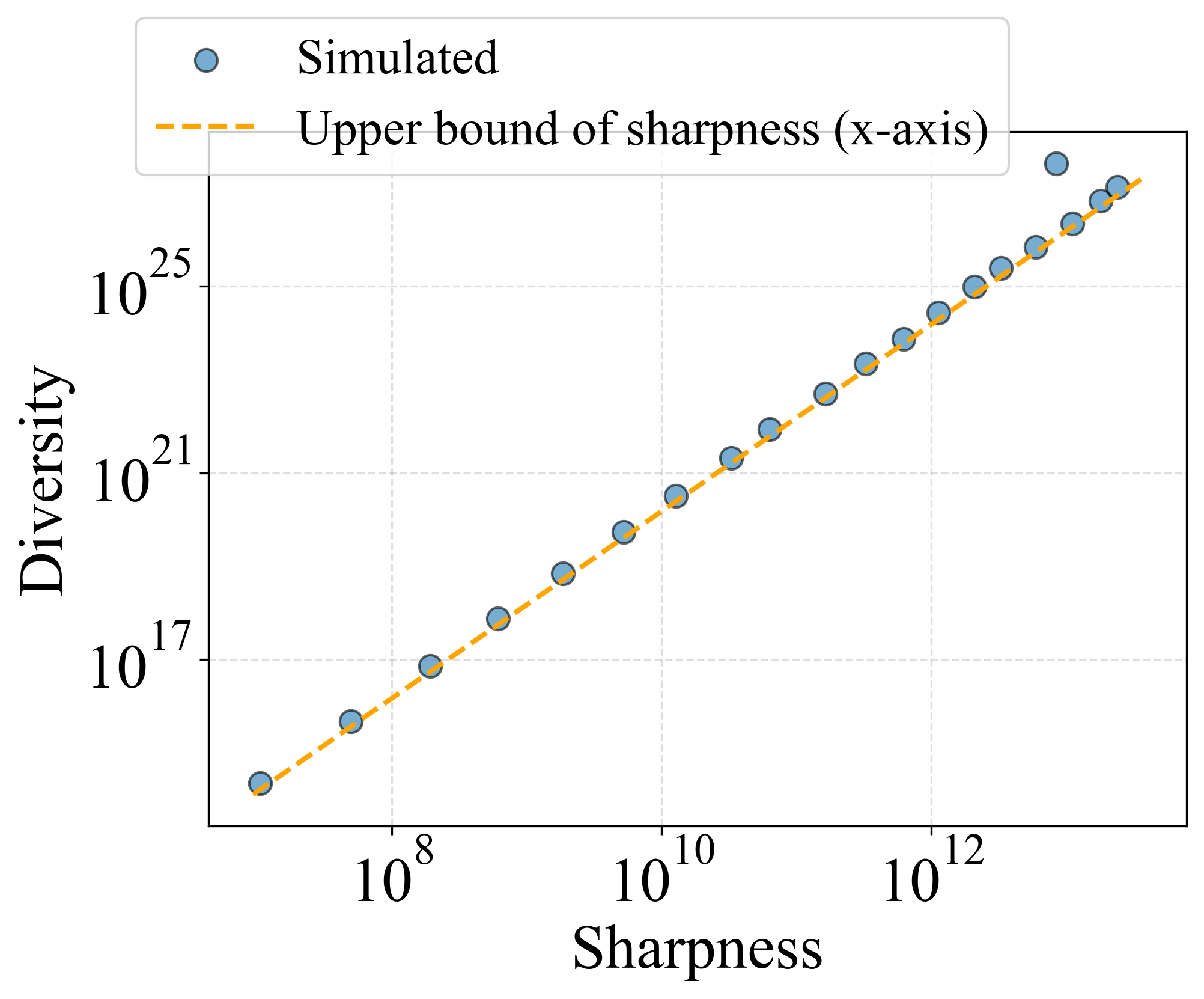} 
\caption{\footnotesize $k=12,\eta =0.5$} 
\end{subfigure}  
%12,0.3
\centering
\begin{subfigure}{0.32\linewidth}
\includegraphics[width=\linewidth,keepaspectratio]{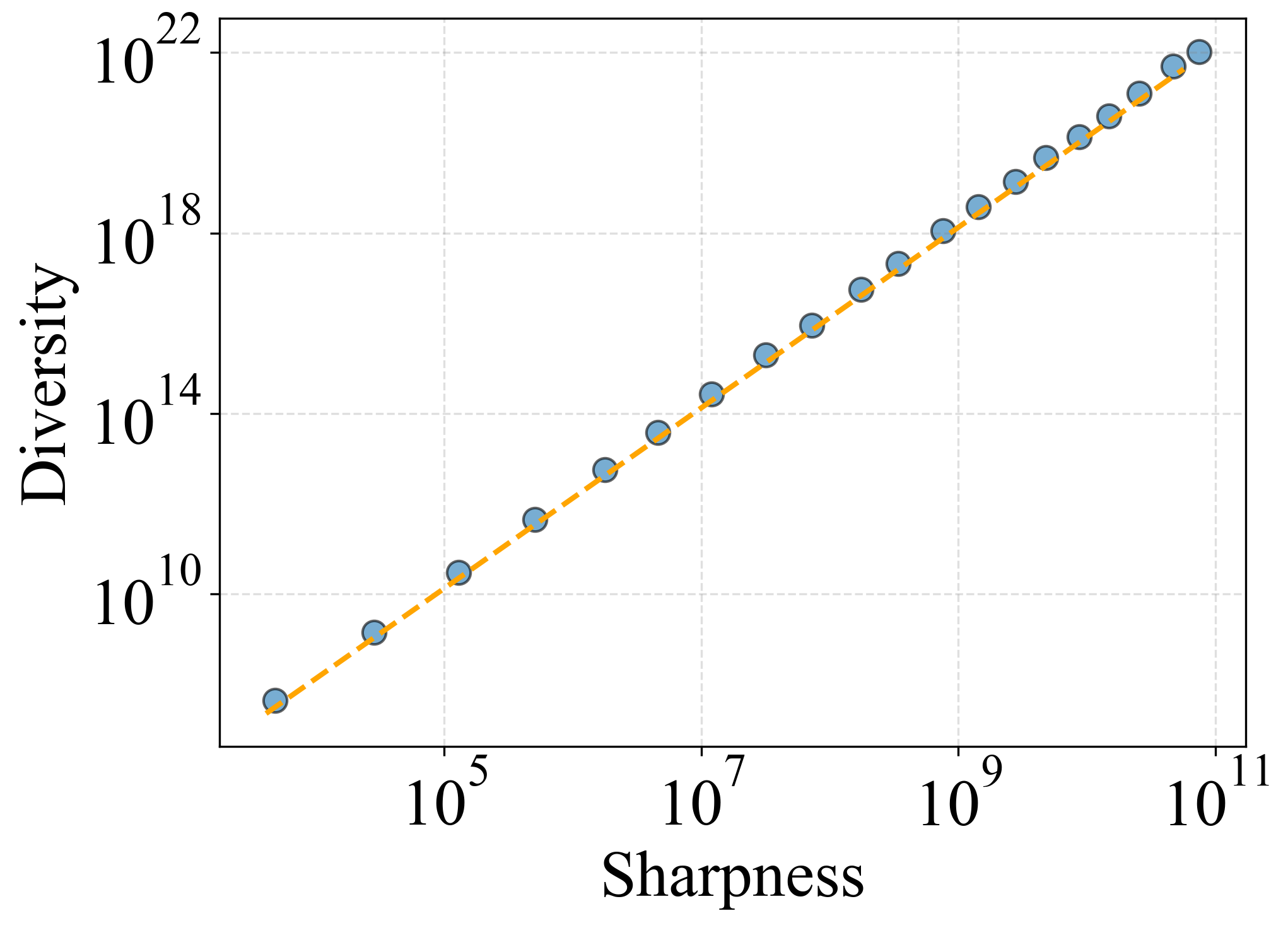} 
\caption{\footnotesize $k=12,\eta =0.3$} 
\end{subfigure}  
%12,0.1
\centering
\begin{subfigure}{0.32\linewidth}
\includegraphics[width=\linewidth,keepaspectratio]{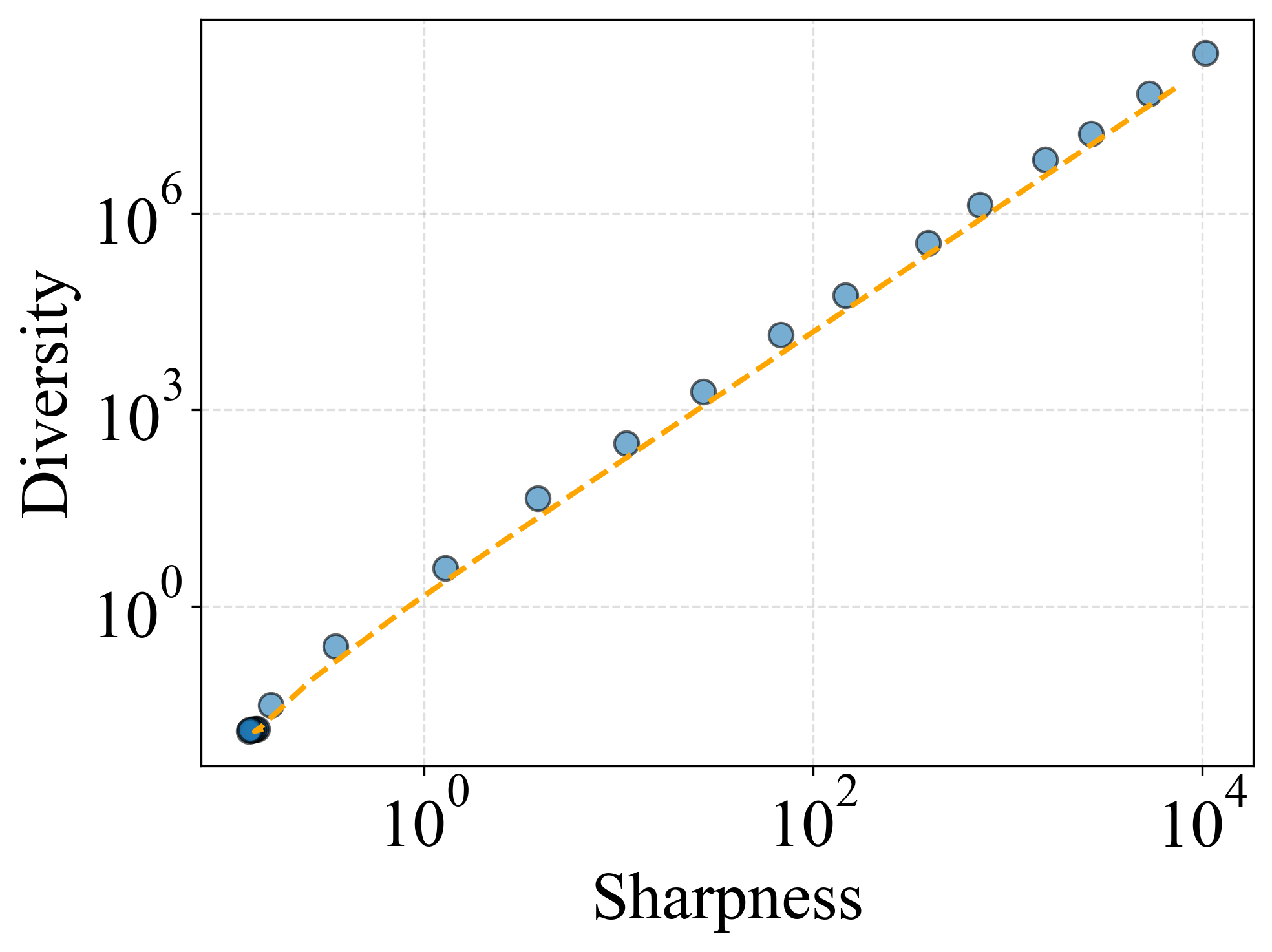} 
\caption{\footnotesize $k=12,\eta =0.1$} 
\end{subfigure}  
\caption{
\textbf{(Theoretical vs. Simulated sharpness-diversity trade-off in \ourmethod).}
This figure compares the sharpness and diversity as predicted by Theorem \ref{thm:sub} and as observed in simulations under various parameter configurations. The observed model behaviors align closely with our derived upper bounds.
} 
\label{fig:sharpbal verify}
\end{figure}
\section{Hyperparamter setting}~\label{abl:hyper}

\subsection{Datasets}
We first evaluate on image classification datasets CIFAR-10 and CIFAR-100. The corresponding OOD robustness is evaluated on CIFAR-10C and CIFAR-100C~\citep{hendrycks2019benchmarking}.
The experiments are carried out on ResNet18~\citep{he2016deep}. We 
use a batch size of 128, a momentum of 0.9, and a
weight decay of 0.0005 for model training. 
TinyImageNet is an image classification dataset consisting of 100K images for training and 10K images for in-distribution testing. 
We evaluate ensemble's OOD robustness on TinyImageNetC~\citep{hendrycks2019benchmarking}. 

\subsection{Hyperparamter setting for empirical sharpness-diversity trade-off}
Here, we provide the hyperparameter for the experiments in Section \ref{sec:emp-trade-off}. 
When using adaptive worst-case sharpness for sharpness measurement, the size of neighborhood $\gamma$ defined in \eqref{eqn:sharpness} needed to be specified, we use a $\gamma$ of 0.5 for all the results in Figure ~\ref{fig:overview}
%, Figure 
and Figure \ref{fig:vary-overpara}. 
Additionally, when training NNs in the ensemble, we change the perturbation radius $\rho$ of \sam so that we can study the trade-off. 
The range of $\rho$ for the results in Figure \ref{fig:overview} is $\{0.01, 0.02, 0.03, 0.04, 0.05, 0.1, 0.2, 0.3\}$, the range of $\rho$ for the results in Figure \ref{fig:vary-overpara} is $\{0.01, 0.015, 0.02, 0.025, 0.03,  0.05,  0.1,0.2,0.3,0.4\}$.

\subsection{Hyperparamter setting for \ourmethod}
\textbf{Hyperparameter setting on CIFAR10/100}. For experiments on CIFAR10/100, we train a NN from scratch with basic data augmentations, including random cropping, padding by four pixels and random horizontal flipping.  We use a batch size of 128, a momentum of 0.9, and a weight decay of 0.0005. For deep ensemble, we train each model for 200 epochs. 

In addition, we use  10\% of the training set as the validation set for selecting  $\rho$ and $k$ based on ensemble's performance. 
We make a grid search for $\rho$ over $\{0.01,0.02,0.05,0.1,0.2,0.5\}$. For \ourmethod, we use the same $\rho$ as \sam and search $k$ over $\{0.2,0.3,0.4,0.5,0.6\}$. 
$T_d$ is another hyperparameter introduced by \ourmethod, we use a $T_d$ of 10 for all experiments on CIFAR10, a $T_d$ of 100 and 150 respectively when training dense and sparse models on CIFAR100. See Table \ref{tab:hyparamter} for the optimal $\rho$ and $k$ after grid search. \looseness-1

\noindent
\textbf{Hyperparameter setting on TinyImageNet}. For experiments on TinyImageNet, we adopt basic data augmentations, including random cropping, padding by four pixels and random horizontal flipping. We train each model for 
200 epochs. We use a batch size of 128, a momentum of 
0.9, a weight decay of 5e-4, a $T_d$ of 100, an initial learning
rate of 0.1, and decay it with a factor of 10 at epoch 100 and 150. We search $\rho$ and $k$ in the same range as what we 
do on CIFAR10/100. See Table \ref{tab:hyparamter} for the optimal $\rho$ and $k$ after grid search. \label{abl:hyper_tinyimagenet}

\begin{table}[!th]
\small
\centering
\resizebox{0.7\columnwidth}{!}{
\begin{tabular}{ccccccc}
\hline
\rowcolor[gray]{0.9}
\textbf{Dataset} & \textbf{Model} &\textbf{Method} &\textbf{$\rho$} &\textbf{$k$} &\textbf{$T_d$}\\ 

\multirow{3}{*}{CIFAR10} & \multirow{1}{*}{ResNet18}&Deep Ensemble & - &-&-\\ 

&ResNet18 &Deep Ensemble+\sam &  0.2 & - &- \\ 

&ResNet18 & \ourmethod    &0.2 & 0.4 & 100\\ 

\hline

\multirow{3}{*}{CIFAR100} & \multirow{1}{*}{ResNet18}&Deep Ensemble & - &- &- \\ 

&ResNet18 &Deep Ensemble+\sam & 0.2 &- &- \\ 

&ResNet18 & \ourmethod  & 0.2 &0.5 &100  \\ 

\hline
\multirow{3}{*}{TinyImageNet} & \multirow{1}{*}{ResNet18}&Deep Ensemble & - &- &- \\ 

&ResNet18 &Deep Ensemble+\sam & 0.2 &- &-  \\ 

&ResNet18 &  \ourmethod  & 0.2 & 0.3 & 100 \\ 

\hline
\end{tabular}%
}
\vspace{3mm}
\caption{Hyperparamter setting for results in Section \ref{sec:eval-perf}, we report the optimal $\rho$ and $k$ after grid search. Each result in Figure~\ref{fig:compare_results} is averaged over three ensembles, which corresponds to 9 random seeds, the random seeds we use are $\{13,17,27,113,117,127,43, 59, 223\}$. }
\label{tab:hyparamter}
\end{table}

\section{Ablation studies on loss landscape metrics}
\label{abl:abl-loss}
In this section, we show that the sharpness-diversity trade-off generalizes to different measurements of sharpness and diversity. The results are presented in Figure~\ref{fig:tradeoff-linf-DER}.\looseness-1

\begin{figure}[!ht]
\centering
\begin{subfigure}{0.32\linewidth}
\includegraphics[width=\linewidth,keepaspectratio]{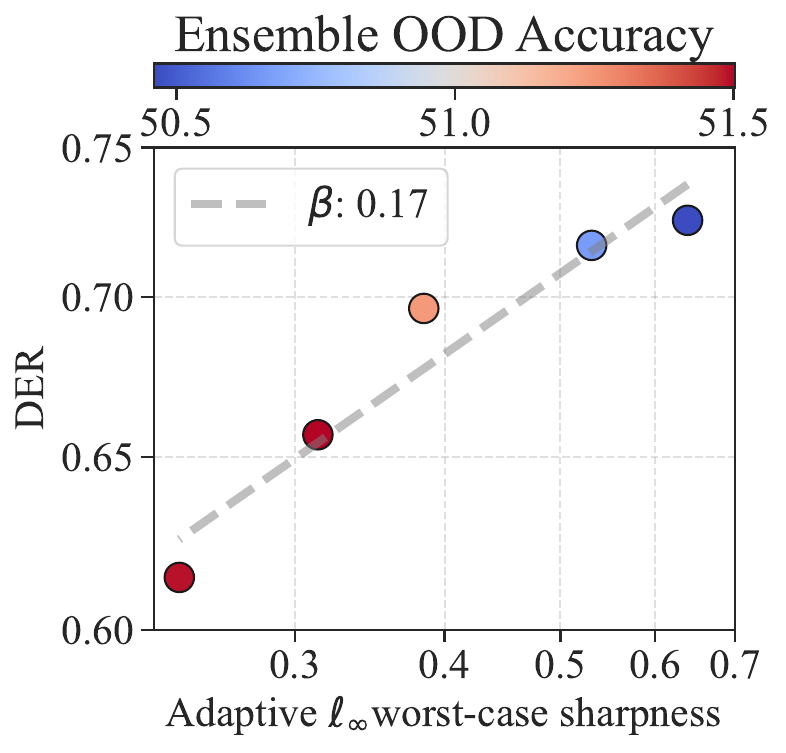}
\caption{\footnotesize Diff sharpness (CIFAR-100)}  
\end{subfigure}
\centering
\begin{subfigure}{0.32\linewidth}
\includegraphics[width=\linewidth,keepaspectratio]{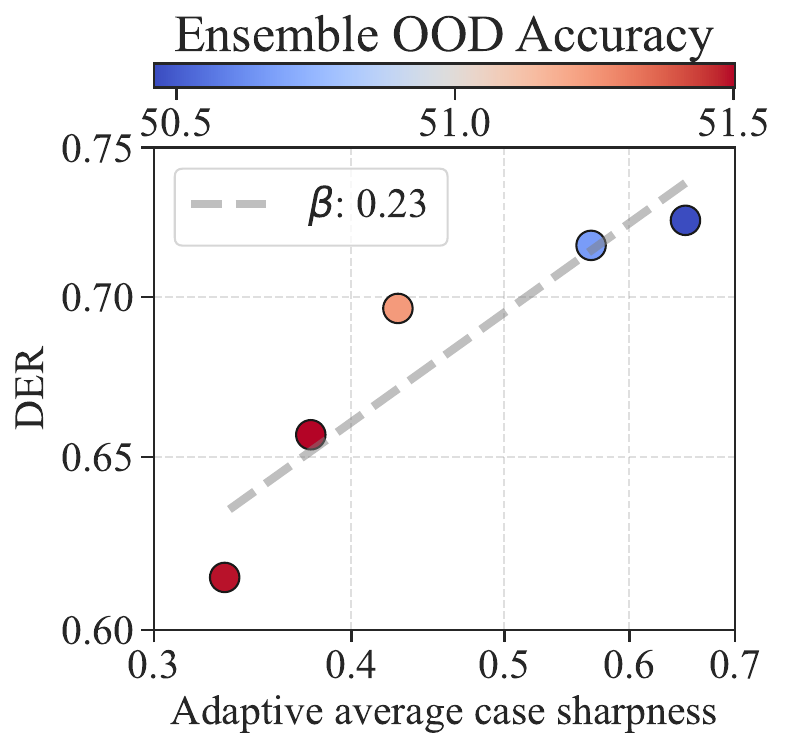}
\caption{\footnotesize Diff sharpness (CIFAR-100)}  
\end{subfigure}
\begin{subfigure}{0.32\linewidth}
\includegraphics[width=\linewidth,keepaspectratio]{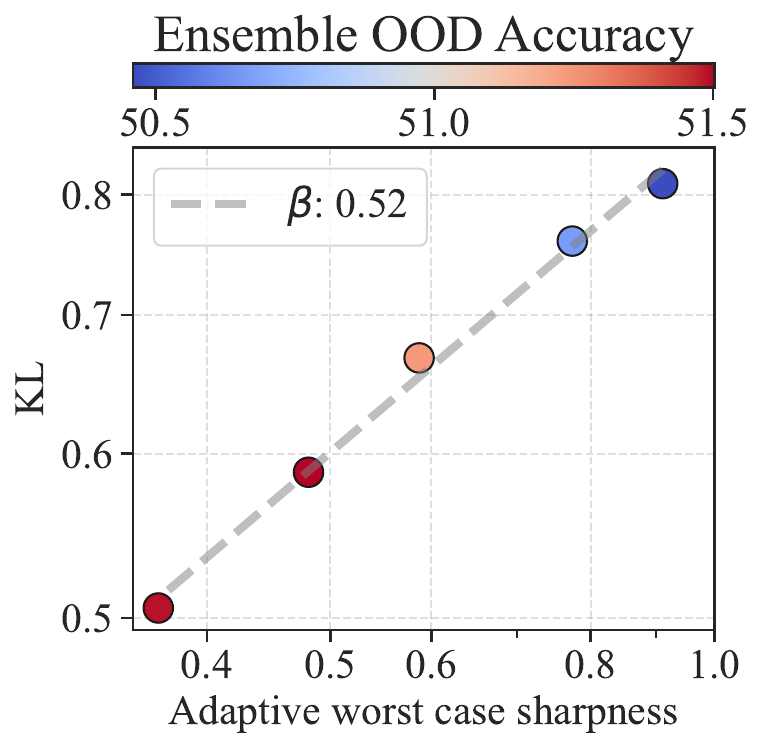}
\caption{\footnotesize Diff diversity (CIFAR-100)}  
\end{subfigure}
\centering
\begin{subfigure}{0.32\linewidth}
\includegraphics[width=\linewidth,keepaspectratio]{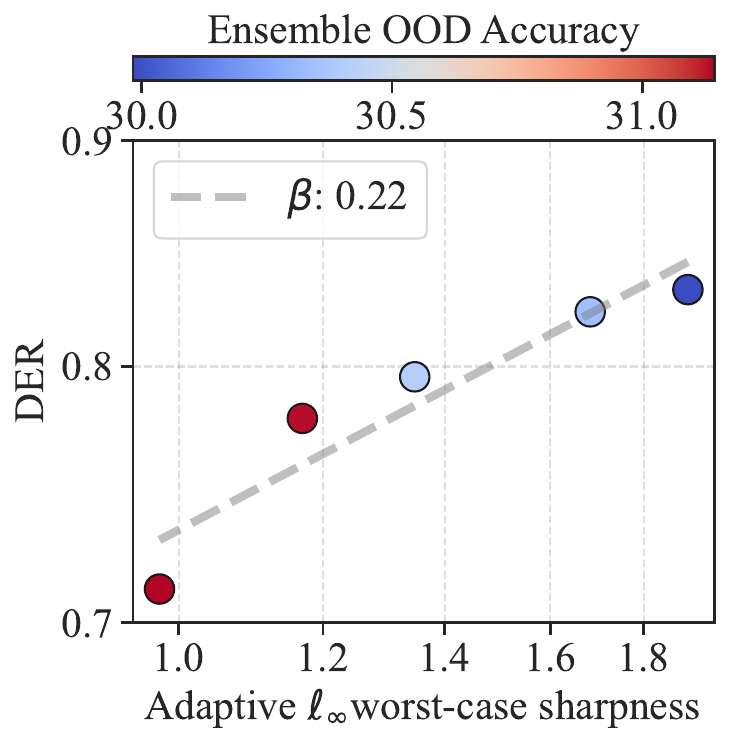} 
\caption{\footnotesize Diff sharpness (TIN)} 
\end{subfigure}  
\begin{subfigure}{0.32\linewidth}
\includegraphics[width=\linewidth,keepaspectratio]{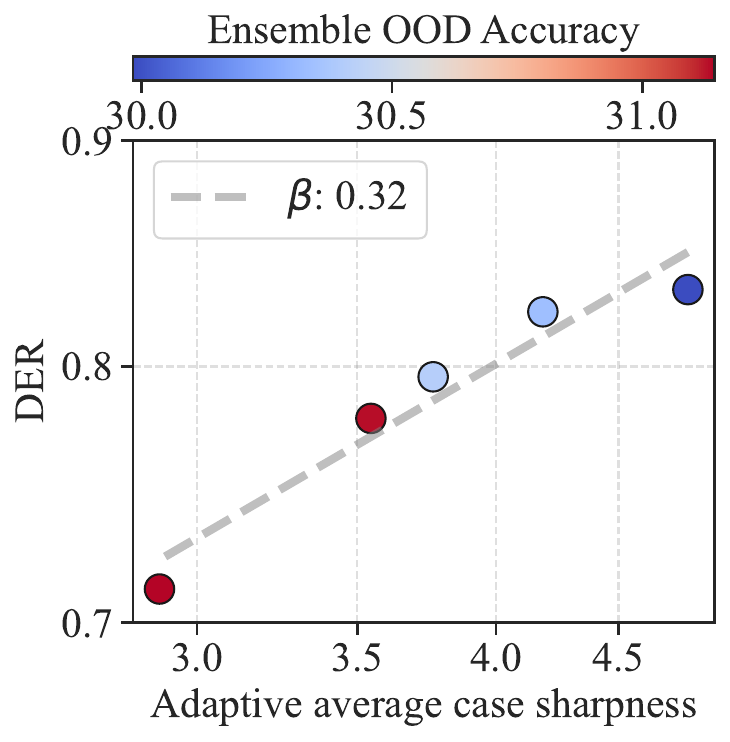} 
\caption{\footnotesize Diff sharpness (TIN)} 
\end{subfigure} 
\begin{subfigure}{0.32\linewidth}
\includegraphics[width=\linewidth,keepaspectratio]{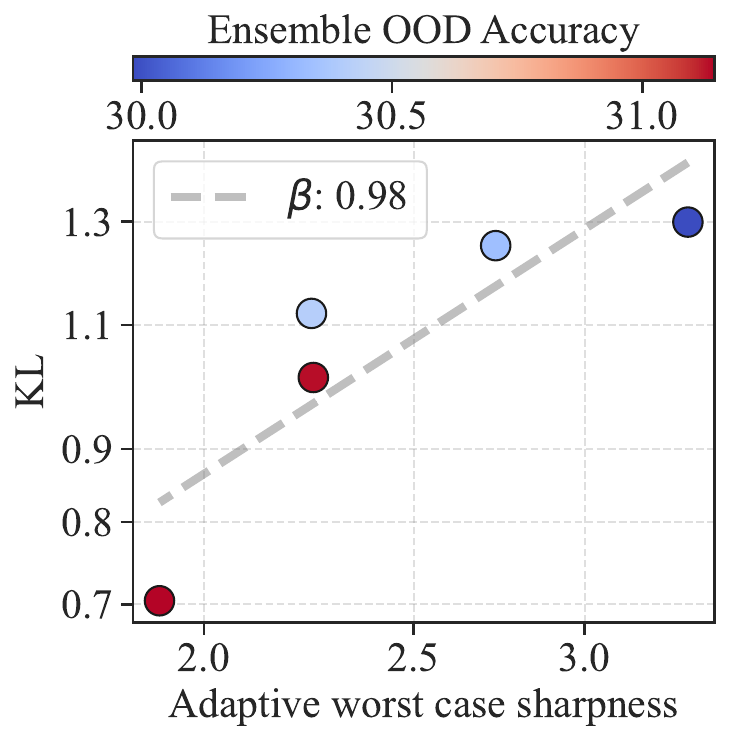} 
\caption{\footnotesize Diff diversity (TIN)} 
\end{subfigure}  
\centering
\caption{
\textbf{(Ablation study of varying sharpness and diversity metrics to corroborate existence of sharpness-diversity trade-off).}
(a)(d) Varying sharpness metric by using the adaptive $\ell_{\infty}$ worst-case sharpness.
(b)(e) Varying sharpness metric by using the adaptive $\ell_2$ average case sharpness.
(c)(f) Varying diversity metric by using the KL divergence.
The sharpness-diversity trade-off is still observed in all the settings.
The $x$-axis and $y$-axis are in log scale. 
The notation $\beta$ stands for the slope of the linear regression function fitted on all the ensembles trained by \sam.
} 
\label{fig:tradeoff-linf-DER}
\end{figure}

\noindent \textbf{Sharpness metric.}
In the main paper, we use adaptive worst-case sharpness defined in~\eqref{eqn:sharpness}, the parameter neighborhood is bounded by $\ell_2$ norm. 
In this section, we consider two more sharpness metrics~\citep{kwon2021asam, andriushchenko2023modern}: adaptive worst-case sharpness with the parameter neighborhood bounded by $\ell_{\infty}$ norm~(referred to as adaptive $\ell_{\infty}$ worst-case sharpness); and adaptive average case sharpness bounded by $\ell_{2}$ norm~(termed average case sharpness). 
\\The adaptive $\ell_{\infty}$ worst-case sharpness is defined as:
\begin{equation}\label{eqn:sharpness-linf1}
    {\underset{\| T_\rvtheta^{-1} \rvvarepsilon \| _{\infty} \leq \rho_0}{\text{max}} }\text{}\mathcal{L}_{\mathcal{D}}(\rvtheta+\rvvarepsilon)-\mathcal{L}_{\mathcal{D}}(\rvtheta) .
\end{equation} 
The average case sharpness is defined as: 
\begin{equation}\label{eqn:sharpness-linf2}
  \mathbb{E}_{\boldsymbol{\rvvarepsilon} \sim \mathcal{N}\left(0, \rho_0^2 \operatorname{diag}\left({T_\rvtheta}^2\right)\right) } \quad \mathcal{L}_{\mathcal{D}}(\rvtheta+\rvvarepsilon)-\mathcal{L}_{\mathcal{D}}(\rvtheta) ,
\end{equation} 
where $\rho_0$ is the neighborhood size of current parameter $\rvtheta$. 
$T_\rvtheta$ is a normalization operator that ensures the sharpness measure is invariant with respect to the re-scaling operation of the parameter.
The results, illustrated in Figures~\ref{fig:tradeoff-linf-DER}, corroborate our observation of a trade-off between sharpness and diversity.\looseness-1

\noindent 
\textbf{Diversity metric}. We consider 
Kullback–Leibler~(KL) Divergence~\citep{10.1214/aoms/1177729694} as an alternative diversity metric, which is also widely used in previous literature to gauge the diversity of two ensemble members~\citep{fort2019deep,liu2021deep}. Specifically, the KL-divergence between the outputs of two ensemble members given a data sample $(\vx, \vy)$ is defined as:
\begin{equation}\label{eqn:kl-divergence}
  \text{KL}\left(f_{\rvtheta_1}(\vx),f_{\rvtheta_2}(\vx)\right) = \mathbb{E}_{f_{\rvtheta_1}(\vx)}\left[\log f_{\rvtheta_1}(\vx)-\log f_{\rvtheta_2}(\vx)\right] . %\nonumber
\end{equation}
We measure the KL divergence on each data sample in the test data and then average the measured KL divergence. 
The results for KL-divergence are shown in Figure~\ref{fig:tradeoff-linf-DER}, which demonstrate the trade-off remains consistent for different diversity metrics.

\section{More results}~\label{sec:more-results}
\subsection{Evaluation on different corruption severity}\label{abl:multiple_severity}
\ourmethod’s main advantage lies in OOD scenarios. As shown in Tabel~\ref{tb:severity1}-\ref{tb:severity3}, \ourmethod consistently outperforms the baselines on different levels of corruption.

\begin{table*}[!thb]
    \centering
    \caption{Results of different severity levels on CIFAR10-C.}
    \resizebox{1.0\linewidth}{!}{
    \begin{tabular}{lccccc}
        \toprule
        Corruption Severity & 1 & 2 & 3 & 4 & 5 \\
        \midrule
        Deep ensemble & 88.90 & 83.67 & 77.56 & 70.37 & 58.63  \\
        Deep ensemble+\sam & 89.44 & 84.24 & 78.16 & 71.04 & 58.77 \\
        \rowcolor{gray!15}
        \ourmethod & 89.75 \textcolor{blue}{(+0.31)} & 84.80 \textcolor{blue}{(+0.56)} & 78.98 \textcolor{blue}{(+0.82)} & 72.25 \textcolor{blue}{(+1.21)} & 60.78 \textcolor{blue}{(+2.01)} \\
        \bottomrule
    \end{tabular}
    }
    \label{tb:severity1}
\end{table*}

\begin{table*}[!thb]
    \centering
    \caption{Results of different severity levels on CIFAR100-C.}
    \resizebox{1.0\linewidth}{!}{
    \begin{tabular}{lccccc}
        \toprule
        Corruption Severity & 1 & 2 & 3 & 4 & 5 \\
        \midrule
        Deep ensemble & 65.78 & 57.77 & 51.30 & 44.33 & 34.16  \\
        Deep ensemble+\sam & 66.39 & 58.47 & 51.89 & 44.90 & 34.81 \\
        \rowcolor{gray!15}
        \ourmethod & 67.23 \textcolor{blue}{(+0.84)} & 59.53 \textcolor{blue}{(+1.06)} & 53.14 \textcolor{blue}{(+1.25)} & 46.19 \textcolor{blue}{(+1.29)} & 36.20 \textcolor{blue}{(+1.39)} \\
        \bottomrule
    \end{tabular}
    }
    \label{tb:severity2}
\end{table*}

\begin{table*}[!thb]
    \centering
    \caption{Results of different severity levels on Tiny-ImageNet-C.}
    \resizebox{1.0\linewidth}{!}{
    \begin{tabular}{lccccc}
        \toprule
        Corruption Severity & 1 & 2 & 3 & 4 & 5 \\
        \midrule
        Deep ensemble & 43.62 & 36.65 & 28.96 & 22.08 & 16.86 \\
        Deep ensemble+\sam & 45.20 & 38.04 & 30.19 & 22.98 & 17.71 \\
        \rowcolor{gray!15}
        \ourmethod & 46.48 \textcolor{blue}{(+1.28)} & 39.53 \textcolor{blue}{(+1.49)} & 31.70 \textcolor{blue}{(+1.51)} & 24.27 \textcolor{blue}{(+1.29)} & 18.69 \textcolor{blue}{(+0.98)} \\
        \bottomrule
    \end{tabular}
    }
    \label{tb:severity3}
\end{table*}

\subsection{Evaluation on WRN-40-2}\label{abl:WRN-40-2}
The results are presented in Figure~\ref{fig:compare_results_wrn} for CIFAR-10 and CIFAR-100.
These results confirm that \ourmethod consistently boosts both ID and OOD performance across the models and datasets studied.

\begin{figure}[!h]
    \centering
    \begin{subfigure}{0.8\textwidth}
    \includegraphics[width=\textwidth]{figs/results/abl_metric_legend.png}
    \end{subfigure} \\
    \centering
    \begin{subfigure}{0.45\textwidth}
    \includegraphics[width=\textwidth]{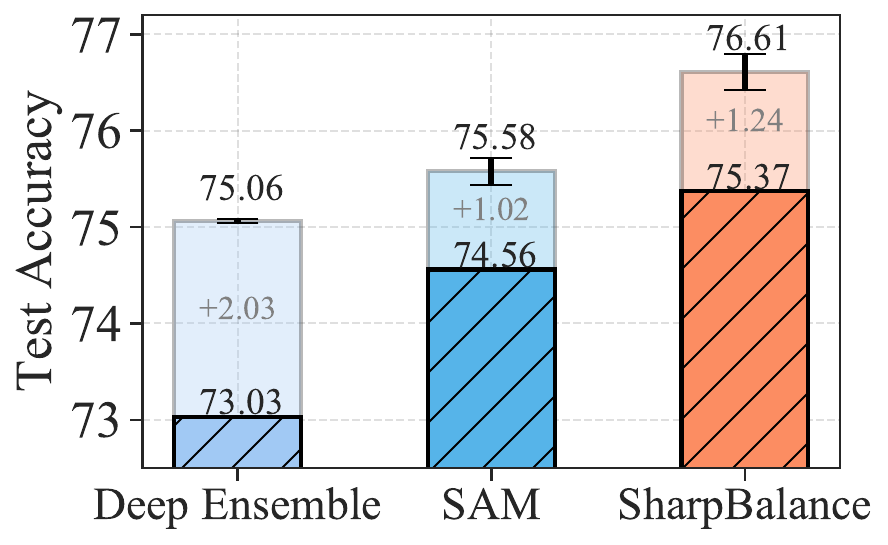}
    \caption{CIFAR10-C}
    \end{subfigure}
    \begin{subfigure}{0.45\textwidth}
    \includegraphics[width=\textwidth]{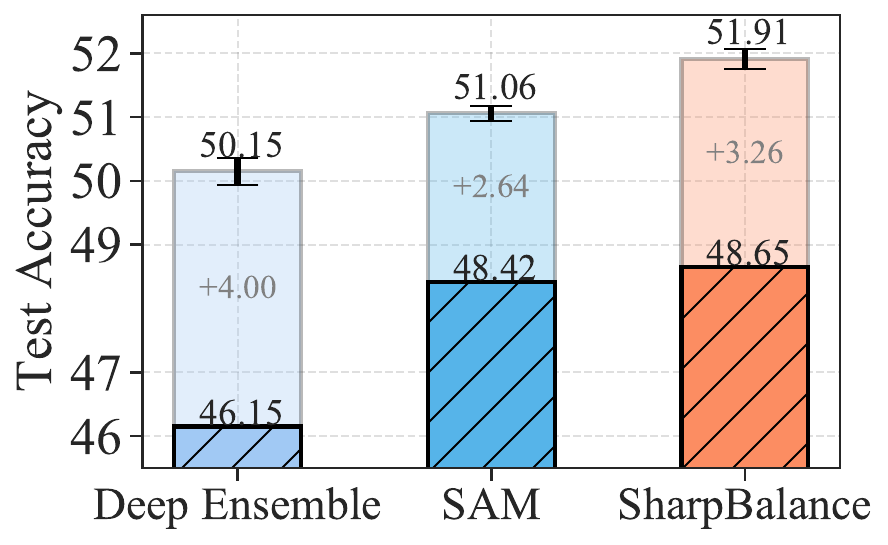}
    \caption{CIFAR100-C}
    \end{subfigure}
    \begin{subfigure}{0.45\textwidth}
    \includegraphics[width=\textwidth]{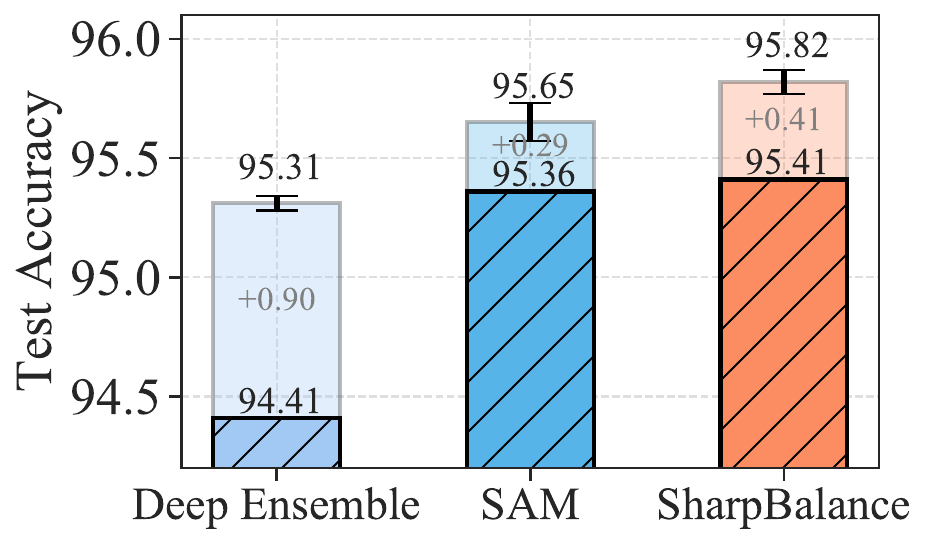}
    \caption{CIFAR10}
    \end{subfigure}
    \begin{subfigure}{0.45\textwidth}
    \includegraphics[width=\textwidth]{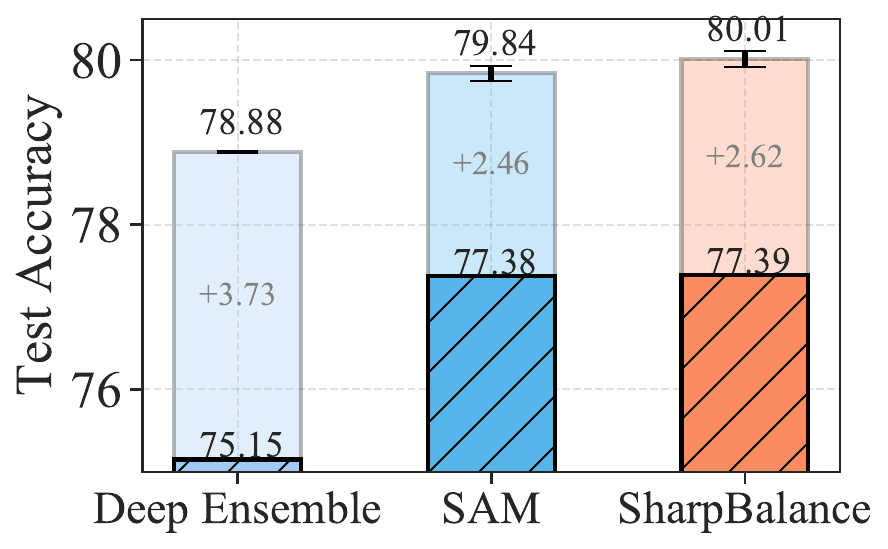}
    \caption{CIFAR100}
    \end{subfigure}
    \caption{The three-member WRN-40-2 ensemble is trained with different methods on two datasets. The first row reports the OOD accuracy and the second row reports the ID accuracy. 
    The lower part of each bar with the diagonal lines represents the individual model performance. 
    The upper part of each bar represents the ensembling improvement.
    The results are reported by averaging three ensembles, and each ensemble is comprised of three models.
 \looseness-1} \vspace{-2mm}
    \label{fig:compare_results_wrn} 
\end{figure}

\subsection{Sharpness-aware set: Hard vs easy examples}
\ourmethod aims to achieve the optimal balance by applying \sam to a carefully selected subset of the data, while performing standard optimization on the remaining samples. In our work, sharpness is determined by the curvature of the loss around the model’s weights, whereas ~\citep{garg2023samples} determines it based on the curvature of the loss around a data point. In Figure \ref{fig:rank_corr}, we rank 1000 samples using both metrics and found a strong correlation between these two.

\begin{figure}[!ht]
\centering
\includegraphics[width=0.5\linewidth,keepaspectratio]{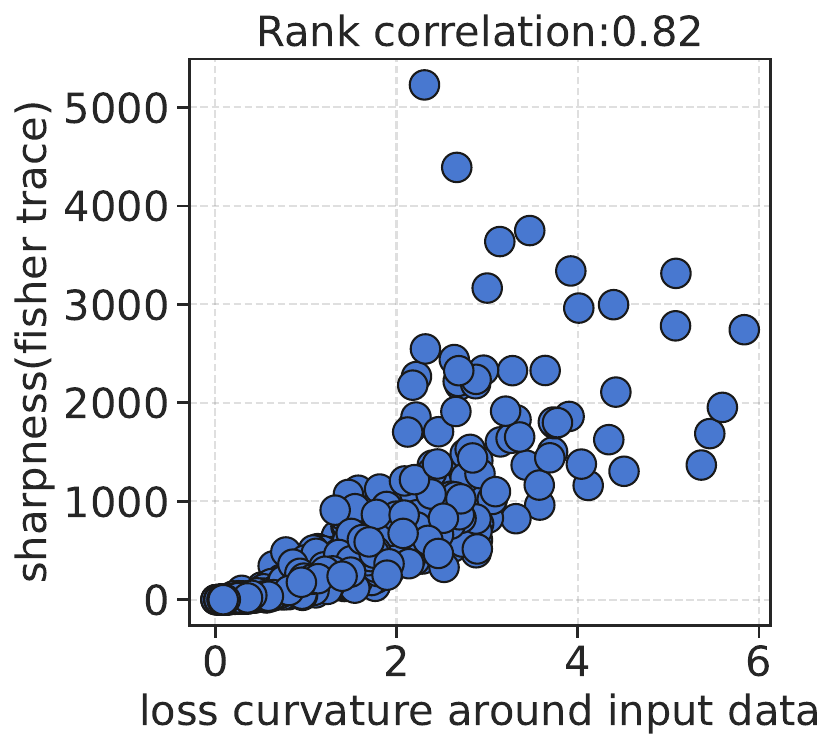}
\caption{Rank correlation between fisher trace and loss curvature around input data} 
\label{fig:rank_corr}
\end{figure}

\subsection{Comparison with more baselines}
We compare \ourmethod with EoA~\citep{arpit2022ensemble} for in-distribution and OOD generalization. We carefully tuned the hyperparameters for EoA~\citep{arpit2022ensemble}. Original EoA~\citep{arpit2022ensemble} fine-tuned a pretrained model; and in our paper, all models are trained from scratch. EoA~\citep{arpit2022ensemble} performs less well on models trained from scratch. Besides, we compare \ourmethod with another \sam baseline: \sam$+$, where three individual models are trained with different $\rho$ values, e.g., 0.05, 0.1, and 0.2, respectively.
From Table~\ref{tb:baselines}, \ourmethod outperforms other baselines botn in-distribution and OOD generalization.

\begin{table*}[!thb]
    \centering
    \resizebox{0.6\linewidth}{!}{
    \begin{tabular}{llcc}
        \toprule
        Dataset & Method & ACC & cACC \\
        \midrule
        & \sam$+$ & 96.03 & 76.29\\
        CIFAR10 & EoA & 95.55 & 75.57 \\
        \rowcolor{gray!15}
        & \ourmethod & 96.18 \textcolor{blue}{(+0.15)} & 77.32 \textcolor{blue}{(+1.03)} \\
        \midrule
        & \sam$+$ & 79.67 & 51.28 \\
        CIFAR100 & EoA & 79.53 & 51.45 \\
        \rowcolor{gray!15}
        & \ourmethod & 79.84 \textcolor{blue}{(+0.17)} & 52.46 \textcolor{blue}{(+1.01)}    \\
        \bottomrule
    \end{tabular}
    }
    \caption{\ourmethod outperforms EOA and \sam$+$ both in-distribution and OOD generalization on CIFAR10 and CIFAR100.}
    \label{tb:baselines}
\end{table*}
\section{Experiments Compute Resources}~\label{sec:computation}

%\noindent 
All codes are implemented in PyTorch, and the experiments are conducted on 3 Nvidia Quadro RTX 6000 GPUs for training an ensemble of 3 models. Compared to \sam, our method adds a minimal computational cost. The extra time comes from using Fisher trace to compute the per-sample sharpness. Therefore, computing the per-sample sharpness requires one single forward pass and one backward pass. We report the additional training cost in Table~\ref{tb:computation}. \ourmethod only increases the training time by 1\%: $0.83 \ (84.48 - 0.83) \times 100\% \approx 1\% $.

\begin{table*}[!thb]
    \centering
    \resizebox{0.5\linewidth}{!}{
    \begin{tabular}{cc}
        \toprule
        Additional training cost & Total training cost \\
        \midrule
        0.83 min & 84.48 min \\
        \bottomrule
    \end{tabular}
    }
    \caption{Additional training cost introduced by \ourmethod. We train a ResNet18 on CIFAR10 for 200~epochs.}
    \label{tb:computation}
\end{table*}

\end{document}